\definecolor{darkpastelgreen}{rgb}{0.01, 0.75, 0.24}
\newcommand{\eg}{\textit{e}.\textit{g}. }
\newcommand{\highlight}[1]{{\cellcolor[gray]{0.8}#1}}
\definecolor{darkgreen}{RGB}{30,160,30}
\title{Cost-Sensitive Self-Training for Optimizing Non-Decomposable Metrics}
\author[1]{Harsh Rangwani\thanks{Equal Contribution. Code:  \href{https://github.com/val-iisc/CostSensitiveSelfTraining}{https://github.com/val-iisc/CostSensitiveSelfTraining}}\;}
\newcommand\CoAuthorMark{\footnotemark[\arabic{footnote}]}
\author[1]{Shrinivas Ramasubramanian\protect\CoAuthorMark\;}
\author[2]{Sho Takemori\protect\CoAuthorMark\;}
\author[2]{Takashi Kato} 
\author[2]{\authorcr Yuhei Umeda}
\author[1]{R. Venkatesh Babu}
\affil[1]{Video Analytics Lab, Indian Institute of Science, Bengaluru, India}
\affil[2]{Fujitsu Limited, Kanagawa, Japan}
\affil[ ]{\texttt{harshr@iisc.ac.in, shrinivas.ramasubramanian@gmail.com, takemori.sho@fujitsu.com, kato.takashi\_01@fujitsu.com, umeda.yuhei@fujitsu.com, venky@iisc.ac.in}}
\begin{document}

\doparttoc %
\faketableofcontents %

\part{} %

\maketitle

\begin{abstract}
Self-training based semi-supervised learning algorithms have enabled the learning of highly accurate deep neural networks, using only a fraction of labeled data. However, the majority of work on self-training has focused on the objective of improving accuracy whereas practical machine learning systems can have complex goals (e.g. maximizing the minimum of recall across classes etc.) that are non-decomposable in nature. In this work, we introduce the Cost-Sensitive Self-Training (\ttt{CSST}) framework which generalizes the self-training-based methods for optimizing non-decomposable metrics. We prove that our framework can better optimize the desired non-decomposable metric utilizing unlabeled data, under similar data distribution assumptions made for the analysis of self-training.  Using the proposed \ttt{CSST} framework we obtain practical self-training methods (for both vision and NLP tasks) for optimizing different non-decomposable metrics using deep neural networks.  Our results demonstrate that \ttt{CSST} achieves an improvement over the state-of-the-art in majority of the cases across datasets and objectives.
\end{abstract}

\section{Introduction}
In recent years, semi-supervised learning algorithms are increasingly being used for training deep neural networks~\cite{chapelle2009semi,kingma2014semi,sohn2020fixmatch,xie2020self}. These algorithms lead to accurate models by leveraging the unlabeled data in addition to the limited labeled data present. For example, it’s possible to obtain a model with minimal accuracy degradation ($\leq 1\%$) using 5\% of labeled data with semi-supervised algorithms compared to supervised models trained using 100\% labeled data~\cite{sohn2020fixmatch}. Hence, the development of these algorithms has resulted in a vast reduction in the requirement for expensive labeled data.

Self-training is one of the major paradigms for semi-supervised learning. It involves obtaining targets (\eg pseudo-labels) from a network from the unlabeled data, and using them to train the network further. The modern self-training methods also utilize additional regularizers that enforce prediction consistency across input transformations (e.g., adversarial perturbations~\cite{miyato2018virtual}, augmentations~\cite{xie2020unsupervised,sohn2020fixmatch}, etc.) , enabling them to achieve high performance using only a tiny fraction of labeled data. Currently, the enhanced variants of self-training with consistency regularization~\cite{zhang2021flexmatch,pham2021meta} are among the state-of-the-art (SOTA) methods for semi-supervised learning. 

Despite the popularity of self-training methods, most of the works~\cite{xie2020unsupervised, berthelot2019mixmatch, sohn2020fixmatch} have focused on the objective of improving prediction accuracy. However, there are nuanced objectives in real-world based on the application requirements. Examples include minimizing the worst-case recall~\cite{mohri2019agnostic} used for federated learning, classifier coverage for minority classes for ensuring fairness~\cite{goh2016satisfying}, etc. These objectives are complex and cannot be expressed just by using a loss function on the prediction of a single input (i.e., non-decomposable). There has been a considerable effort in optimizing non-decomposable objectives for different supervised machine learning models~\cite{narasimhan2021training,sanyal2018optimizing}. 
\begin{figure*}

\begin{minipage}[c]{0.55\linewidth}
    \includegraphics[width=\textwidth]{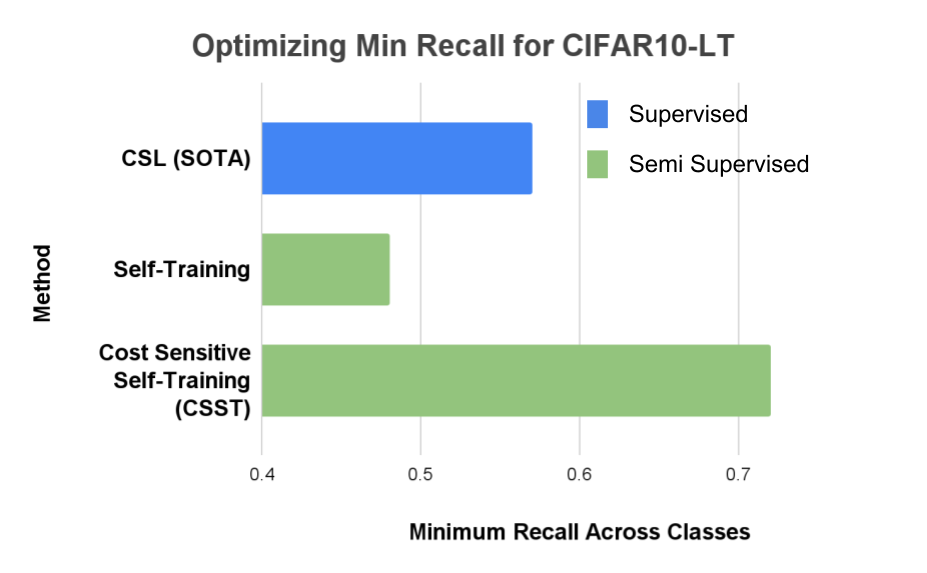}
\end{minipage}\hfill
\begin{minipage}[c]{0.45\linewidth}
  \caption{We show a comparison of the SOTA CSL~\cite{narasimhan2021training} method with the Self-training-based Semi-Supervised methods, for optimizing the minimum recall objective on the CIFAR10-LT dataset. Our proposed \ttt{CSST} framework produces significant gains in the desired metric leveraging additional unlabeled data through our proposed weighted novel consistency regularizer and thresholding mechanism.}
  \label{fig:overview}
\end{minipage}
\vspace{-5mm}
\end{figure*}
However, as supervision can be expensive, in this work we aim to answer the question:
\emph{Can we optimize non-decomposable objectives using self-training-based methods developed for semi-supervised learning?}  

 We first demonstrate that vanilla self-training methods (e.g., FixMatch~\cite{sohn2020fixmatch}, UDA~\cite{xie2020unsupervised}, etc.) can produce unsatisfactory results for non-decomposable metrics (Fig. \ref{fig:overview}). We then generalize the Cost-Sensitive Loss for Self-Training by introducing a novel weighted consistency regularizer, for a particular non-decomposable metric. Further, for training neural networks we introduce appropriate loss functions and pseudo label selection (thresholding) mechanisms considering the non-decomposable metric we aim to optimize. We also prove that we can achieve better performance on desired non-decomposable metric through our framework utilizing self-training, under similar assumptions on data distributions as made for theoretical analysis of self-training~\cite{wei2020theoretical}. We demonstrate the practical application by optimizing various non-decomposable metrics by plugging existing methods (\eg FixMatch~\cite{sohn2020fixmatch} etc.) into our framework. Our framework leads to a significant average improvement in desired metric of minimizing worst-case recall while maintaining similar accuracy (Fig. \ref{fig:overview}).

 In summary: \textbf{a)} we introduce a Cost-Sensitive Self-Training (\texttt{CSST})  framework for optimizing non-decomposable metrics that utilizes unlabeled data in addition to labeled data. (Sec.~\ref{sec:proposed-method})
 \textbf{b)} we provably demonstrate that our \texttt{CSST} framework can leverage unlabeled data to achieve better performance over baseline on desired non-decomposable metric (Sec.~\ref{sec:theoretical_results}) \textbf{c)} we show that by combining \ttt{CSST} with self-training frameworks (\eg FixMatch~\cite{sohn2020fixmatch}, UDA~\cite{xie2020unsupervised} etc.)  leads to effective optimization of non-decomposable metrics, resulting in significant improvement over vanilla baselines. (Sec.~\ref{sec:expt})

\section{Preliminaries}
\label{sec:preliminaries}
\subsection{Non-Decomposable Objectives and Reduction to Cost-Sensitive Learning}
\label{sec:ndo-loss-fun}
\begin{wraptable}[13]{r}{6.0cm}
\vspace{-6.8mm}
\caption{\addedtext{Metrics defined using entries of a confusion matrix.}}\label{wrap-tab:metrics}
\vspace{-5mm}
\addedtext{\begin{tabular}{cc}\\\midrule  
Metric & Definition  \\\midrule
\midrule
 Recall ($\mathrm{rec}_i[F]$) & $\frac{C_{i,i}[F]}{ \sum_j{C_{i,j}[F]} } $ \\  \midrule
Coverage  ($\mathrm{cov}_i[F]$) & $\sum_j{C_{j,i}[F]} $ \\   \midrule
Precision  ($\mathrm{prec}_i[F]$) & $\frac{C_{i,i}[F]}{ \sum_k{C_{k,i}[F]} } $ \\  \midrule
Worst Case Recall &  $ \min_{i} \frac{C_{i,i}[F]}{ \sum_j{C_{i,j}[F]} } $ \\  \midrule
Accuracy  & $\sum_{i}{C_{i,i}[F]}$ \\ \bottomrule 
\end{tabular}}
\vspace{1mm}
\end{wraptable} 

We consider the $K$-class classification problem
with an instance space $\mX$ and the set of labels $\mY = [K]$.
The data distribution on $\mX \times [K]$ is denoted by $D$.
For $i \in [K]$, we denote by $\pi_i$ the class prior $\prob(y = i)$. 
Notations commonly used across paper are in  Table \ref{tab:notations} present in Appendix.
For a classifier $F : \mX\rightarrow [K]$, 
we define confusion matrix $\mathbf{C}[F] \in \RR^{K \times K}$ by 
$C_{ij}[F] = \ex[(x, y) \sim D]{\indc(y=i, F(x)=j)}$.
Many metrics relevant to classification can be defined as functions of entries of confusion matrices such as class-coverage, recall and accuracy to name a few. We introduce more complex metrics,  which are of practical importance in the case of imbalanced distributions~\cite{cotter2019optimization} (Tab. \ref{wrap-tab:metrics}).

A classifier often tends to suffer low recalls on tail (\addedtext{minority}) classes in such cases. Therefore, one may want to maximize the worst case recall, \begin{equation*}
    \label{eq:min-recall-obj}
    \modifiedtext{\max_F \min_{i \in [K]} \text{rec}_i[F]}.
\end{equation*} 

Similarly, on long-tailed datasets, the tail classes suffer from low coverage, lower than their respective priors. An interesting objective in such circumstances is to maximise the mean recall, subject to the coverage being within a given margin.
\begin{equation}
   \label{eq:coverage-constraint-obj}
  \max_{F} \frac{1}{K}
  \sum_{i\in [K]}\text{rec}_i[F] \quad \text{s.t. } \cov_j[F] \ge \frac{0.95}{K}, \forall j \in [K].
\end{equation} 
Many of these metrics are \textbf{non-decomposable},
i.e., one cannot compute these metrics 
by simply calculating the average of scores on individual examples.
Optimizing for these metrics can be regarded as instances of cost-sensitive learning (CSL). More specifically, \emph{optimization problems of the form which can be written as a linear combination of $G_{i,j}$ and $C_{ij}[F]$ will be our focus in this work} where $\bG$ is a $K \times K$ matrix.
\begin{equation}
   \label{eq:csl-obj}
   \max_F \sum_{i, j \in [K]}G_{ij} C_{ij}[F],
\end{equation}
The entry $G_{ij}$ represents the reward associated with predicting class $j$ when the true class is $i$.
The matrix $\bG$ is called a gain matrix \cite{narasimhan2021training}.
Some more complex non-decomposable objectives for classification can be reduced to CSL \cite{narasimhan2015consistent,tavker2020consistent,narasimhan2021training}.
For instance, 
the aforementioned two complex objectives
can be reduced to CSL using continuous relaxation or a Lagrange multiplier as bellow.
  Let $\Delta_{K- 1} \subset \RR^{K}$ be the $K-1$-dimensional probability simplex.
Then, maximizing the minimum recall is equivalent to the 
saddle-point optimization problem:
\begin{equation}
   \max_F \min_{\blambda \in \Delta_{K-1}} \sum_{i \in [K]}\lambda_i \frac{C_{ii}[F]}{\pi_i}.
\end{equation}
Thus, for a fixed $\blambda$, 
the corresponding gain matrix is given as a diagonal matrix $\diag(G_1, \dots, G_K)$
with $G_i = \lambda_i / \pi_i$ for $1 \le i \le K$.
Similarly, using Lagrange multipliers $\blambda \in \RR_{\ge 0}^K$, 
Eq. \eqref{eq:coverage-constraint-obj} is rewritten as a max-min optimization problem \citep[Sec. 2]{narasimhan2021training}:
\begin{equation}
 \label{eq: cov-const-obj}
  \addedtext{ \max_F \min_{\blambda \in \RR^K_{\ge 0}} 
   \frac{1}{K}
   \sum_{i\in [K]} C_{ii}[F]/\pi_i +\sum_{j \in [K]}  \lambda_j \left(
      \sum_{i \in [K]} C_{ij}[F] - 0.95/K
   \right).}
\end{equation}
In this case,
the corresponding gain matrix $\mathbf{G}$ is given as $G_{ij} = \frac{\delta_{ij}}{K\pi_i} + \lambda_j$,
where $\delta_{ij}$ is the Kronecker's delta.
One can solve these max-min problems by alternatingly updating $\blambda$ 
(using exponented gradient or projected gradient descent) and 
optimizing the cost-sensitive objectives~\cite{narasimhan2021training}.

\subsection{Loss Functions for Non-Decomposable Objectives}
\label{sec:loss-fn-for-ndo}
\label{loss func: NDO}
The cross entropy loss function is appropriate for optimizing accuracy for deep neural networks, 
however, learning with CE can suffer low performance for cost-sensitive objectives \cite{narasimhan2021training}.
Following \cite{narasimhan2021training}, we introduce calibrated loss functions for given gain matrix $\mathbf{G}$.
We let 
$p_m: \mX \rightarrow \Delta_{K-1} \subset \RR^{K}$
be a prediction function of a model, where $\Delta_{K-1}$
is the $K-1$-dimensional probability simplex.
For a gain matrix $\mathbf{G}$, 
the corresponding loss function is 
given as a combination of logit adjustment \cite{menon2020long} and loss re-weighting \cite{patrini2017making}.
We decompose the gain matrix $\bG$ as $\bG = \bM \bD$, where 
$\bD = \diag(G_{11}, \dots, G_{KK})$ be a diagonal matrix, with $D_{ii} > 0, \forall i \in [K]$
and $\bM \in \RR^{K \times K}$.
For $y \in [K]$ and model prediction $p_m(x)$, the hybrid loss is defined as follows:
\begin{equation}
   \hybloss(y, p_m(x)) = -\sum_{i \in [K]} M_{yi} \log\left(
      \frac{\left(p_m(x)\right)_i/D_{ii}}{\sum_{j \in [K]} \left(p_m(x)\right)_j/D_{jj}}
   \right).   \label{loss:hyb}
\end{equation}
To make the dependence of $\bG$ explicit, we also denote $\hybloss(y, p_m(x))$ as $ \hybloss(y, p_m(x); \bG)$.
The average loss on training sample $S \subset \mX$ is defined as 
$\mathcal{L}^{\mathrm{hyb}}(\mX) = \frac{1}{| S |}\sum_{x \in S}{\ell^{\mathrm{hyb}}(u, p_m(x))}.$
\citet{narasimhan2021training} proved that the hybrid loss is calibrated,
that is learning with $\hybloss$ gives the Bayes optimal classifier for $\bG$ (c.f., \cite[Proposition 4]{narasimhan2021training}, 
\addedtext{of which we provide a formal statement in Sec. \ref{sec:appendix-formal-statement}}).
If $\bG$ is a diagonal matrix (i.e., $\bM = \bone_K$), 
the hybrid loss is called the logit adjusted (LA) loss
and $\hybloss(y, p_m(x))$ is denoted by $\laloss(y, p_m(x))$.

\subsection{Consistency Regularizer for Semi-Supervised Learning}
Modern self-training methods not only leverage pseudo labels, but also 
forces consistent predictions of a classifier on augmented examples or neighbor examples
\cite{wei2020theoretical,miyato2018virtual,xie2020unsupervised,sohn2020fixmatch}.
More formally, a classifier $F$ is trained so that the consistent regularizer $R(F)$ is small 
while a supervised loss or a loss between pseudo labeler are minimized \cite{wei2020theoretical,sohn2020fixmatch}.
Here the consistency regularizer $R(F)$ is defined as
\begin{equation*}
   \ex[x]{\indc(F(x) \neq F(x'), 
   \exists x' \text{ s.t. } x'\text{ is a neighbor of an  augmentation of } x)}.
\end{equation*}
In existing works, consistency regularizers are considered for optimization of accuracy.
In the subsequent sections,
we consider consistency regularizers for cost-sensitive objectives.
\section{Cost-Sensitive Self-Training \ for Non-Decomposable Metrics}
\label{sec:theoretical_results}
\subsection{CSL and Weighted Error}
\label{sec:csl-weighted-err}
In the case of accuracy or $\zo$-error,
a self-training based SSL algorithm using a consistency regularizer achieves the state-of-the-art performance
across a variety of datasets
\cite{sohn2020fixmatch} and 
its effectiveness has been proved theoretically \cite{wei2020theoretical}.
This section provides theoretical analysis of a self-training based SSL algorithm for non-decomposable objectives
by generalizing \cite{wei2020theoretical}.
More precisely, 
the main result of this section (Theorem \ref{thm:main-err-bd}) states that
an SSL method using consistency regularizer improves a given pseudo labeler for non-decomposable objectives.
We provide all the omitted proofs in Appendix for theoretical results in the paper.

In Sec. \ref{sec:preliminaries}, we considered non-decomposable metrics and their reduction to
cost-sensitive learning objectives defined by Eq. \eqref{eq:csl-obj} using a gain matrix.
In this section, we consider an equivalent objective using the notion of weighted error.
For weight matrix $w = (w_{ij})_{1 \le i, j\le K}$ 
and a classifier $F : \mX \rightarrow [K]$,
a weighted error is defined as follows:
\begin{align*}
\err_w(F) = \sum_{i, j \in [K]}w_{ij} \ex[x \sim P_i]{\indc (F(x) \ne j)},
\end{align*}
where, $P_i(x)$ denotes the class conditional distribution $\prob(x \mid y = i)$.
If $w = \diag(1/K, \dots, 1/K)$, then this coincides with the usual balanced error \cite{menon2020long}.
Since
\begin{math}
    C_{ij}[F] = \ex[(x, y) \sim D]{\indc(y = i, F(x) = j)}
    = \prob(y = i) - \prob(y = i)\ex[x \sim P_i]{\indc(F(x) \ne j)},
\end{math} we can write:
\begin{equation*}
\scriptsize
    G_{ij} C_{ij}[F] = G_{ij} (\prob(y = i) - \prob(y = i)\ex[x \sim P_i]{\indc(F(x) \ne j)}) = G_{ij} (\pi_{i} - \pi_{i} \ex[x \sim P_i]{\indc (F(x) \ne j)})
\end{equation*}
Here $\pi_i$ is the class prior $\prob(y = i)$ for $1\le i \le K$ as before. Hence  CSL objective (\ref{eq:csl-obj}) i.e. $\max_{F} \sum_{i,j} G_{ij} C_{ij}[F]$ is equivalent to minimizing the negative term above i.e.
$ G_{ij} \pi_{i} \ex[x \sim P_i]{\indc (F(x) \ne j)}$ which is same as $\err_{w}(F)$ with 
$w_{ij} = G_{ij} \pi_i$ for $1 \le i, j \le K$. Hence, the \emph{notion of weighted error is equivalent to CSL}, which we will also use later for deriving loss functions. We further note that if we add a matrix with the same columns ($c\indc \geq 0$)  to the gain matrix $\bG$, still
the maximizers of CSL \eqref{eq:csl-obj} are the same as the original problem.
Hence, without loss of generality, we assume $w_{ij}\ge 0$.
We assume $w \ne \bm{0}$, i.e., $|w|_1 > 0$ for avoiding degenerate solutions.

In the previous work \cite{wei2020theoretical}, it is assumed that 
there exists a ground truth classifier $F^\star: \mX \rightarrow [K]$
and the supports of distributions $\{P_i\}_{1 \le i \le K}$ are disjoint. 
However, if supports \addedtext{of distributions $\{P_i\}_{1 \le i \le K}$} are disjoint, a solution of the minimization problem $\min_F \err_w(F)$
is independent of $w$ in some cases.
More precisely, if $w = \diag(w_1, \dots, w_K)$ i.e. a diagonal matrix and $w_i > 0, \forall i$, 
then the optimal classifier is given as $x \mapsto \argmax_{k\in[K]} w_k P_k(x)$ 
(this follows from \cite[Proposition 1]{narasimhan2021training}).
If supports are disjoint, 
then the optimal classifier is the same as $x \mapsto \argmax_{k \in [K]} P_k(x)$, which coincides with the 
ground truth classifier.
Therefore, we do not assume the supports of $P_i$ are disjoint nor a ground truth classifier exists unlike \cite{wei2020theoretical}.
See Fig. \ref{fig:disjoint_non_disjoint_supp} for an intuitive explanation.
\begin{figure}[htb]
    \centering 
    \includegraphics[width=\linewidth]{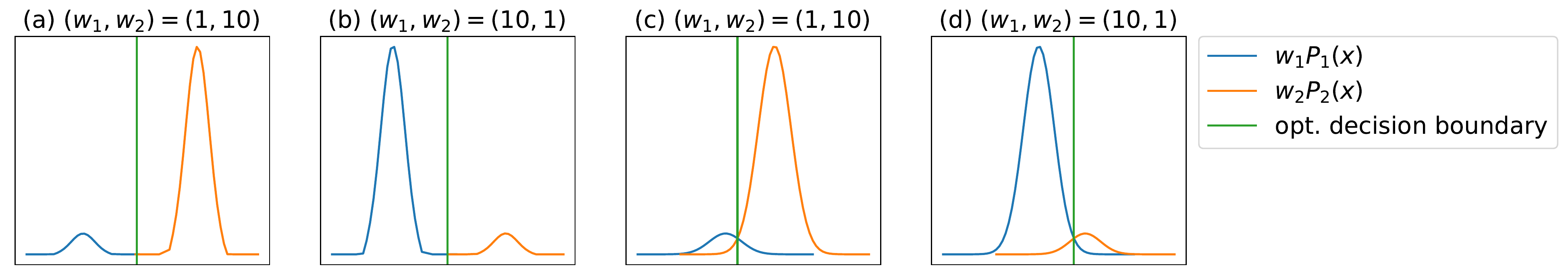}
    \caption{
    Using a simple example, we explain a difference in theoretical assumptions compared to 
    \cite{wei2020theoretical} that assumes $\{P_i\}_{1\le i \le K}$ have disjoint supports.
    Here, we consider the case when $K=2$, $w = \diag(w_1, w_2)$, and 
    $P_1$ and $P_2$ are distributions on $\mX \subset \RR$.
            (a), (b): In a perfect setting where two distributions $P_1$ and $P_2$ have disjoint supports, 
the Bayes optimal classifier for the CSL is identical to the ground truth classifier 
($x \mapsto \argmax_i P_i(x)$) for any choices of weights $(w_1, w_2)$.
(c), (d): 
In a more generalized setting, 
the Bayes optimal classifier $x \mapsto \argmax_{i}w_i P_i(x)$ depends on the choice of weights (i.e., gain matrix). 
The optimal decision boundary (the green line) for the CSL moves to the right
as $w_1/w_2$ increases.
    }
    \label{fig:disjoint_non_disjoint_supp}
\end{figure}

\subsection{Weighted Consistency Regularizer}
\label{subsec:weighted-consistency-def}
For improving accuracy, the consistency in prediction is equally important across the distributions $P_i$
for $1 \le i \le K$ \cite{sohn2020fixmatch,wei2020theoretical}.
However, for our case of weighted error, 
if the entries of the $i_0^{th}$ row of the weight matrix $w$ are larger
than the other entries for some $i_0$,
then the consistency of model predictions on examples drawn from the distribution $P_{i_0}$ are more important 
than those on the other examples. 
In this case, we require more restrictive consistency regularizer 
for distribution for $P_ {i_0}$.
Thus, we require a weighted (or cost-sensitive) consistency regularizer, which we define below.

We assume that the instance space $\mX$ is a normed vector space with norm $|\cdot|$
and $\mT$ is a set of augmentations, i.e., each $T \in \mT$ is a map from $\mX$ to itself.
For a fixed $r > 0$, we define $\mB(x)$ by 
$\{x' \in \mX : \exists T \in \mT \text{ s.t. } | x' - T(x)| \le r\}$.
For each $i \in [K]$, we define conditional consistency regularizer by 
\begin{math}
    R_{\mB, i}(F) = \ex[x\sim P_i]{\indc\left(\exists x' \in \mB(x) \text{ s.t. } F(x) \ne F(x') \right)}.
\end{math}
Then, we define the weighted consistency regularizer by  %
\begin{math}
    R_{\mB, w}(F) = \sum_{i, j \in [K]}w_{ij} R_{\mB, i}(F).
\end{math}
\addedtext{If the prediction of classifier $F$ is robust to augmentation $T \in \mT$ and small noise, then $R_{\mB, w}(F)$ is small.}
For $\regub > 0$, we consider the following optimization objective for finding a classifier $F$:
\begin{equation}
    \label{eq:cons-cnst-obj}
    \min_{F}\err_w(F) \quad \text{subject to } R_{\mB, w}(F) \le \regub.
\end{equation}
A solution of the problem \eqref{eq:cons-cnst-obj} is denoted by $\gstar$.
\addedtext{We let $\gpl: \mX \rightarrow [K]$ a pseudo labeler (a classifier) with reasonable performance (we elaborate on this in Section \ref{subsec:weighted_consistency}).}
The following mathematically informal assumption below is required to interpret our main theorem.
\begin{assumption}
    \label{assump:mu-small}
    We assume both $\regub$ and $\err_w(\gstar)$ are sufficiently small so that they are 
    negligible compared to $\err_w(\gpl)$.
\end{assumption}
\addedtext{Assumption 1 assumes existence of an optimal classifier $F^*$ that  minimizes the error $\err_w(F)$ (i.e. Bayes Optimal) among the class of classifiers which are robust to data augmentation (i.e. low weighted consistency $R_{\mB, w}(F)$). As we operate in case of overparameterized neural networks such a classifier $F^*$ is bound to exist, but is unknown in our problem setup.}
In the case of the balanced error, 
the validity of this assumption is justified by the fact that 
the existing work \cite{sohn2020fixmatch} using consistency regularizer on data augmentation obtains classifier $F$ , that achieves 
high accuracy (i.e., low balanced errors) for balanced datasets.
Also in Appendix \ref{sec:appendix-examples-for-theoretical}, we provide an example that supports the validity of the assumption in the case of 
Gaussian mixtures and diagonal weight matrices.

\subsection{Expansion Property}
For $x \in \mX$, we define the neighborhood $\mN(x)$ of $x$ by $\{x' \in \mX: \mB(x) \cap \mB(x') \ne \emptyset\}$.
For a subset $S \subseteq \mX$, neighborhood of $S$ is defined as $\mN(S) = \cup_{x \in S} \mN(x)$.
Similarly to \cite{wei2020theoretical}, we consider the following property on distributions.
\begin{definition}
    \label{def:c-exp-def}
    \addedtext{Let $c: (0, 1] \rightarrow [1, \infty)$ be a non-increasing function.}
    For a distribution $Q$ on $\mX$ %
    we say $Q$ has $c$-expansion property if
    $Q(\mN(S)) \ge c(Q(S)) Q(S)$ for any measurable $S \subseteq \mX$.
\end{definition}

The $c$-expansion property implies that if $Q(S)$ decreases, then the 
``expansion factor'' ${Q(\mN(S))}/{Q(S)}$ increases. 
This is a natural condition, because it roughly requires that 
if $Q(S)$ is small, then $Q(\mN(S))$ is large compared to $Q(S)$. \addedtext{For intuition 
let us consider a ball of radius $l$ depicting $S \subset \RR^d$ with volume $Q(S)$ 
and its neighborhood $\mN(S)$ expands to a ball with radius $l+1$. 
The expansion factor here would be $((l + 1)/l)^d$, hence as $l$  (i.e. $Q(S)$) 
increases $(1 + 1/l)^d$ (i.e. ${Q(\mN(S))}/{Q(S)}$) decreases. 
Hence, it's natural to expect $c$ to be a non-decreasing function.}
\modifiedtext{The $c$-expansion property (on each $P_i$) considered here is equivalent to the $(a, \widetilde{c})$-expansion property,
which is shown to be realistic for vision and used for 
theoretical analysis of self-training in \cite{wei2020theoretical}, where $a \in (0, 1)$ and $\widetilde{c} > 1$
(see Sec. \ref{sec:appendix-a-c-expansion} in Appendix).} 
In addition, we also show that it is also satisfied for mixtures of Gaussians and 
mixtures of manifolds (see Example \ref{exa:c-exp-mix-gauss} in Appendix for more details). 
Thus, the $c$-expansion property is a general property satisfied for a wide class of distributions.
\begin{assumption}
    \label{assump:c-exp}
    For weighted probability measure $\Pw$ on $\mX$ by 
    \begin{math}
        \Pw(U) = 
        \frac{\sum_{i, j \in [K]}w_{ij}P_i(U)}{\sum_{i,j \in [K]} w_{ij}}
    \end{math}
    for $U \subseteq \mX$.
    We assume $\Pw$ satisfies $c$-expansion
    for a non-increasing function $c: (0, 1] \rightarrow [1, \infty)$.
\end{assumption}

\subsection{Cost-Sensitive Self-Training with Weighted Consistency Regularizer}
In this section we first introduce the assumptions on the pseudo labeler $\gpl$ and then introduce the theoretical Cost-Sensitive Self-Training (\texttt{CSST}) objective.
\label{subsec:weighted_consistency}
$\gpl$ can be any classifier satisfying the following assumption, however, typically it is a classifier trained on a labeled dataset.
\begin{assumption}
    \label{assump:pseudo-labeler}
    We assume that $\err_w(\gpl) + \err_w(\gstar) \le |w|_1$.
    Let $\gamma = c(p_w)$, where 
    \begin{math}
    p_w = \frac{\err_w(\gpl) + \err_w(\gstar)}{|w|_1}. 
    \end{math}
    We also assume $\gamma > 3$.
\end{assumption}
Since $c$ is non-increasing, $\gamma$ (as a function of $\err_w(\gpl)$) is a non-increasing function of $\err_w(\gpl)$ (and $\err_w(\gstar)$).
Therefore, the assumption $\gamma > 3$ roughly requires that $\err_w(\gpl)$ is ``small''.
We provide concrete conditions for $\err_w(\gpl)$ that satisfy $\gamma > 3$ 
in the case of mixture of isotropic $d$-dimenional Gaussians for a region $\mB(x)$ defined by $r$ in 
Appendix \addedtext{ (Example \ref{exa:gamma-3})}.
In the example, we show that 
the condition $\gamma > 3$ 
is satisfied if $\err_w(\gpl) < 0.17$
in the case when $r = 1/(2\sqrt{d})$
and satisfied if $\err_w(\gpl) < 0.33$ in the case when $r = 3/(2\sqrt{d})$, where $\mX \subseteq \RR^d$.
Since we assume $\err_w(\gstar)$ is negligible compared to $\err_w(\gpl)$ (Assumption \ref{assump:mu-small}),
the former condition in Assumption \ref{assump:pseudo-labeler} is approximately equivalent to 
$\err_w(\gpl) \le |w|_1$ which is satisfied by the definition of $\err_w$.

We define $L^{(i)}_{\zo}(F, F') = \ex[x \sim P_i]{\indc(F(x) \ne F'(x))}$.
Then, we consider the following objective:
\begin{equation}
    \label{eq:ssl-non-decomp-obj-theoretical}
    \min_F \mL_w(F), \quad 
    \text{where } \mL_w(F) =\frac{\gamma + 1}{\gamma - 1}L_w(F, \gpl) + \frac{2\gamma}{\gamma - 1}R_{\mB, w}(F).
\end{equation}
Here $L_w(F, \gpl)$ is defined as 
\begin{math}
    \sum_{i, j \in [K]}w_{ij} L^{(i)}_\zo(F, \gpl).
\end{math}
The above objective corresponds to cost-sensitive self-training (with $\gpl$) objective with weighted consistency regularization. We provide following theorem which relates the weighted error of classifier $\hat{F}$ learnt using the above objective to the weighted error of the pseudo labeler ($\gpl$).
\begin{theorem}
    \label{thm:main-err-bd}
    \modifiedtext{Any learnt classifier $\widehat{F}$ using the loss function $\mL_w$
    (i.e.,  $\argmin_F\mL_w(F)$)} satisfies:
    \begin{align*}
        \err_w(\widehat{F}) 
        &\le
        \frac{2}{\gamma - 1}
        \err_w(\gpl)
        + \frac{\gamma + 1}{\gamma - 1} \err_w(\gstar)
        + \frac{4\gamma}{\gamma - 1} R_{\mB, w}(\gstar).
    \end{align*}
\end{theorem}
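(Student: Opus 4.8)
The plan is to adapt the self-training analysis of \citet{wei2020theoretical} to the weighted setting, with the weighted distribution $\Pw$ of Assumption~\ref{assump:c-exp} playing the role of the data distribution, $\err_w$ replacing the $0$-$1$ error, and $R_{\mB,w}$ replacing the usual consistency regularizer. First I would record two reductions. Since $\widehat{F}$ minimizes $\mL_w$, comparing it against $\gstar$ gives $(\gamma+1)L_w(\widehat{F},\gpl)+2\gamma R_{\mB,w}(\widehat{F})\le(\gamma+1)L_w(\gstar,\gpl)+2\gamma R_{\mB,w}(\gstar)$. Next, using $w_{ij}\ge 0$ together with the pointwise bound $\indc(F(x)\ne j)\le\indc(F(x)\ne F'(x))+\indc(F'(x)\ne j)$, one checks $\err_w(\widehat{F})\le\err_w(\gstar)+L_w(\widehat{F},\gstar)$ and $L_w(\gstar,\gpl)\le\err_w(\gstar)+\err_w(\gpl)=|w|_1 p_w$; moreover each $L_w(\cdot,\cdot)$ is $|w|_1$ times a $\Pw$-probability of disagreement, and writing $\mathcal C(F):=\{x:\exists\,x'\in\mB(x),\ F(x)\ne F(x')\}$ for the consistency-violation set, $R_{\mB,w}(F)=|w|_1\Pw(\mathcal C(F))$. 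Putting these together, the theorem reduces to showing $\Pw(\widehat{F}\ne\gstar)\le\tfrac{2}{\gamma-1}p_w+\tfrac{4\gamma}{\gamma-1}\Pw(\mathcal C(\gstar))$.

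For the core argument I would set $\mathcal M=\{\widehat{F}\ne\gstar\}$ and split it as $\mathcal M=\mathcal M_1\sqcup\mathcal M_2$, where $\mathcal M_1=\mathcal M\cap\{\widehat{F}=\gpl\}$ and $\mathcal M_2=\mathcal M\cap\{\widehat{F}\ne\gpl\}$; then $\mathcal M_1\subseteq\{\gpl\ne\gstar\}$ gives $\Pw(\mathcal M_1)\le p_w$, and $\mathcal M_2\subseteq\{\widehat{F}\ne\gpl\}$ gives $\Pw(\mathcal M_2)\le L_w(\widehat{F},\gpl)/|w|_1$. Let $\mathcal M_i^{\mathrm r}=\mathcal M_i\setminus\big(\mathcal C(\widehat{F})\cup\mathcal C(\gstar)\big)$. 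The \emph{geometric heart} of the proof is the observation that a point $x\notin\mathcal C(\widehat{F})\cup\mathcal C(\gstar)$ which is an $\mN$-neighbour of some $x''\in\mathcal M_i^{\mathrm r}$ must itself satisfy $\widehat{F}(x)\ne\gstar(x)$ — because $\widehat{F}$ and $\gstar$ are locally constant around both $x$ and $x''$ and disagree at $x''$ — hence lies in $\mathcal M$; since it cannot lie in $\mathcal M_i^{\mathrm r}$, it lies in $\mathcal M_{3-i}$ or in $\mathcal C(\widehat{F})\cup\mathcal C(\gstar)$. That is, $\mN(\mathcal M_i^{\mathrm r})\setminus\mathcal M_i^{\mathrm r}\subseteq\mathcal M_{3-i}\cup\mathcal C(\widehat{F})\cup\mathcal C(\gstar)$. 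This uses only the consistency of $\widehat{F}$ and $\gstar$, never that of $\gpl$; that is precisely why the split is by $\widehat{F}=\gpl$ versus $\widehat{F}\ne\gpl$ rather than by the value of $\gpl$, since $\gpl$'s robustness is not controlled by any assumption.

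With this in hand I would apply the $c$-expansion of $\Pw$ with $\gamma=c(p_w)$. Since $\Pw(\mathcal M_i^{\mathrm r})\le\Pw(\mathcal M_i)$ is at most $p_w$ — for $i=2$ up to the $R_{\mB,w}(\gstar)$-sized slack coming from the minimality inequality, which is negligible by Assumption~\ref{assump:mu-small} — and $c$ is non-increasing, $\Pw(\mN(\mathcal M_i^{\mathrm r}))\ge\gamma\,\Pw(\mathcal M_i^{\mathrm r})$. Combined with the containment above and with $\Pw(\mathcal M_i^{\mathrm r})\ge\Pw(\mathcal M_i)-\Pw(\mathcal C(\widehat{F}))-\Pw(\mathcal C(\gstar))$, this yields a coupled pair of linear inequalities among $\Pw(\mathcal M_1),\Pw(\mathcal M_2),\Pw(\mathcal C(\widehat{F})),\Pw(\mathcal C(\gstar))$. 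Feeding in the minimality inequality (which controls $L_w(\widehat{F},\gpl)$ and $R_{\mB,w}(\widehat{F})$ by $L_w(\gstar,\gpl)\le|w|_1 p_w$ and $R_{\mB,w}(\gstar)$), solving this system — the hypothesis $\gamma>3$ being what keeps it non-degenerate — and translating back through the triangle inequalities of the first paragraph produces the claimed bound. The particular multipliers $\tfrac{\gamma+1}{\gamma-1}$ and $\tfrac{2\gamma}{\gamma-1}$ in the definition of $\mL_w$ are exactly those that make the solution of this system collapse to the coefficients $\tfrac{2}{\gamma-1},\tfrac{\gamma+1}{\gamma-1},\tfrac{4\gamma}{\gamma-1}$.

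I expect the main obstacle to be the core expansion step: getting the neighbourhood-containment claim exactly right (which classifier must be locally constant where, and the fact that $\gpl$ need not be), verifying that the expansion property is actually applicable to $\mathcal M_i^{\mathrm r}$ (the ``measure $\le p_w$'' requirement, where Assumption~\ref{assump:mu-small} enters), and then threading the constants through the linear system so that they land on the stated coefficients rather than merely on quantities of the same order.
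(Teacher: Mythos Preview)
Your reductions in the first paragraph and the geometric neighbourhood-containment claim $\mN(\mathcal M_i^{\mathrm r})\setminus\mathcal M_i^{\mathrm r}\subseteq\mathcal M_{3-i}\cup\mathcal C(\widehat F)\cup\mathcal C(\gstar)$ are both correct and match the paper's argument. The gap is exactly where you flag it: invoking expansion on $\mathcal M_2^{\mathrm r}$ with factor $\gamma=c(p_w)$ requires $\Pw(\mathcal M_2^{\mathrm r})\le p_w$, and this is \emph{not} available. From minimality you only obtain $\Pw(\{\widehat F\ne\gpl\})\le p_w+\tfrac{2\gamma}{\gamma+1}\Pw(\mathcal C(\gstar))$, and appealing to Assumption~\ref{assump:mu-small} to discard the slack is not legitimate here: the paper states that assumption explicitly as ``mathematically informal'', for interpreting the theorem, and the proof of Theorem~\ref{thm:main-err-bd} never uses it. The bound is an exact inequality with $\gamma=c(p_w)$ on the nose.

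The paper avoids this by a different decomposition. Rather than applying expansion separately to $\mathcal M_1^{\mathrm r}$ and $\mathcal M_2^{\mathrm r}$, it applies it \emph{once}, to the single set $V=\{\widehat F\ne\gstar,\ \gpl\ne\gstar\}\cap S$ (i.e.\ your $\mathcal M_1^{\mathrm r}$ together with the part of $\mathcal M_2^{\mathrm r}$ on which also $\gpl\ne\gstar$). Since $V\subseteq\{\gpl\ne\gstar\}$, its measure is automatically at most $\Pw(\{\gpl\ne\gstar\})$, so expansion with factor $c(\Pw(\{\gpl\ne\gstar\}))\ge\gamma$ is guaranteed --- no slack, no informal assumption. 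The remaining piece $\mathcal N_3=\{\widehat F\ne\gpl,\ \gpl=\gstar\}\cap S$ is then controlled \emph{not} by a second expansion but by the set identity
\[
\mathcal N_3\;\sqcup\;\bigl(\{\widehat F=\gpl\}\cap S\bigr)\;=\;\mathcal M_1^{\mathrm r}\;\sqcup\;\bigl(\{\gpl=\gstar\}\cap S\bigr),
\]
which (after taking $\Pw$) expresses $\Pw(\mathcal N_3)$ in terms of $\Pw(V)$, $\Pw(\{\widehat F\ne\gpl\})$, $\Pw(\{\gpl\ne\gstar\})$ and $\Pw(S^c)$. This inclusion--exclusion step is the ingredient your sketch is missing; it is what delivers the exact coefficients $\tfrac{2}{\gamma-1},\tfrac{\gamma+1}{\gamma-1},\tfrac{4\gamma}{\gamma-1}$ and makes the ``solving this system'' step concrete rather than a hand-wave.
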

\begin{remark}
    Since both $ \err_w(\gstar)$ and $R_{\mB, w}(\gstar) \le \regub$ are negligible compared to $\err_w(\gpl)$ 
    and $\gamma > 3$,
    Theorem~\ref{thm:main-err-bd} asserts that \addedtext{$\hat{F}$} learnt 
    by minimizing semi-supervised loss $L_w(F, \gpl)$ with the consistency regularizer $R_{\mB, w}(F)$ can 
    achieve superior performance than the pseudo labeler \addedtext{$\gpl$}
    in terms of the weighted error $\err_w$. The above theorem  is a generalization of \cite[Theorem 4.3]{wei2020theoretical}, which provided
a similar result for balanced $\zo$-error in the case of distributions with disjoint supports.
    In Appendix \addedtext{Sec. \ref{sec:appendix-all-layer}}, following \cite{wei2020theoretical,wei2019improved}, we also provide a generalization bound for $\err_w(F)$ using all-layer margin \cite{wei2019improved}
    in the case when classifiers are neural networks.
\end{remark}

\section{\texttt{CSST} in Practice}

In the previous section, we proved that by using self-training (\texttt{CSST}), we can achieve a superior classifier $\hat{F}$ in comparison to pseudo labeler $\gpl$ through weighted consistency regularization. As we have established the equivalence of the weighted error $Err_{w}$ to the CSL objective expressed in terms of $\bG$ (Sec. \ref{sec:csl-weighted-err}) , we can theoretically optimize a given non-decomposable metric expressed by $\bG$ better using \ttt{CSST}, utilizing the additional unlabeled data via self-training and weighted consistency regularization.  We now show how \ttt{CSST} can be used in practice for optimizing non-decomposable metrics in the case of neural networks.

The practical self-training methods utilizing consistency regularization (e.g., FixMatch~\cite{sohn2020fixmatch}, etc.) for semi-supervised learning have supervised loss $\mathcal{L}_s$ for labeled and consistency regularization loss for unlabeled samples (i.e., $\mathcal{L}_u$) with a thresholding mechanism to select unlabeled samples. The final loss for training the network is $\mathcal{L}_s + \lambda_{u}\mathcal{L}_u$, where $\lambda_{u}$ is the hyperparameter. The supervised loss $\mathcal{L}_s$ can be modified conveniently based on the desired non-decomposable metric by using suitable $\bG$ (Sec. \ref{sec:ndo-loss-fun}). We will now introduce the novel weighted consistency loss and its corresponding thresholding mechanism for unlabeled data in \ttt{CSST}, used for optimizing desired non-decomposable metric.   

\vspace{1mm} \noindent \textbf{Weighted Consistency Regularization.}
As the idea of consistency regularization is to enforce consistency between model prediction on different augmentations of input, this is usually achieved by minimizing some kind of divergence $\mathcal{D}$. A lot of recent works~\cite{miyato2018virtual, sohn2020fixmatch,xie2020unsupervised} in semi-supervised learning  use $\mathcal{D}_{\mathrm{KL}}$ to enforce consistency between the model's prediction on unlabeled data and its augmentations, $p_{m}(x)$ and $p_{m}(\mathcal{A}(x))$. Here $\mathcal{A}$ usually denotes a form of strong augmentation. Across these works, the distribution of confidence of the model's prediction is either sharpened or used to get a hard pseudo label to obtain $\hat{p}_{m}(x)$. As we aim to optimized the cost-sensitive learning objective, we aim to match the distribution of normalized distribution (i.e. $\norm(\bG^{\mathbf{T}}\hat{p}_{m}(x)) = \bG^{\mathbf{T}}\hat{p}_{m}(x)/\sum_i{(\bG^{\mathbf{T}}\hat{p}_{m}(x))_i}$) (Proposition 2~\cite{narasimhan2021training} also in Prop.  \ref{prop:prop2-narashimhan} ) with the $p_{m}(\mathcal{A}(x))$ by minimizing the KL-Divergence between these. We now propose to use the following weighted consistency regularizer loss function for optimizing the same:
\begin{equation}
  \label{eq:wt-consistency-reg}
    \ell^{\mathrm{wt}}_{u}(\hat{p}_{m}(x), p_{m}(\mathcal{A}(x)), \bG) = -\sum_{i=1}^{K}(\bG^{\mathbf{T}}\hat{p}_{m}(x))_i \log(p_{m}(\mathcal{A}(x))_i)
\end{equation}
\begin{proposition}
The minimizer of $\mathcal{L}^{wt}_{u} =\frac{1}{|B_u|}\sum_{x \in B_u} \ell^{wt}_{u}(\hat{p}_{m}(x), p_{m}(\mathcal{A}(x)), \bG)$ 
leads to minimization of KL Divergence i.e. $\mathcal{D}_{KL}(\norm(\bG^{\mathbf{T}}\hat{p}_{m}(x)) || p_{m}(\mathcal{A}(x))) \; \forall x \in \mX $  .
\label{prop:kl-weighted}
\end{proposition}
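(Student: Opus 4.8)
The plan is to recognize $\ell^{\mathrm{wt}}_{u}$ as a per-example cross-entropy between the \emph{normalized} pseudo-target $\norm(\bG^{\mathbf{T}}\hat{p}_{m}(x))$ and the model prediction $p_{m}(\mathcal{A}(x))$, rescaled by a strictly positive constant, and then apply the elementary identity ``cross-entropy $=$ KL divergence $+$ entropy of the target.'' Since the entropy term and the rescaling constant do not depend on the quantity being optimized, minimizing the loss is equivalent to minimizing a non-negatively weighted sum of KL terms, whose minimizer drives each KL divergence to its minimum.

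Concretely, first I would set $q(x) := \bG^{\mathbf{T}}\hat{p}_{m}(x) \in \RR^{K}_{\ge 0}$ and $Z(x) := \sum_{i=1}^{K} q(x)_i$, so that $\norm(\bG^{\mathbf{T}}\hat{p}_{m}(x)) = q(x)/Z(x) \in \Delta_{K-1}$. Rewriting the summand gives
\[
  \ell^{\mathrm{wt}}_{u}(\hat{p}_{m}(x), p_{m}(\mathcal{A}(x)), \bG)
  = -\sum_{i=1}^{K} q(x)_i \log\big(p_{m}(\mathcal{A}(x))_i\big)
  = -Z(x)\sum_{i=1}^{K} \frac{q(x)_i}{Z(x)}\log\big(p_{m}(\mathcal{A}(x))_i\big),
\]
which is $Z(x)$ times the cross-entropy $H\!\big(\norm(q(x)),\, p_{m}(\mathcal{A}(x))\big)$, where $p_{m}(\mathcal{A}(x)) \in \Delta_{K-1}$ by definition of $p_m$. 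Next I would invoke $H(a,b) = \mathcal{D}_{KL}(a\|b) + H(a)$ (valid for $a,b \in \Delta_{K-1}$, with $H(a) = -\sum_i a_i \log a_i$), obtaining
\[
  \ell^{\mathrm{wt}}_{u}
  = Z(x)\,\mathcal{D}_{KL}\!\big(\norm(\bG^{\mathbf{T}}\hat{p}_{m}(x)) \,\big\|\, p_{m}(\mathcal{A}(x))\big)
  + Z(x)\,H\!\big(\norm(\bG^{\mathbf{T}}\hat{p}_{m}(x))\big).
\]
As is standard in self-training, the pseudo-target $\hat{p}_{m}(x)$ is treated as a fixed target (stop-gradient) when optimizing the model that produces $p_{m}(\mathcal{A}(\cdot))$, so $Z(x)$ and $H(\norm(\bG^{\mathbf{T}}\hat{p}_{m}(x)))$ are constants for the minimization. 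Averaging over $B_u$ then yields $\mathcal{L}^{\mathrm{wt}}_{u} = \frac{1}{|B_u|}\sum_{x\in B_u} Z(x)\,\mathcal{D}_{KL}(\norm(\bG^{\mathbf{T}}\hat{p}_{m}(x)) \| p_{m}(\mathcal{A}(x))) + \text{const}$. Since each $Z(x) > 0$ and $\mathcal{D}_{KL}(\cdot\|\cdot) \ge 0$ with equality iff the arguments agree, the minimizer is exactly the predictor with $p_{m}(\mathcal{A}(x)) = \norm(\bG^{\mathbf{T}}\hat{p}_{m}(x))$, i.e.\ it minimizes $\mathcal{D}_{KL}(\norm(\bG^{\mathbf{T}}\hat{p}_{m}(x)) \| p_{m}(\mathcal{A}(x)))$ for every $x$.

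The main points requiring care—rather than any hard calculation—are two. First, well-definedness: one needs $Z(x) = \sum_i (\bG^{\mathbf{T}}\hat{p}_{m}(x))_i > 0$ so that $\norm(\cdot)$ makes sense and the per-example factor $Z(x)$ is a genuine positive weight; this holds for the gain matrices used in Sec.~\ref{sec:ndo-loss-fun} (e.g.\ for diagonal $\bG$ with positive diagonal, $Z(x) = \sum_i G_{ii}\hat{p}_{m}(x)_i > 0$ since $\hat{p}_{m}(x) \in \Delta_{K-1}$), and more generally follows from the nonnegativity/non-degeneracy conventions already adopted for $\bG$. Second, the passage from ``minimizer of the batch objective'' to ``$\forall x \in \mX$'': this uses the overparametrized-model premise (the hypothesis class is rich enough to realize the pointwise-optimal prediction $p_{m}(\mathcal{A}(x)) = \norm(\bG^{\mathbf{T}}\hat{p}_{m}(x))$) together with $B_u$ being drawn from the data distribution; without expressivity one only obtains the statement that a $Z(x)$-weighted average of the KL terms is minimized. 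Everything else is the routine cross-entropy/KL decomposition.
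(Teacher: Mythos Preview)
Your proposal is correct and follows essentially the same route as the paper's proof: factor out the normalizing constant $Z(x)=\sum_i(\bG^{\mathbf T}\hat p_m(x))_i$ to recognize $\ell^{\mathrm{wt}}_u$ as a (positively scaled) cross-entropy with target $\norm(\bG^{\mathbf T}\hat p_m(x))$, then apply $H(a,b)=\mathcal{D}_{KL}(a\|b)+H(a)$ and discard the terms that do not depend on $p_m(\mathcal A(x))$ thanks to the stop-gradient on the pseudo-target. Your version is in fact slightly more careful than the paper's, since you explicitly flag the positivity of $Z(x)$ and the expressivity assumption needed to pass from the batch minimizer to the pointwise $\forall x$ conclusion.
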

As the above loss is similar in nature to the cost sensitive losses introduced by~\citet{narasimhan2021training} (Sec. \ref{sec:ndo-loss-fun}) we can use the logit-adjusted variants (i.e. $\ell^{\mathrm{LA}}$ and $\ell^{\mathrm{hyb}}$ based on type of $\bm{G}$) of these in our final loss formulations ($\mathcal{L}^{wt}_{u}$) for training overparameterized deep networks.  We further show in Appendix Sec. \ref{sec:appendix-conn-between} that the above loss $\ell^{\mathrm{wt}}_{u}$ approximately minimizes the theoretical weighted consistency regularization term $R_{\mB, w}(F)$ defined in Sec. \ref{subsec:weighted_consistency}.

\vspace{1mm} \noindent \textbf{Threshold Mechanism for \ttt{CSST}.}
\label{subsec:threshold}
In the usual semi-supervised learning formulation~\cite{sohn2020fixmatch} we use the confidence threshold ($\max_{i}(p_{m}(x)_i) > \tau$) as the function to select samples for which consistency regularization term is non-zero. We find that this leads to sub-optimal results in particularly the case of non-diagonal $\mathbf{G}$ as only a few samples cross the threshold (Fig. \ref{fig:mask_rate}).  As in the case of cost-sensitive loss formulation the samples may not achieve the high confidence to cross the threshold of consistency regularization. This is also theoretically justified by the following proposition:  
\begin{proposition}[\cite{narasimhan2021training} Proposition 2]
   \label{prop:prop2-narashimhan}
   Let $p_{m}^{opt}(x)$ be the optimal softmax model function obtained by optimizing the cost-sensitive objective in Eq. \eqref{eq:csl-obj} by averaging weighted loss function $\ell^{wt}(y,p_m(x)) = - \sum_{i=1}^{K} G_{y,i} \log{\frac{(p_m(x)_i)}{\sum_{j}p_m(x)_j}}$. Then optimal $p_{m}^{opt}(x)$ is: $p_{m}^{opt}(x) = \frac{G_{y,i}}{\sum_{j} G_{y,j}} = \norm(\bG^{T}\mathbf{y})  \forall (x,y)$.
\end{proposition}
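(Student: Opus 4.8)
The plan is to reduce the population minimization of $\ex[(x,y)\sim D]{\ell^{\mathrm{wt}}(y,p_m(x))}$ to an independent, per-input convex program and solve it in closed form. Since a softmax model $p_m$ is an unconstrained map $\mX\to\Delta_{K-1}$ whose image is the relative interior of the simplex, and the averaged loss decouples over $x$, the minimizer over such models can be computed pointwise: in the realizable setting where the label $y$ is determined by $x$, it suffices to minimize $q\mapsto\ell^{\mathrm{wt}}(y,q)=-\sum_{i\in[K]}G_{y,i}\log\frac{q_i}{\sum_j q_j}$ over positive vectors $q$. The key structural observation is that this expression is invariant under positive rescaling $q\mapsto tq$, so without loss of generality we may impose $\sum_j q_j=1$ and equivalently minimize $g(q)=-\sum_i G_{y,i}\log q_i$ over the simplex $\Delta_{K-1}$.

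Next I would invoke convexity to collapse the problem to a single stationarity condition. Recalling from Section~\ref{sec:csl-weighted-err} that one may assume $G_{ij}\ge 0$ without loss of generality, $g$ is a nonnegatively weighted sum of the convex functions $q_i\mapsto-\log q_i$, hence convex on the convex set $\Delta_{K-1}$, so any KKT point is the global minimizer. Forming the Lagrangian $\Lambda(q,\nu)=-\sum_i G_{y,i}\log q_i+\nu\big(\sum_i q_i-1\big)$ and setting $\partial\Lambda/\partial q_i=0$ gives $q_i=G_{y,i}/\nu$; substituting into $\sum_i q_i=1$ forces $\nu=\sum_j G_{y,j}$, and therefore $p_m^{\mathrm{opt}}(x)_i=G_{y,i}/\sum_j G_{y,j}$, i.e. $p_m^{\mathrm{opt}}(x)=\norm(\bG^{T}\mathbf{y})$ with $\mathbf{y}$ the one-hot encoding of $y$. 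I would finish by noting that the optimum is attained in the interior — as $q_i\to 0^+$ with $G_{y,i}>0$ we have $g(q)\to+\infty$ — which rules out a spurious boundary solution and confirms the critical point is the minimizer.

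I expect the only genuine subtleties to be the following. First, the pointwise reduction under the softmax parametrization must be justified carefully: the infimum of the averaged loss over softmax-representable $p_m$ equals its infimum over all $\Delta_{K-1}$-valued functions precisely because softmax surjects onto $\mathrm{relint}(\Delta_{K-1})$ and the per-$x$ minimizer lies there whenever the row $G_{y,\cdot}$ is strictly positive. Second, one must handle the degenerate cases: if some $G_{y,i}=0$ the claimed minimizer sits on the boundary and is only approached as the corresponding logit tends to $-\infty$, so the statement should be read in the closure of the softmax image; and if an entire row $G_{y,\cdot}$ vanishes, the loss is identically zero for that $y$ and every $p_m$ is optimal. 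These points, rather than the Lagrange-multiplier computation itself, are where care is needed, though none of them affects the conclusion or its use in the thresholding argument.
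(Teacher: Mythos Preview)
The paper does not actually supply a proof of this proposition: it is quoted verbatim as \cite{narasimhan2021training}, Proposition~2, and is used only as a black box to motivate the KL-thresholding rule. So there is no in-paper argument to compare against.

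That said, your derivation is correct and is exactly the standard route one would expect for such a result: reduce to a pointwise problem, exploit scale invariance to restrict to the simplex, and solve the resulting convex program $-\sum_i G_{y,i}\log q_i$ under $\sum_i q_i=1$ via a Lagrange multiplier. Your discussion of the edge cases (zero entries in a row of $\bG$, an all-zero row) is more careful than anything the paper itself says about this proposition, and your remark that the minimizer lies in the interior whenever $G_{y,\cdot}>0$ is the right way to reconcile the softmax parametrization with the simplex formulation.
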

Here $\mathbf{y}$ is the one-hot representation vector for a label $y$. This proposition demonstrates that for a particular sample the high confidence $p_{m}(x)$ may not be optimal based on $\bG$. We now propose our novel way of thresholding samples for which consistency regularization is applied in \texttt{CSST}. Our thresholding method takes into account the objective of optimizing the non-decomposable metric by taking $\bG$ into account. We propose to use the threshold on KL-Divergence of the softmax of the sample $ {p}_{m}(x)$ with the optimal softmax (i.e. $\norm(\bG^{T}\hat{{p}}_{m}(x))$) for a given $\mathbf{G}$ corresponding to the pseudo label (or sharpened) $\hat{{p}}_{m}(x)$, using which we modify the consistency regularization loss term:
\begin{equation}
    \label{loss:CSST-KL}
    \mathcal{L}_u^{wt}(B_u) = \frac{1}{|B_u|}\sum_{x \in B_u}\indc_{(\mathcal{D}_{KL} (\norm(\bG^{T}\hat{p}_{m}(x))  \; || \; p_m(x))\leq \tau)}
     \ell^{\mathrm{wt}}_{u}(\hat{p}_{m}(x), p_{m}(\mathcal{A}(x)), \bG)
\end{equation}
We name this proposed combination of KL-Thresholding and weighted consistency regularization as $\texttt{CSST}$ in our experimental results. We find that for non-diagonal gain matrix $\bG$ the proposed thresholding plays a major role in improving performance over supervised learning. This is demonstrated by comparison of $\ttt{CSST}$ and \ttt{CSST} w/o KL-Thresholding (without proposed thresholding mechanism) in Fig. \ref{fig:mask_rate} and Tab. \ref{tab:coverage}. We will now empirically incorporate \ttt{CSST} by introducing consistency based losses and thresholding mechanism for unlabeled data, into the popular semi-supervised methods of FixMatch~\cite{sohn2020fixmatch} and Unsupervised Data Augmentation for Consistency Training (UDA)~\cite{xie2020unsupervised}. The exact expression for the weighted consistency losses utilized for UDA and FixMatch have been provided in the Appendix \addedtext{ Sec. \ref{sec:appendix-fixmatch-obj} and \ref{sec:appendix-uda-obj}} .

\label{sec:proposed-method}

\section{Experiments}
\label{sec:expt}

\begin{figure*}[!t]
  \centering
\begin{minipage}[c]{0.68\textwidth}
    \centering
    \vspace{-5mm}
    \captionof{table}{Results of maximizing the worst-case recall over 
        \emph{all classes} (col 2--3) and
        over just the head and tail classes (col 4--7). }
    \vspace{2mm}
    \label{tab:min-recall}
    \centering
        \resizebox{\textwidth}{!}{
            \begin{tabular}{ccccccccc}
            \hline
            \multicolumn{1}{c}{\textbf{Method}} & 
            \multicolumn{2}{c}{\textbf{CIFAR10-LT ($\rho = 100$)}} & 
            \multicolumn{2}{c}{\textbf{CIFAR100-LT ($\rho = 10$)}} &
            \multicolumn{2}{c}{\textbf{Imagenet100-LT ($\rho = 10$)}} &
            \\
            & \textbf{Avg. Rec} & \textbf{Min. Rec} & 
            \textbf{Avg. Rec } & \textbf{Min. HT Rec} &
            \textbf{Avg. Rec } & \textbf{Min. HT Rec} &
            \\
            \hline
            \texttt{ERM} &  
                0.52 &	0.26 & 
                0.36 &	0.14 & 0.40 & 0.30
                \\
            \texttt{LA} &  
                0.51 &	0.38 & 
                0.36 &	0.35 & 0.48 & 0.47
                \\
            \texttt{CSL} &  
                0.64 &	0.57 & 
                0.43 &	0.43 & 0.52 & 0.52
                \\
            \hline
            \begin{tabular}{@{}c@{}}\texttt{Vanilla} \\ (FixMatch)\end{tabular}    
            
                & \highlight{0.78} &	0.48 &
                \highlight{0.63} &	0.36 & 0.58 & 0.49
                \\
            \begin{tabular}{@{}c@{}}\texttt{CSST} \\ (FixMatch)\end{tabular}
                & 0.76 &	\highlight{0.72} &
                \highlight{0.63} &	\highlight{0.61} & \highlight{0.64} & \highlight{0.63}
                \\
            \hline
        
        \end{tabular}}
  \end{minipage}  
\begin{minipage}[c]{0.3\linewidth}
    \centering
    \includegraphics[width=\linewidth]{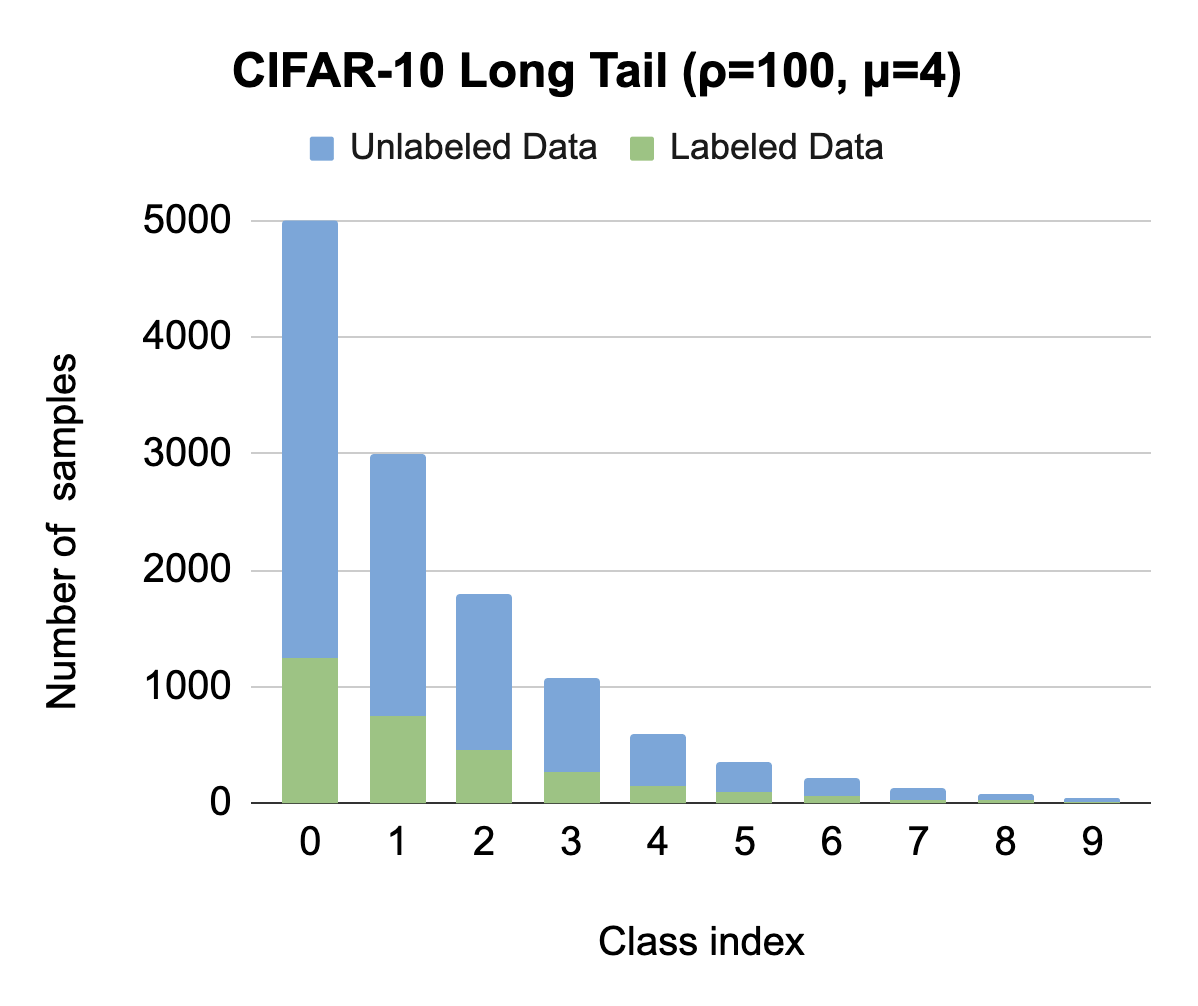}
    \vspace{-5mm}
    \captionof{figure}{CIFAR-10 Long tail distribution $\rho=100,\mu=4$.}
    \label{fig:intra-fid}
  \end{minipage}

  \vspace{-5mm}
\end{figure*}
We demonstrate that the proposed \ttt{CSST} framework shows significant gains in performance on both vision and NLP tasks on imbalanced datasets, with an imbalance ratio defined on the training set as $\rho = \frac{\max_i P(y=i)}{\min_i P(y=i)}$. We assume the labeled and unlabeled samples come from a similar data distribution and the unlabeled samples are much more abundant ($\mu$ times) the labeled. The frequency of samples follows an exponentially decaying long-tailed distributed as seen in  Fig.~\ref{fig:intra-fid}, which closely imitates the distribution of real-world data~\cite{thomee2016yfcc100m, krishna2017visual}.
 For CIFAR-10~\cite{krizhevsky2009learning}, IMDb~\cite{maas-EtAl:2011:ACL-HLT2011} and DBpedia-14~\cite{lehmann2015dbpedia}, we use $\rho = 100$ and $\rho = 10$ for CIFAR-100~\cite{krizhevsky2009learning} and ImageNet-100~\cite{russakovsky2015imagenet}~\footnote{https://www.kaggle.com/datasets/ambityga/imagenet100} datasets.
We compare our method against supervised methods of ERM, Logit Adjustment (\texttt{LA})~\cite{menon2020long} and Cost Sensitive Learning (\texttt{CSL})~\cite{narasimhan2014statistical} trained on the same number of labeled samples as used by semi-supervised learning methods, along with vanilla semi-supervised methods of FixMatch (for vision) and UDA (for NLP tasks). We use WideResNets(WRN) \cite{BMVC2016_87}, specifically WRN-28-2 and WRN-28-8 for CIFAR-10 and CIFAR-100 respectively. For ImageNet, we use a ResNet-50~\cite{he2016deep} network for our experiments and finetuned DistilBERT(base uncased)~\cite{sanh2019distilbert} for IMDb and DBpedia-14 datasets. We divide the balanced held-out set for each dataset equally into validation and test sets.  A detailed list of hyper-parameters and additional experiments can be found in the Appendix \addedtext{Tab. \ref{tab:hyperparams}} and Sec. \ref{sec: additional-experiments} respectively.

\vspace{1mm} \noindent \textbf{Maximizing Worst-Case Recall.}
\label{subsec:worst-case-recall}
For CIFAR-10, IMDb, and DBpedia-14 datasets, we maximize the minimum recall among all classes (Eq. \eqref{eq:min-recall-obj}).
Given the low number of samples per class for datasets with larger number of classes like CIFAR-100 and ImageNet-100, we pick objective \eqref{eq:min-HT-recall-obj}. We define the head classes ($\mH$) and tail classes ($\mT$) as the first 90 classes and last 10 classes respectively. 
The Min. HT recall objective can be mathematically formulated as: 
\begin{equation}
    \label{eq:min-HT-recall-obj}
  \max_F \min_{(\lambda_{\mathcal{H}}, \lambda_{\mathcal{T}}) \in \Delta_{1}} \frac{\lambda_{\mathcal{H}}}{|\mathcal{H}|} \sum_{i \in \mathcal{H}}\frac{C_{ii}[F]}{\pi_i} + \frac{\lambda_{\mathcal{T}}}{|\mathcal{T}|} \sum_{i \in \mathcal{T}}\frac{C_{ii}[F]}{\pi_i}.
\end{equation}
The corresponding gain matrix $\bf{G}$ is diag($\frac{\lambda_{\mH}}{\pi_1 |\mH|}, \frac{\lambda_{\mH}}{\pi_2 |\mH|}, \dots , \frac{\lambda_{\mT}}{\pi_{K-1} |\mT|}, \frac{\lambda_{\mT}}{\pi_{K} |\mT|}$). 
Since $\bf{G}$ is diagonal here, we use \texttt{CSST}(FixMatch) loss function Eq. \eqref{loss:CSST-KL} with the corresponding $\ell_{u}^{\mathrm{wt}}$ being substituted by $\ell_{u}^{\mathrm{LA}}$ as define in Sec. \ref{sec:ndo-loss-fun}. Also for labeled samples we use $\mathcal{L}_{s}^{\mathrm{LA}}$ as $\bG$ is diagonal, we then combine the loss and train network using SGD. Each few steps of SGD, were followed by an update on the $\blambda$ and $\bG$ based on the uniform validation set (See Alg. \textcolor{red}{1} in Appendix).  We find that \texttt{CSST}(FixMatch) significantly outperforms the other baselines in terms of the  Min. recall and Min. Head-Tail recall for all datasets, the metrics which we aimed to optimize (Tab. \ref{tab:min-recall}), which shows effectiveness of \ttt{CSST} framework. 
Despite optimizing worst-case recall we find that our framework is still able to maintain reasonable average (Avg.) recall in comparison to baseline \ttt{vanilla}(FixMatch), which demonstrates it's practical applicability. We find that optimizing Min. recall across NLP tasks of classification on long-tailed data by plugging UDA into  \ttt{CSST(UDA)} framework shows similar improvement in performance (Tab. \ref{tab:nlp_results}). This establishes the generality of our framework to even self-training methods across domain of NLP as well. 
As the $\mathbf{G}$ is a diagonal matrix for this objective, the proposed KL-Based Thresholding here is equivalent to the confidence based threshold of FixMatch in this case. Despite the equivalence of thresholding mechanism, we see significant gains in min-recall (Tab. \ref{tab:min-recall}) just using the regularization term. We discuss their equivalance in Appendix Sec. \ref{Diagonal-G-hard-PL}.

\label{tab:post-shift}

\begin{table}[!t]
    \centering
    \small
    \caption{Results of maximizing the mean recall subject to coverage constraint 
    \emph{all classes} (col 2--3) and
    over the head and tail classes (col 4--7). 
   Proposed \texttt{CSST}(FixMatch) approach compares favorably to \ttt{ERM},\ttt{LA},\ttt{CSL} \ttt{vanilla}(FixMatch) and \texttt{CSST}(FixMatch) w/o KL-Thresh.. It is the best at both maximizing mean recall and coming close to satisfying the coverage constraint.
    }
    \begin{tabular}{lcc|cccccc}
        \hline
        \multicolumn{1}{c}{\textbf{Method}} & 
        \multicolumn{2}{c}{\textbf{CIFAR10-LT }} & 
        \multicolumn{2}{c}{\textbf{CIFAR100-LT }} &
        \multicolumn{2}{c}{\textbf{ImageNet100-LT }} &
        \\
        & \multicolumn{2}{c}{Per-class Coverage } & 
        \multicolumn{2}{c}{Head-Tail Coverage } &
        \multicolumn{2}{c}{Head-Tail Coverage } &
        \\
        & \multicolumn{2}{c}{($\rho = 100$, tgt : 0.1)} & 
        \multicolumn{2}{c}{($\rho = 10$, tgt : 0.01)} &
        \multicolumn{2}{c}{($\rho = 10$, tgt : 0.01)} &
        \\
        \hline
        & \textbf{Avg. Rec} & \textbf{Min. Cov} & 
        \textbf{Avg. Rec } & \textbf{Min. HT Cov} &
        \textbf{Avg. Rec } & \textbf{Min. HT Cov} &
        \\
        \hline
        \texttt{ERM} &  
            0.52 &	0.034 & 
            0.36 &	0.004 & 0.40 & 0.006
            \\
        \texttt{LA} &  
            0.51 &	0.039 & 
            0.36 &	0.009 & 0.48 & 0.009
            \\
        \texttt{CSL} &  
            0.60 &	0.090 & 
            0.45 &	0.010 & 0.48 & \highlight{0.010}
            \\
        \hline
        \begin{tabular}{@{}c@{}}\texttt{Vanilla} (FixMatch)
        \end{tabular}
            & 0.78 &	0.055 &
            \highlight{0.63} &	0.004 & \highlight{0.58} & 0.007
            \\
        \texttt{CSST}(FixMatch) w/o & & \highlight{} &  & & &  \\
        KL-Thresh.
            & \multirow{-2}{*}{0.67} &	\multirow{-2}{*}{\highlight{0.093}} &
            \multirow{-2}{*}{0.47} &	 \multirow{-2}{*}{0.010} &  \multirow{-2}{*}{0.26} &  \multirow{-2}{*}{0.010}
            \\
        \begin{tabular}{@{}c@{}}\texttt{CSST}(FixMatch) 
        \end{tabular}
            & \highlight{0.80} &	0.092 &
            \highlight{0.63} &	\highlight{0.010} & \highlight{0.58} & \highlight{0.010}
            \\
        \hline
    \end{tabular}
    \vspace{0.5em}

    \label{tab:coverage}
    \vspace{-5mm}
\end{table}

\begin{table}[!t]
    \centering
    \small
    \caption{Results of maximizing the min recall over
    \emph{all classes} for classification on NLP datasets. 
   Proposed \texttt{CSST}(UDA) approach outperforms ERM and \ttt{vanilla}(UDA) baselines.
    }
    \begin{tabular}{lcccccccc}
        \hline
        \multicolumn{1}{c}{\textbf{Method}} & 
        \multicolumn{2}{c}{\textbf{IMDb ($\rho = 10$)}} & 
        \multicolumn{2}{c}{\textbf{IMDb ($\rho = 100$)}} &
        \multicolumn{2}{c}{\textbf{DBpedia-14 ($\rho = 100$)}} &
        \\
        \hline
        & \textbf{Avg Rec} & \textbf{Min Rec} & 
        \textbf{Avg Rec } & \textbf{Min Rec} &
        \textbf{Avg Rec } & \textbf{Min Rec} &
         \\\hline
        \texttt{ERM} &  
            0.79 &	0.61 & 
            0.50 &	0.00 & 0.95 & 0.58
            \\
        \midrule
        \texttt{vanilla(UDA)}
            & 0.82 &	0.66 &
            0.50 &	0.00 & 0.96 & 0.65
            \\
        \texttt{CSST(UDA)}
            & \highlight{0.89} &	\highlight{0.88} &
            \highlight{0.77} &	\highlight{0.75} & \highlight{0.99} & \highlight{0.97}
            \\
        \hline
    \end{tabular}
    \vspace{-5mm}

    \label{tab:nlp_results}
    
\end{table}

\vspace{1mm} \noindent \textbf{Maximizing Mean Recall Under Coverage Constraints.}
\label{subsec:max-recall-under-cov}
Maximizing mean recall under coverage constraints objective seeks to result in a model with good average recall, yet at the same time constraints the proportion of predictions for each class to be uniform across classes.
The ideal target coverage under a balanced evaluation set(or such circumstances) is given as
$ 
\label{perfect-coverage}
\text{cov}_{i}[F] = \frac{1}{K}, \ \forall \  i \in [K]
$. 
Along similar lines to objective \eqref{eq:min-HT-recall-obj} we modify the objective \eqref{eq: cov-const-obj} to maximize the average recall subject to the both the average head and tail class coverage (HT Coverage) being above a given threshold of $\frac{0.95}{K}$. The $\bG$ is non-diagonal here and $G_{ij} = (\indc_{j \in \mathcal{H}}  \frac{\lambda_{\mathcal{H}}}{|\mathcal{H}|} + \indc_{j \in \mathcal{T}} \frac{\lambda_{\mathcal{T}}}{|\mathcal{T}|}) + \frac{\delta_{ij}}{K \pi_i} $.
\begin{equation}
 \label{eq: HT-cov-const-obj}
    \max_F \min_{(\lambda_{\mathcal{H}}, \lambda_{\mathcal{T}}) \in \RR^2_{\ge 0}} 
   \sum_{i\in [K]} \frac{C_{ii}[F]}{K\pi_i} + \lambda_{\mathcal{H}}\left(
      \sum_{i \in [K], j \in \mathcal{H}} \frac{C_{ij}[F]}{|\mathcal{H}|}  - \frac{0.95}{K}
   \right)  + \lambda_{\mathcal{T}}  \left(
      \sum_{i \in [K], j \in \mathcal{T}} \frac{C_{ij}[F]}{|\mathcal{T}|} - \frac{0.95}{K}
   \right).
\end{equation}
As these objectives corresponds to a non-diagonal $\bG$ as shown in Eq. \eqref{eq: cov-const-obj} in Sec. \ref{sec:ndo-loss-fun}. Hence, for introducing $\ttt{CSST}$ into FixMatch we replace first supervised loss $\mathcal{L}_{s}$ with $\mathcal{L}_{s}^{\mathrm{hyb}}$. For the unlabeled data we introduce $\ell^{\mathrm{hyb}}$ in $\mathcal{L}^{\mathrm{wt}}_{u}$ (Eq. \eqref{loss:CSST-KL}).  Hence, the final objective $\mathcal{L}$ is defined as, $\mathcal{L} =  \mathcal{L}_s^{\mathrm{hyb}} + \lambda_u \mathcal{L}^{\mathrm{wt}}_u $. We update the parameters of the cost-sensitive loss ($\bG$ and $\blambda$) periodically after few of SGD on the model parameters (Alg. \textcolor{red}{2} in Appendix). In this case our proposed thresholding mechanism in \ttt{CSST}(FixMatch)  introduced in Sec. \ref{subsec:threshold}, leads to effective utilization of unlabled data resulting in improved performance over the naive \ttt{CSST}(FixMatch) without (w/o) KL-Thresholding (Tab. \ref{tab:coverage}). In these experiments, the mean recall of our proposed approach either improves or stays same to the \ttt{vanilla}(FixMatch) implementation but only ours is the one that comes close to satisfying the coverage constraint. We further note that among all the methods, the only methods that come close to satisfying the coverage constraints are the ones with $\ell^{\mathrm{hyb}}$ included in them.

\vspace{1mm} \noindent \textbf{Ablation on amount of unlabeled data.} Here, we ablated the total number of labeled sampled while keeping the number of unlabeled samples constant.
We observe (in Fig.~\ref{fig:data-abl}) that as we increase the number of labeled samples the mean recall improved. This is because more labeled samples helps in better pseudo-labeling on the unlabeled samples and similarly as we decrease the number of labeled sample, the models errors on the pseudo-labels increase causing a reduction in mean recall. Hence, any addtional labeled data can be easily used to improve \ttt{CSST} performance.

\vspace{1mm} \noindent \textbf{Ablation on $\tau$ threshold.} Fig.~\ref{fig:thresh-abl} shows that when the KL divergence threshold is too high,  a large number of samples with very low degree of distribution match are used for generating the sharpened target (or pseudo-labels), this leads to worsening of mean recall as many targets are incorrect. We find that keeping a conservative of $\tau = 0.3$ works well across multiple experiments.

\begin{figure*}[t]
  \centering
\begin{minipage}[c]{0.34\linewidth}
        \centering
         \includegraphics[width=\textwidth]{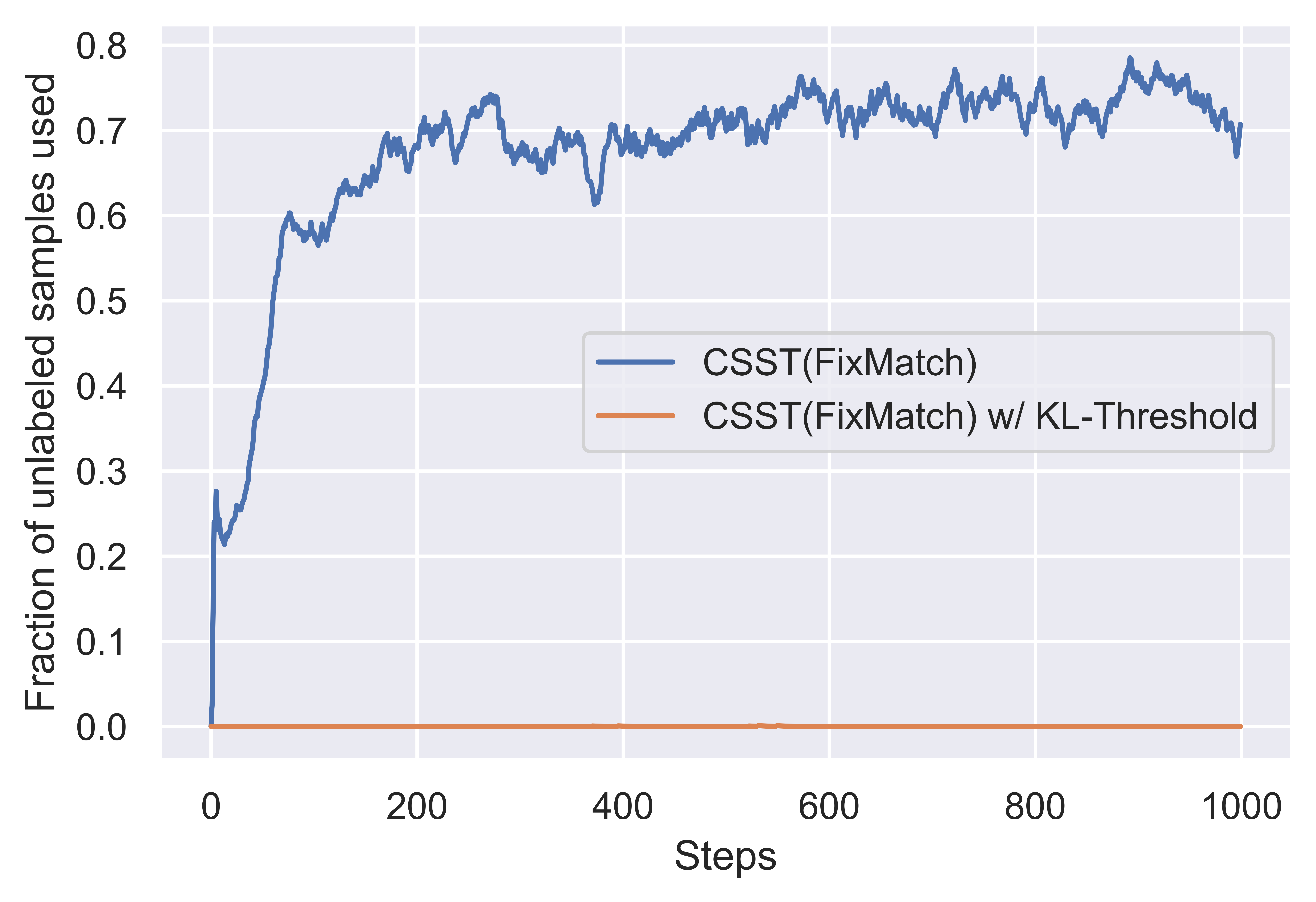}
         \vspace{-3mm}
         \caption{Fraction of unlabelled data used for maximizing average recall under coverage constraints for CIFAR-100 ($\rho=10,\mu=4$) (Sec. \ref{subsec:max-recall-under-cov}).}
         \label{fig:mask_rate}
  \end{minipage}
   \hfill
  \begin{minipage}[c]{0.64\textwidth}

\begin{subfigure}{.5\textwidth}
  \centering
  \includegraphics[width=\linewidth]{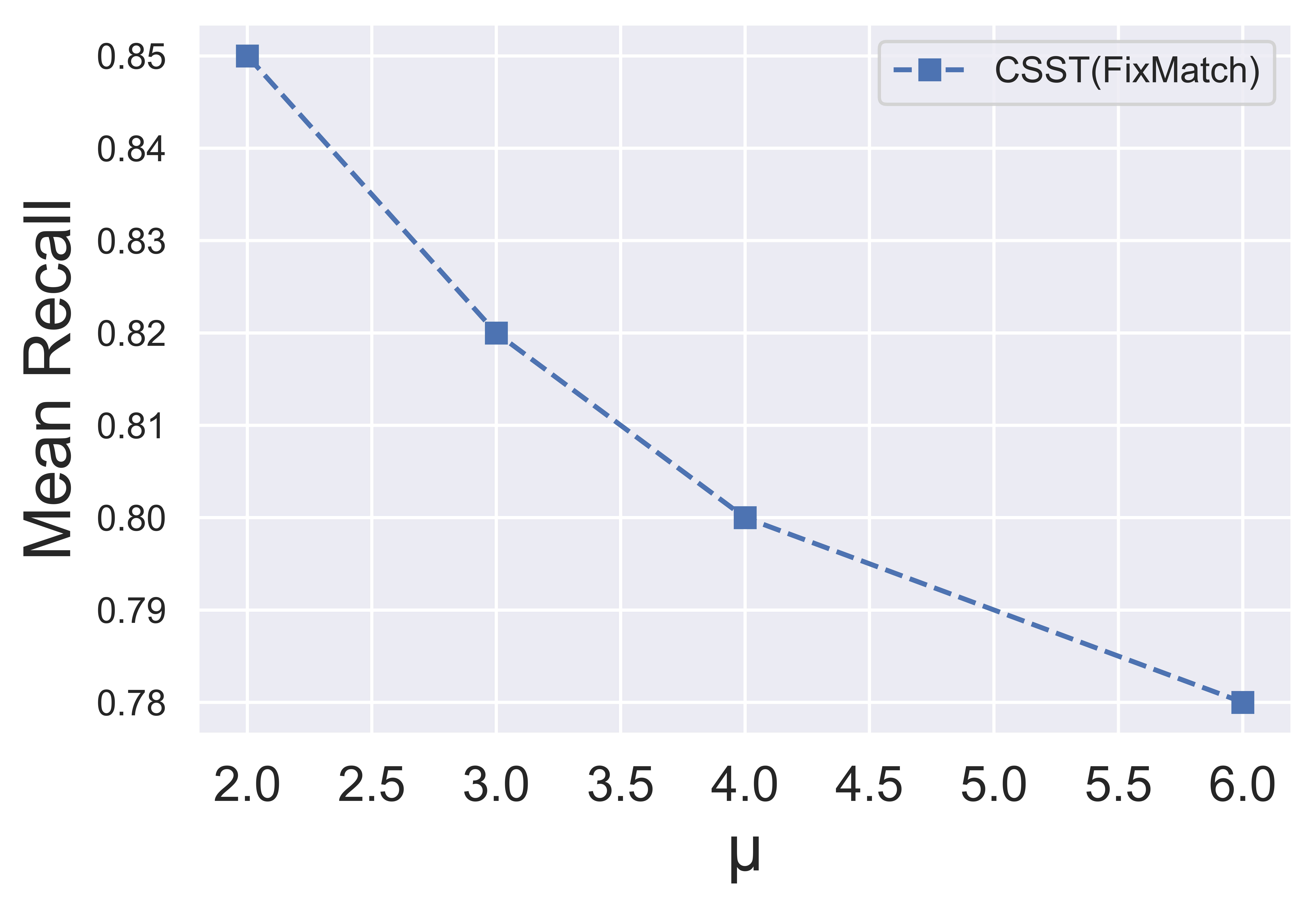}
  \caption{}
  \label{fig:data-abl}
\end{subfigure}%
\begin{subfigure}{.5\textwidth}
  \centering
  \includegraphics[width=\linewidth]{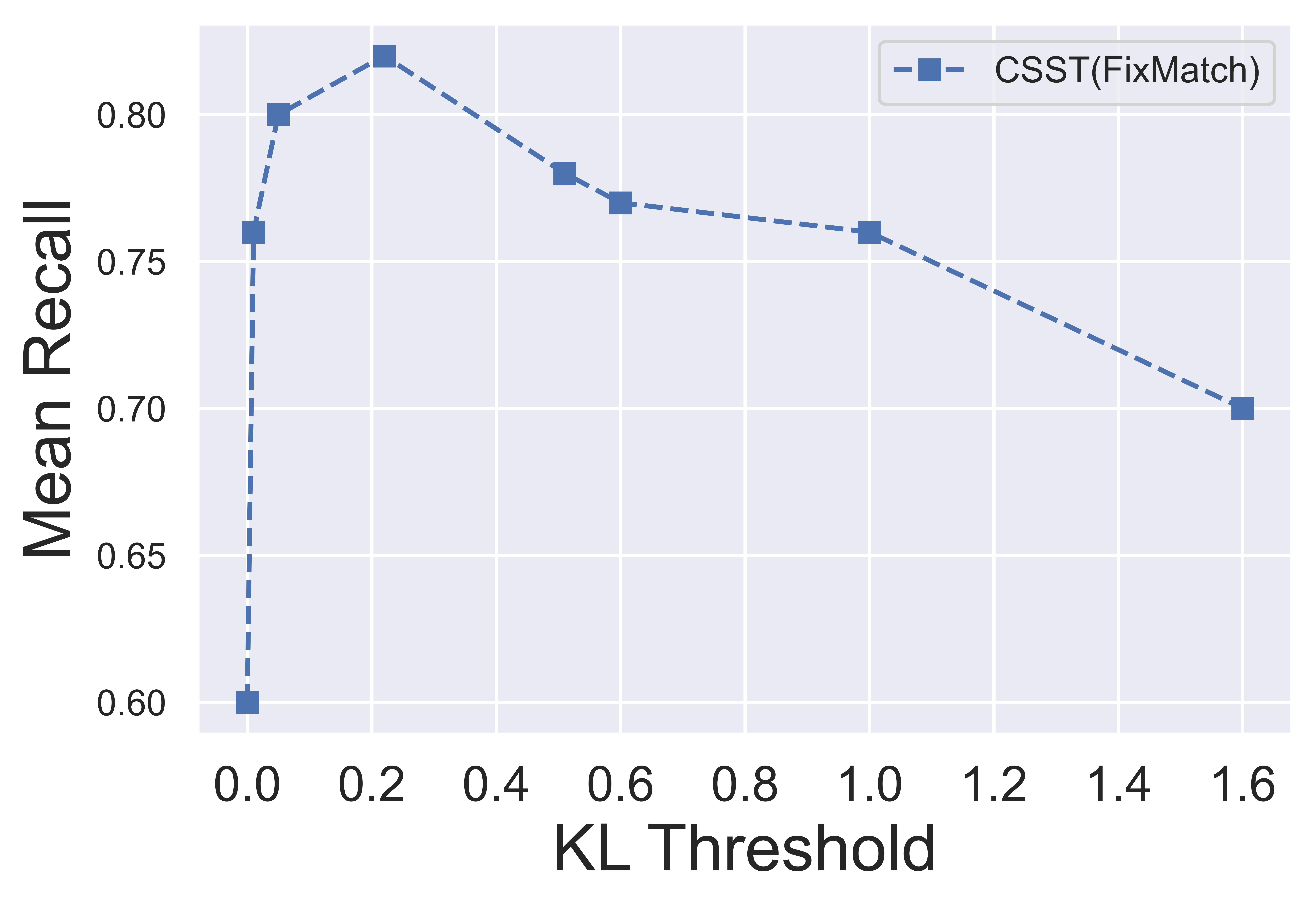}
  \caption{}
  \label{fig:thresh-abl}
\end{subfigure}
\vspace{-3mm}
\caption{Maximizing average recall under coverage constraints for CIFAR-10 Long tail ($\rho=100$) (Sec. \ref{subsec:max-recall-under-cov}). Fig.~shows comparison of (a) increasing the ratio of unlabeled samples to labeled samples given fixed number of unlabeled samples (b) Ablation on KL diveregence based threshold for \texttt{CSST}(FixMatch)}
\label{fig:ablation}
  \end{minipage}
\vspace{-4mm}
\end{figure*}

\section{Related Work}
\vspace{1mm}\noindent \textbf{Self-Training.}
Self-training algorithms have been popularly used for the tasks of semi-supervised learning~\cite{berthelot2019mixmatch,xie2020self,sohn2020fixmatch,laine2016temporal} and unsupervised domain adaptation~\cite{saito2017asymmetric,zou2019confidence}. In recent years several regularizers which enforce consistency in the neighborhood (either an adversarial perturbation~\cite{miyato2018virtual} or augmentation~\cite{xie2020unsupervised}) of a given sample have further enhanced the applicability and performance of self-training methods, when used in conjunction. However, these works have focused mostly on improving the generic metric of accuracy, unlike the general non-decomposable metrics we consider.

\noindent \textbf{Cost-Sensitive Learning.} It refers to problem settings where the cost of error differs for a sample based on what class it belongs to. These settings are very important for critical real world applications like disease diagnosis, wherein mistakenly classifying a diseased person as healthy can be disastrous. There have been a pletheora of techniques proposed for these which can be classified into: importance weighting~\cite{lin2002support,zadrozny2003cost,cui2019class} and adaptive margin~\cite{cao2019learning,zadrozny2003cost} based techniques. For overparameterized models \citet{narasimhan2015consistent} show that loss weighting based techniques are ineffective and propose a logit-adjustment based cost-sensitive loss which we also use in our framework.

\noindent \textbf{Complex Metrics for Deep Learning.} There has been a prolonged effort on optimizing more complex metrics that take into account practical constraints~\cite{narasimhan2014statistical,puthiya2014optimizing,natarajan2016optimal}. However most work has focused on linear models leaving scope for works in context of deep neural networks. \citet{sanyal2018optimizing} train DNN using reweighting strategies for optimizing metrics,  \citet{huang2019addressing} use a reinforcement learning strategy to optimize complex metrics, and \citet{kumar2021implicit}
optimize complex AUC (Area Under Curve) metric for a deep neural network. However, all these works have primarily worked in supervised learning setup and are not designed to effectively make use of available unlabeled data.

\section{Conclusion}
In this work, we aim to optimize practical non-decomposable metrics readily used in machine learning through self-training with consistency regularization, a class of semi-supervised learning methods. We introduce a cost-sensitive self-training framework (\texttt{CSST}) that involves minimizing a cost-sensitive error on pseudo labels and consistency regularization. We show theoretically that we can obtain classifiers that can better optimize the desired non-decomposable metric than the original model used for obtaining pseudo labels, under similar data distribution assumptions as used for theoretical analysis of Self-training. We then apply \texttt{CSST} to practical and effective self-training method of FixMatch and UDA, incorporating a novel regularizer and thresholding mechanism based on a given non-decomposable objective. We find that \texttt{CSST} leads to a significant gain in performance of desired non-decomposable metric, in comparison  to vanilla self-training-based baseline. Analyzing the \texttt{CSST} framework when the distribution of unlabeled data significantly differs from labeled data is a good direction to pursue for future work. 

\textbf{Acknowledgements:} This work was supported in part by Fujitsu Research grant. Harsh Rangwani is supported by Prime Minister's Research Fellowship (PMRF).

\newpage 

\appendix
\textbf{\Large Appendix}

\section{Limitations and Negative Societal Impacts}
\label{sec:limitations-and-negetive-societal}
\subsection{Limitations of our Work}
At this point we only consider optimizing objectives through \ttt{CSST} which can be written as a linear combination of entries of confusion matrix. Although there are important metrics like Recall, Coverage etc. which can be expressed as a linear form of confusion matrix. However, there do exist important metrics like Intersection over Union (IoU), Q-Mean etc. which we don't consider in the current work.
We leave this as an open direction for further work.

Also in this work we considered datasets where unlabeled data distribution doesn't significantly differ from the labeled data distribution, developing robust methods which can also take into account the distribution shift between unlabeled and labeled is an interesting direction for future work.

\subsection{Negative Societal Impact}
Our work has application in fairness domain~\cite{cotter2019optimization}, where it can be used to improve performance of minority sub-groups present in data. These fairness objectives can be practically enforced on neural networks through the proposed \ttt{CSST} framework.
However these same algorithms can be tweaked to artifically induce bias in decision making of trained neural networks, for example by ignoring performance of models on certain subgroups. Hence, we suggest deployment of these models after through testing on all sub groups of data.

\section{Connection between Minimization of Weighted Consistency Regularizer Loss (Eq. \eqref{eq:wt-consistency-reg}) and 
Theoretical Weighted Consistency ($R_{\mB, w}(F)$ in Sec. \ref{subsec:weighted_consistency})}
\label{sec:appendix-conn-between}

The theoretical weighted consistency regularization can be approximated under strong augmentation $\mA$ as $R_{\mB, w}(F) \approx \sum_{i,j\in[K]}w_{ij}\ex[x \sim P_i]{\bone(F(\mA(x)) \neq F(x))}$. Noting that $\bone(F(\mA(x)) \neq F(x)) \le \bone(F(\mA(x)) \neq j) + \bone(F(x) \neq j)$ for any class label $j$, 
this value of $R_{\mB, w}(F)$ is bounded as follows:
\begin{align*}
    R_{\mB, w}(F) &\approx \sum_{i,j\in[K]}w_{ij}\ex[x \sim P_i]{\bone(F(\mA(x)) \neq F(x))}\\
    & \le 
    \sum_{i, j \in [K]} w_{ij}\ex[x \sim P_i]{\bone(F(\mA(x)) \neq j))} + 
\sum_{i, j \in [K]} w_{ij}\ex[x \sim P_i]{\bone(F(x) \neq j)}.
\end{align*}

Assuming $F(x)$ to be composed of a classifier made of neural network model output $p_{m}(x)$. For consistency regularization, we choose samples which have a low second term by thresholding on confidence for which we obtain the pseudo-label $F(x) = \hat{p}_{m}(x)$. In that case the minimization of (the upper bound of) $R_{\mB, w}(F)$ is approximately equivalent to cost-sensitive learning (CSL):
\begin{equation*}
\min_{F}\sum_{i, j \in [K]} w_{ij}\ex[x]{\bone(F(\mA(x)) \neq j))}.
\end{equation*}
The above objective can be equivalently expressed in form of gain matrix $\bG$ where $G_{i,j} = w_{i,j}/(\pi_{i})$. Hence, the above minimization can be effectively done for different gain matrices based on corresponding non-decomposable metric~\ref{sec:ndo-loss-fun}. Following \citet{narasimhan2021training}, we can effectively minimize the $R_{\mB, w}(F)$ using the calibrated CSL loss function by plugging in the pseudo label $\hat{p}_m(x)$ in place of ground truth label ($\mathbf{y}$). The final weighted regularization function can be seen as:
\begin{equation}
    \ell^{hyb} (\hat{p}_{m}(x), p_{m}(\mA(x)); \bG)
\end{equation}
The above regularization loss is same as the weighted consistency regularization loss in Eq. \eqref{eq:wt-consistency-reg}, which leads to minimization of $\mathcal{D}_{KL}(\norm(\bG^{\mathbf{T}}\hat{p}_{m}(x)) || p_{m}(\mathcal{A}(x))) \; \forall x \in \mX $ as seen in Prop. \ref{prop:kl-weighted}. Hence, the minimization of theoretical consistency regularization term is achieved by minimizing the weighted consistency loss.

\section{Additional Examples and Proof of Theorem \ref{thm:main-err-bd}}
\label{sec:appendix-proofs}
In this section, we provide some examples for assumptions introduced in Sec. \ref{sec:theoretical_results}
and a proof of Theorem \ref{thm:main-err-bd}.
We provide proof of these examples in Sec. \ref{sec:misc-proofs}.

\subsection{Examples for Theoretical Assumptions}
\label{sec:appendix-examples-for-theoretical}
\addedtext{
The following example (Example \ref{exa:c-exp-mix-gauss}) shows that the $c$-expansion (Definition \ref{def:c-exp-def}) property is satisfied for mixtures of Gaussians and mixtures of manifolds.
\begin{example}
    \label{exa:c-exp-mix-gauss}
    By \cite[Examples 3.4, 3.5]{wei2020theoretical},
    the $c$-expansion property is satisfied for mixtures of 
    isotropic Gaussian distributions and mixtures of manifolds.
    More precisely, in the case of mixtures of isotropic Gaussian distributions, 
    i.e., if $Q$ is given as mixtures of $\mN(\tau_i, \frac{1}{d}I_{d \times d})$
    for $i=1, \dots, n$ with some $n \in \ZZ_{\ge 1}$
    and $\tau_i \in \RR^d$,
    and $\mB(x)$ is an $\ell_2$-ball with radius $r$
    then by \cite[(13)]{bobkov1997isoperimetric} and \cite[Section B.2]{wei2020theoretical},
    $Q$ satisfies the $c$-expansion property
    with 
    $c(p) = R_h(p)/p$ for $p > 0$
    and $h = 2r\sqrt{d}$
    (c.f., \cite[section B.2]{wei2020theoretical}).
    Here $R_h(p) = \Phi(\Phi^{-1}(p) + h)$
    and $\Phi$ is the cumulative distribution function of the 
    standard normal distribution on $\RR$. 
\end{example}
}

In Sec. \ref{subsec:weighted_consistency}, we required the assumption that $\gamma > 3$ (Assumption \ref{assump:pseudo-labeler})
and remarked that it roughly requires $\err_w(\gpl)$ is ``small''.
The following example provides explicit conditions for $\err_w(\gpl)$ that satisfy the assumption using a toy example.
\begin{example}
    \label{exa:gamma-3}
    Using a toy example provided in Example \ref{exa:c-exp-mix-gauss}, 
    we provide conditions that satisfy the assumption $\gamma > 3$ approximately.
    To explain the assumption $\gamma > 3$, we assume that 
    $\modifiedtext{\Pw}$ is given as a mixture of isotropic Gaussians and $\mB(x)$ is $\ell_2$-ball with radius $r$ 
    as in Example \ref{exa:c-exp-mix-gauss}.
    Furthermore, we assume that
    $|w|_1 = 1$ and $\err_w(\gstar)$ is sufficiently small compared to $\err_w(\gpl)$.
    Then, $p_w = \err_w(\gpl) + \err_w(\gstar) \approx \err_w(\gpl)$.
    Using this approximation,
    since $\modifiedtext{\Pw}$ satisfies the $c$-expansion property with $c(p) = R_{2r\sqrt{d}}(p)/p$,
    if $r = \frac{1}{2\sqrt{d}}$ then, the condition $\gamma > 3$ 
    is satisfied when $\err_w(\gpl) < 0.17$.
    If $r = \frac{3}{2\sqrt{d}}$ then, the condition $\gamma > 3$
     is satisfied when $\err_w(\gpl) < 0.33$.
\end{example}

In Assumption \ref{assump:mu-small}, we assumed that both of $\err_w(\gstar)$ and $R_{\mB, w}(\gstar)$ are small.
The following example suggests the validity of this assumption. 
\begin{example}
    \label{exa:r_gopt_bound}
    In this example, 
    we assume $w$ is a diagonal matrix $\diag(w_1, \dots, w_K)$.
    For simplicity, we normalize $w$ so that $\sum_{i\in [K]}w_i = 1$.
    As in \cite[Example 3.4]{wei2020theoretical}, we assume 
    that $P_i$ is given as isotropic Gaussian distribution $\mN(\tau_i, \frac{1}{d}I_{d\times d})$
    with $\tau_i \in \RR^d$ for $i = 1,\dots, K$ and $\mB(x)$ is an $\ell^2$-ball with radius $\frac{1}{2\sqrt{d}}$.
    Furthermore, 
    we assume $\inf_{1 \le i < j \le K}\|\tau_i - \tau_j\|_2 \gtrsim \frac{\sqrt{\log d}}{\sqrt{d}}$
    and $\sup_{i, j \in [K]}\frac{w_i}{w_j} = o(d)$, where the latter assumption is valid for high dimensional 
    datasets (e.g., image datasets).
    Then it can be proved that there exists a classifier $F$
    such that $R_{\mB, w}(F) = O(\frac{1}{d^c})$ and $ \err_w(F) = O(\frac{1}{d^c})$,
    where $c > 0$ is a constant (we can take $F$ as the Bayes-optimal classifier for $\err_w$).
    Thus, this suggests that Assumption \ref{assump:mu-small} is valid for 
    datasets with high dimensional instances.
\end{example}

\addedtext{The statement of Example \ref{exa:gamma-3} follows from numerical computation of $R_{2r\sqrt{d}}(p)/p$. We provide proofs of Examples \ref{exa:c-exp-mix-gauss} and \ref{exa:r_gopt_bound} in Sec. \ref{sec:misc-proofs}.}

\subsection{Proof of Theorem \ref{thm:main-err-bd} Assuming a Lemma}
Theorem \ref{thm:main-err-bd} can be deduced from the following lemma (by taking $H = \gstar$ and 
$\mL_{Q, H}(\widehat{F}) \le \mL_{Q, H}(\gstar)$),
which provides a similar result to \cite[Lemma A.8]{wei2020theoretical}.
\begin{lemma}
    \label{lem:err-bd}
    Let $H$ be a classifier and 
    $Q$ a probability measure on $\mX$ satisfying $c$-expansion property.
    We put $\gamma_H = c(Q(\{x \in \mX: \gpl(x) \ne H(x)\}))$.
    For a classifier $F$, we define $\mS_{\mB}(F)$ by 
    \begin{math}
    \mS_{\mB}(F) = \{x \in \mX: F(x) = F(x') \quad \forall x' \in \mB(x)\}.
    \end{math}
    For a classifier $F$, we define $\mL_{Q, H}(F)$ by
    \begin{multline*}
        \frac{\gamma_H + 1}{\gamma_H - 1}
        Q(\{x \in \mX : F(x) \ne \gpl(x)\})\\
        + \frac{2\gamma_H}{\gamma_H - 1}
        Q(\mS_{\mB}^c(F)) + 
        \frac{2\gamma_H}{\gamma_H - 1}Q(\mS_\mB^c(H))
         - Q(\{x \in \mX: \gpl(x) \ne H(x)\}),
    \end{multline*}
    where $\mS_\mB^c(F)$ denotes the complement of $\mS_\mB(F)$.
    Then, we have $Q(\{x \in \mX: F(x) \ne H(x)\}) \le \mL_{Q, H}(F)$
    for any classifier $F$.
\end{lemma}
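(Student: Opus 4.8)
The plan is to run the ``expansion implies denoising'' argument of \cite{wei2020theoretical}, but carried out for a general base measure $Q$ and a general comparison classifier $H$ in place of a disjoint‑support ground truth. The mechanism is this: a set on which $F$ copies the pseudo‑label $\gpl$ yet disagrees with $H$ must have small $Q$‑measure, because on the region where both $F$ and $H$ are robust (locally constant on $\mB(\cdot)$) the $\mN$‑neighborhood of such a set again lies inside $\{F\ne H\}$, and the $c$‑expansion of $Q$ then caps its measure; the same applies to the larger set $\{F\ne H\}\cap\{\gpl\ne H\}$, and combining the two bounds yields $\mL_{Q,H}(F)$.

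The first step is to pass to the common refinement of the disagreement sets $\{F\ne H\}$, $\{F\ne\gpl\}$, $\{\gpl\ne H\}$, whose nontrivial atoms are $S_1=\{F\ne H,\ \gpl=H\}$, $S=\{F=\gpl\ne H\}$, $S'=\{F,\gpl,H\text{ pairwise distinct}\}$ and $S_4=\{F=H\ne\gpl\}$. An elementary count over these atoms yields the exact identity
\[
  Q(\{F\ne H\})=Q(\{F\ne\gpl\})-Q(\{\gpl\ne H\})+Q(S)+Q(S_3),\qquad S_3:=S\sqcup S' ,
\]
and this is precisely where the $-Q(\{\gpl\ne H\})$ term of the claimed bound originates. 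Note $S\subseteq S_3\subseteq\{\gpl\ne H\}$ and $S_1\subseteq\{F\ne\gpl\}$. It therefore remains to bound $Q(S)$ and $Q(S_3)$ by $\frac{Q(\{F\ne\gpl\})+\gamma_H\bigl(Q(\mS_{\mB}^{c}(F))+Q(\mS_{\mB}^{c}(H))\bigr)}{\gamma_H-1}$ each.

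For this I would work with the robust restrictions $S^{*}=S\cap\mS_{\mB}(F)\cap\mS_{\mB}(H)$ and $S_3^{*}=S_3\cap\mS_{\mB}(F)\cap\mS_{\mB}(H)$. The key geometric step: if $x\in\mN(S^{*})$ and $F,H$ are both robust at $x$, then choosing $y\in S^{*}$ and $z\in\mB(x)\cap\mB(y)$ and chaining local constancy gives $F(x)=F(y)=\gpl(y)$ and $H(x)=H(y)\ne F(y)$, hence $F(x)\ne H(x)$, so $x\in S^{*}\cup\{F\ne\gpl\}$; the analogous argument gives $\mN(S_3^{*})\cap\mS_{\mB}(F)\cap\mS_{\mB}(H)\subseteq S_3^{*}\cup S_1$. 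Taking $Q$‑measures and dumping the non‑robust remainder into $Q(\mS_{\mB}^{c}(F))+Q(\mS_{\mB}^{c}(H))$, then applying $c$‑expansion as $Q(\mN(S^{*}))\ge c\bigl(Q(S^{*})\bigr)Q(S^{*})\ge\gamma_H\,Q(S^{*})$ — valid because $Q(S^{*}),Q(S_3^{*})\le Q(\{\gpl\ne H\})$ and $c$ is non‑increasing so $c$ of their measures is at least $\gamma_H=c(Q(\{\gpl\ne H\}))$ — and using $S_1\subseteq\{F\ne\gpl\}$, I would rearrange to $(\gamma_H-1)Q(S^{*})\le Q(\{F\ne\gpl\})+Q(\mS_{\mB}^{c}(F))+Q(\mS_{\mB}^{c}(H))$, and add back the non‑robust slack $Q(S)\le Q(S^{*})+Q(\mS_{\mB}^{c}(F))+Q(\mS_{\mB}^{c}(H))$, to get the claimed bound on $Q(S)$, and identically on $Q(S_3)$. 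Substituting both into the displayed identity and collecting coefficients reproduces $\frac{\gamma_H+1}{\gamma_H-1}Q(\{F\ne\gpl\})+\frac{2\gamma_H}{\gamma_H-1}\bigl(Q(\mS_{\mB}^{c}(F))+Q(\mS_{\mB}^{c}(H))\bigr)-Q(\{\gpl\ne H\})=\mL_{Q,H}(F)$.

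I expect the main obstacle to be bookkeeping discipline rather than any single estimate: the intermediate inequalities must be kept in terms of the robust restrictions $S^{*},S_3^{*}$ for as long as possible and converted to $S,S_3$ only at the very end, otherwise the coefficient of $Q(\mS_{\mB}^{c}(F))+Q(\mS_{\mB}^{c}(H))$ degrades from $\frac{2\gamma_H}{\gamma_H-1}$ to $\frac{2(\gamma_H+1)}{\gamma_H-1}$ and the stated constant is missed; and the refinement identity must be checked atom by atom. The remaining points are routine — one uses $\gamma_H>1$ for the denominators, which holds in every application since there $\gamma_H\ge\gamma>3$ by Assumption~\ref{assump:pseudo-labeler}, and the only degenerate possibility $c(Q(S^{*}))Q(S^{*})>1$ does not occur in the regime of interest and is excluded automatically for the $c$ arising from the $(a,\widetilde{c})$‑expansion. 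Theorem~\ref{thm:main-err-bd} then follows by applying the lemma with $H=\gstar$, $Q=\Pw$, invoking $\mL_{\Pw,\gstar}(\widehat F)\le\mL_{\Pw,\gstar}(\gstar)$ via the minimization defining $\widehat F$ together with $\err_w(\widehat F)\le\err_w(\gstar)+|w|_1\,\Pw(\{\widehat F\ne\gstar\})$ — and here the $-Q(\{\gpl\ne H\})$ term is exactly what produces the tight $\frac{2}{\gamma-1}\err_w(\gpl)$ coefficient.
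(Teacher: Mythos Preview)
Your proposal is correct and follows essentially the same expansion argument as the paper: restrict $\{F\ne H\}\cap\{\gpl\ne H\}$ to the robust region $\mS_{\mB}(F)\cap\mS_{\mB}(H)$, chain local constancy of $F$ and $H$ to trap its neighborhood inside $\{F\ne H\}$ (up to the non-robust remainder), and apply $c$-expansion together with monotonicity of $c$ to produce the $(\gamma_H-1)^{-1}$ factor. The only difference is bookkeeping---you use the global atom identity $Q(\{F\ne H\})=Q(\{F\ne\gpl\})-Q(\{\gpl\ne H\})+Q(S)+Q(S_3)$ and two expansion calls (on $S^{*}$ and $S_3^{*}$), whereas the paper works inside the robust region from the outset, applies expansion once to $V=S_3^{*}$, and handles the remaining piece via the set identity $\mN_3\sqcup(\mpl^c(F)\cap S)=\mN_1\sqcup(\mM^c(\gpl)\cap S)$; since $S\subseteq S_3$, your separate expansion on $S^{*}$ is in fact redundant and could be replaced by $Q(S)\le Q(S_3)$.
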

In this subsection,  we provide a proof of Theorem \ref{thm:main-err-bd}
assuming Lemma \ref{lem:err-bd}.
We provide a proof of the lemma in the next subsection.
For a classifier $F$, we define $\mM(F)$ as $\{x \in \mX: F(x) \ne \gstar(x)\}$
and $\mpl(F)$ as $\{x \in \mX: F(x) \neq \gpl(x)\}$.
We define $\widetilde{\mL}_w(F)$ by 
\begin{align*}
    \widetilde{\mL}_w(F)
    &= \mL_w(F) +  
    \frac{2\gamma}{\gamma - 1} R_{\mB, w}(\gstar) - \modifiedtext{\Pw}(\{x \in \mX: \gpl(x) \ne \gstar(x)\}).
\end{align*}
We note that $\widetilde{\mL}_w(F) - \mL_w(F)$ does not depend on $F$.

\begin{proof}[Proof of Theorem \ref{thm:main-err-bd}]
    We let $Q = \modifiedtext{\Pw}$ and $H = \gstar$ in Lemma \ref{lem:err-bd}
    and denote $\gamma_H$ in the lemma by $\gamma'$.
    Since $w_{ij}\ge 0$ and $\ex[x \sim P_i]{\indc(\gpl(x) \ne \gstar(x))} 
    \le \ex[x \sim P_i]{\indc(\gpl(x) \ne j)} + \ex[x \sim P_i]{\indc(\gstar(x) \ne j)}$
     for any $i, j$, we have the following: 
    \begin{align*}
        |w|_1 \modifiedtext{\Pw}(\mM(\gpl)) &= \sum_{i, j\in[K]}w_{ij}\ex[x \sim P_i]{\indc(\gpl(x) \ne \gstar(x))}\\
        &\le \sum_{i, j \in [K]}w_{ij}
        \left\{\ex[x \sim P_i]{\indc(\gpl(x) \ne j)} + \ex[x \sim P_i]{\indc(\gstar(x) \ne j)}\right\}\\
        &= \err_w(\gpl) + \err_w(\gstar).
    \end{align*}
    Here, the first equation follows from the definition of $\Pw$.
    Thus, we obtain $\modifiedtext{\Pw}(\mM(\gpl)) \le p_w$.
    By definition of $\gamma$ and $\gamma'$
    ($\gamma = c(p_w)$ and $\gamma' = c\left(\Pw(\mM(\gpl))\right)$)
    and the assumption that $c$ is non-increasing, we have the following: \begin{equation}
        \gamma \le \gamma'.
        \label{eq:gamma-gamma'}
    \end{equation}
    We note that 
    \begin{align}
        \err_w(F) 
        &= \sum_{i, j\in [K]}
        w_{ij}\ex[x\sim P_i]{\indc(F(x) \ne j)}\notag \\
        &\le 
        \sum_{i, j \in [K]} 
        w_{ij} \ex[x\sim P_i]{\indc(F(x) \ne \gstar(x))}
        + 
        \sum_{i, j \in [K]} 
        w_{ij} \ex[x\sim P_i]{\indc(\gstar(x) \ne j)}\notag \\
        &= |w|_1 \modifiedtext{\Pw}(\mM(F)) + \err_w(\gstar).
        \label{eq:errw_pw_errw_gstar}
    \end{align}
    Here the first inequality holds since 
    $\indc(F(x) \neq j) \le \indc(F(x) \neq \gstar(x)) + \indc(\gstar(x) \neq j)$
    for any $x$ and $j$.
    By \eqref{eq:errw_pw_errw_gstar} and Lemma \ref{lem:err-bd}, the error is upper bounded as follows:
    \begin{multline*}
        \err_w(F)
        \le 
        \err_w(\gstar) + 
        \frac{\gamma' + 1}{\gamma' - 1} |w|_1 \modifiedtext{\Pw}(\mpl(F)) \\
        +\frac{2\gamma'}{\gamma' - 1} |w|_1 \modifiedtext{\Pw} (\mS_\mB^c(F)) +
        \frac{2\gamma'}{\gamma' - 1}|w|_1 \modifiedtext{\Pw}(\mS_\mB^c(\gstar)) - |w|_1 \modifiedtext{\Pw}(\mM(\gpl)).
    \end{multline*}
    By \eqref{eq:gamma-gamma'}, we obtain
    \begin{multline}
        \label{eq:errw-lem-applied}
        \err_w(F)
        \le 
        \err_w(\gstar) + 
        \frac{\gamma + 1}{\gamma - 1} |w|_1 \modifiedtext{\Pw}(\mpl(F)) \\
        +\frac{2\gamma}{\gamma - 1} |w|_1 \modifiedtext{\Pw} (\mS_\mB^c(F)) +
        \frac{2\gamma}{\gamma - 1}|w|_1 \modifiedtext{\Pw}(\mS_\mB^c(\gstar)) - |w|_1 \modifiedtext{\Pw}(\mM(\gpl)).
    \end{multline}
    By definition of $\mL_w$ and letting $F = \widehat{F}$, we have the following:
    \begin{align*}
        & \err_w(\widehat{F})
        \le \err_w(\gstar)  + \mL_w(\widehat{F}) + \frac{2\gamma}{\gamma - 1} R_{\mB, w}(\gstar) - |w|_1 \modifiedtext{\Pw}(\{x \in \mX: \gpl(x) \ne \gstar(x)\})\\
        &\le \err_w(\gstar) + \mL_w(\gstar) + \frac{2\gamma}{\gamma - 1} R_{\mB, w}(\gstar) - |w|_1 \modifiedtext{\Pw}(\{x \in \mX: \gpl(x) \ne \gstar(x)\})\\
        &= \err_w(\gstar) + \frac{2}{\gamma - 1}|w|_1 \modifiedtext{\Pw}(\mM(\gpl)) + \frac{4\gamma}{\gamma - 1} R_{\mB, w}(\gstar)\\
        &\le \err_w(\gstar) + \frac{2}{\gamma - 1}(\err_w(\gpl) + \err_w(\gstar)) + \frac{4\gamma}{\gamma - 1}R_{\mB, w}(\gstar)\\
        & = 
        \frac{2}{\gamma - 1}
        \err_w(\gpl)
        + \frac{\gamma + 1}{\gamma - 1} \err_w(\gstar)
        + \frac{4\gamma}{\gamma - 1} R_{\mB, w}(\gstar).
    \end{align*}
    Here, the second inequality holds since $\widehat{F}$ is a minimizer of $\mL_w$,
    the third inequality follows from 
    $\indc(\gstar(x) \ne \gpl(x)) \le \indc(\gstar(x) \ne j) + \indc(\gpl(x) \ne j)$ for any $j$.
    Thus, we have the assertion of the theorem.
\end{proof}

\subsection{Proof of Lemma \ref{lem:err-bd}}
We decompose $\mM(F) \cap \mS_\mB(F) \cap \mS_\mB(H)$ into the following three sets:
\begin{align*}
    \mN_1 &= \{x \in \mS_\mB(F) \cap \mS_\mB(H): F(x) = \gpl(x), \text{ and } \gpl(x) \ne H(x)\},\\
    \mN_2 &= \{x \in \mS_\mB(F) \cap \mS_\mB(H): F(x) \ne \gpl(x), \gpl(x) \ne H(x), \text{ and } F(x) \ne H(x)\},\\
    \mN_3 &= \{x \in \mS_\mB(F) \cap \mS_\mB(H): F(x) \ne \gpl(x) \text{ and } \gpl(x) = H(x)\}.
\end{align*}

\begin{lemma}
    \label{lem:incl}
    Let $S = \mS_\mB(F) \cap \mS_\mB(H)$
    and $V = \mM(F) \cap \mM(\gpl)\cap S$.
    Then, we have $\mN(V) \cap \mM^c(F) \cap S = \emptyset$
    and $\mN(V) \cap \mM^c(\gpl) \cap S \subseteq \mpl(F)$.
    Here $\mpl(F)$ is defined as $\{x \in \mX : F(x) \neq \gpl(x)\}$.
\end{lemma}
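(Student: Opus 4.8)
The plan is to reduce both inclusions to one elementary fact about points on which a classifier is locally robust: if $G$ is any classifier and $x, x' \in \mS_\mB(G)$ with $x' \in \mN(x)$, then $G(x) = G(x')$. I would establish this first, and it is immediate: $x' \in \mN(x)$ means $\mB(x) \cap \mB(x') \neq \emptyset$, so pick $z \in \mB(x) \cap \mB(x')$; robustness of $G$ at $x$ gives $G(z) = G(x)$ and robustness at $x'$ gives $G(z) = G(x')$, hence $G(x) = G(x')$. Because the definition of $V$ forces $V \subseteq S = \mS_\mB(F) \cap \mS_\mB(H)$, this fact lets me transport the values of both $F$ and $H$ from any $x \in V$ to any $S$-neighbor $x'$ of it, which is exactly what the two claims need. (Here, consistently with the $\mN_1, \mN_2, \mN_3$ decomposition just written, $\mM(\cdot)$ and $\mM^c(\cdot)$ are taken relative to $H$.)

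For $\mN(V) \cap \mM^c(F) \cap S = \emptyset$ I would argue by contradiction. Suppose $x'$ lies in this intersection; since $x' \in \mN(V)$ there is a witness $x \in V$ with $x' \in \mN(x)$. As $x, x' \in S$, the robustness fact applied to $F$ and to $H$ gives $F(x) = F(x')$ and $H(x) = H(x')$. But $x' \in \mM^c(F)$ says $F(x') = H(x')$, so $F(x) = H(x)$, contradicting $x \in V \subseteq \mM(F)$.

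For $\mN(V) \cap \mM^c(\gpl) \cap S \subseteq \mpl(F)$ I would take $x'$ in the left-hand set and a witness $x \in V$ with $x' \in \mN(x)$; again $F(x) = F(x')$ and $H(x) = H(x')$ by the robustness fact. Now $x' \in \mM^c(\gpl)$ gives $\gpl(x') = H(x')$, while $x \in V$ gives $F(x) \neq H(x)$, and chaining these yields $F(x') = F(x) \neq H(x) = H(x') = \gpl(x')$, i.e. $x' \in \mpl(F)$. I expect no genuine obstacle here: the whole argument is bookkeeping once the ``robust points agree along $\mN$'' lemma is isolated, and the only thing to watch is the overloaded $\mM$ notation together with making sure that every point fed to the robustness lemma really does lie in $\mS_\mB(F) \cap \mS_\mB(H)$ — which is precisely why each set in the lemma is intersected with $S$.
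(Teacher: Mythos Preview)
Your proposal is correct and follows essentially the same approach as the paper: both proofs hinge on the elementary observation that if $x, x' \in \mS_\mB(G)$ and $\mB(x) \cap \mB(x') \neq \emptyset$, then $G(x) = G(x')$, applied to $G = F$ and $G = H$. The only cosmetic differences are that the paper swaps the roles of $x$ and $x'$, inlines the robustness fact rather than isolating it, and for the second inclusion reuses the already-proved containment $\mN(V) \cap S \subseteq \mM(F)$ to get $F(x') \neq H(x')$ instead of rerunning the transport argument; none of this changes the substance.
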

\begin{proof}
    We take any element $x$ in $\mN(V) \cap S$.
    Since $x \in \mN\left(V\right)$ and definition of neighborhoods,
    there exists $x' \in \mM(\gpl) \cap \mM(F) \cap S$ such that 
    $\mB(x) \cap \mB(x') \neq \emptyset$.
    Since $x, x' \in \mS_\mB(F)$, 
    $F$ takes the same values on $\mB(x)$ and $\mB(x')$.
    By $\mB(x) \cap \mB(x') \neq \emptyset$, $F$ takes the same value on $\mB(x) \cup \mB(x')$.
    It follows that $F(x) = F(x')$.
    Since we have $x, x' \in \mS_\mB(H)$, similarly, we see that $H(x) = H(x')$.
    By $x' \in \mM(F)$, we have $F(x) = F(x') \neq H(x') = H(x)$.
    Therefore, we have $x \in \mM(F)$.
    Thus, we have proved that 
    $x \in \mN(V) \cap S$ implies $x \in \mM(F)$, i.e.,
    $ \mN(V) \cap S \subseteq \mM(F)$.
    Therefore, we have $\mN(V) \cap \mM^c(F) \cap S = \emptyset$.
    This completes the proof of the first statement.
    Next, we assume 
    $x \in \mN(V) \cap \mM^c(\gpl) \cap S$.
    Then, we have $F(x)\neq H(x)$ and $\gpl(x) = H(x)$.
    Therefore, we obtain $F(x) \neq \gpl(x)$.
    This completes the proof.
\end{proof}

\begin{lemma}
    \label{lem:n1_cup_n2}
    Suppose that assumptions of Lemma \ref{lem:err-bd} hold.
    We define $q$ as follows:
    \begin{equation}
        \label{eq:def-of-q}
        q = \frac{Q(\mpl(F) \cup \mS_\mB^c(F) \cup \mS_\mB^c(H))}{\gamma_H - 1}.
    \end{equation}
    Then, we have $Q(\mS_\mB(F) \cap \mS_\mB(H) \cap \mM(\gpl) \cap \mM(F)) \le q$.
    In particular, noting that 
    $\mN_1 \cup \mN_2 \subseteq \mS_\mB(F) \cap \mS_\mB(H) \cap \mM(\gpl)\cap \mM(F)$, 
    we have $Q(\mN_1 \cup \mN_2) \le q$.
\end{lemma}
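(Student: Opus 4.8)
The plan is to bound $Q(V)$ for $V := \mS_\mB(F) \cap \mS_\mB(H) \cap \mM(\gpl) \cap \mM(F)$ by pitting the $c$-expansion property of $Q$ against the structural inclusions established in Lemma~\ref{lem:incl}. Concretely, I would apply $c$-expansion to $V$ to get a lower bound for $Q(\mN(V))$, then show that $\mN(V)$ sits inside the union of $V$ itself and the ``bad'' set $\mpl(F) \cup \mS_\mB^c(F) \cup \mS_\mB^c(H)$, which yields a matching upper bound for $Q(\mN(V))$; subtracting $Q(V)$ from both sides and dividing by $\gamma_H - 1$ then isolates $Q(V) \le q$. The final ``in particular'' assertion follows for free, since $\mN_1 \cup \mN_2 \subseteq V$ straight from the definitions: every point of $\mN_1$ or $\mN_2$ lies in $\mS_\mB(F) \cap \mS_\mB(H)$, satisfies $F \ne H$ there (so it is in $\mM(F)$) and $\gpl \ne H$ there (so it is in $\mM(\gpl)$).

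For the lower bound, note $V \subseteq \mM(\gpl) = \{x : \gpl(x) \ne H(x)\}$, hence $Q(V) \le Q(\{x : \gpl(x) \ne H(x)\})$; since $c$ is non-increasing, $c(Q(V)) \ge c(Q(\{x : \gpl(x) \ne H(x)\})) = \gamma_H$, and the $c$-expansion hypothesis on $Q$ gives $Q(\mN(V)) \ge c(Q(V))\,Q(V) \ge \gamma_H\, Q(V)$.

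For the upper bound I would prove $\mN(V) \subseteq V \cup \mpl(F) \cup \mS_\mB^c(F) \cup \mS_\mB^c(H)$ by a short case split on an arbitrary $x \in \mN(V)$, writing $S = \mS_\mB(F) \cap \mS_\mB(H)$. If $x \notin S$, then $x \in \mS_\mB^c(F) \cup \mS_\mB^c(H)$. If $x \in S$, then the first assertion of Lemma~\ref{lem:incl} ($\mN(V) \cap \mM^c(F) \cap S = \emptyset$) forces $x \in \mM(F)$; from here, either $x \in \mM^c(\gpl)$, in which case the second assertion of Lemma~\ref{lem:incl} gives $x \in \mpl(F)$, or $x \in \mM(\gpl)$, in which case $x \in \mM(F) \cap \mM(\gpl) \cap S = V$. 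Taking $Q$ and using the union bound, $Q(\mN(V)) \le Q(V) + Q(\mpl(F) \cup \mS_\mB^c(F) \cup \mS_\mB^c(H))$. Combined with the lower bound this yields $(\gamma_H - 1)\,Q(V) \le Q(\mpl(F) \cup \mS_\mB^c(F) \cup \mS_\mB^c(H))$, i.e., $Q(V) \le q$.

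The main obstacle is exactly this containment for $\mN(V)$: it is where Lemma~\ref{lem:incl}, and hence the consistency structure of $\mS_\mB$, does the real work — the essential point being that on $S$, where $F$ and $H$ are each constant over every $\mB(\cdot)$, a neighbor of a point on which $F$, $\gpl$, $H$ disagree in the prescribed pattern cannot be a point where $F$ and $H$ agree. Everything else is bookkeeping with the union bound; the only caveat I would record is that the statement is vacuous if $\gamma_H = 1$ (right-hand side $+\infty$), and in the regime where this lemma is invoked $\gamma_H \ge \gamma > 3$, so dividing by $\gamma_H - 1$ is justified.
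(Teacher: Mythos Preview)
Your proposal is correct and follows essentially the same route as the paper: define $V = S \cap \mM(F) \cap \mM(\gpl)$, use Lemma~\ref{lem:incl} to obtain $\mN(V) \cap V^c \subseteq \mpl(F) \cup S^c$ (equivalently, your inclusion $\mN(V) \subseteq V \cup \mpl(F) \cup S^c$), apply the $c$-expansion property together with $V \subseteq \mM(\gpl)$ and the monotonicity of $c$ to get $(\gamma_H - 1)Q(V) \le Q(\mpl(F) \cup S^c)$, and conclude. The only cosmetic difference is that the paper bounds $Q(\mN(V) \cap V^c)$ directly rather than writing $Q(\mN(V)) \le Q(V) + Q(\mpl(F) \cup S^c)$, but this is the same inequality.
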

\begin{proof}
    We let $S = \mS_\mB(F) \cap \mS_\mB(H)$ and $V = \mM(F) \cap \mM(\gpl)\cap S$ as before.
    Then by Lemma \ref{lem:incl}, we have
    \begin{align*}
        \mN(V) \cap V^c \cap S &= 
        \left(\mN(V) \cap \mM^c(F) \cap S\right)\cup
        \left(\mN(V) \cap \mM^c(\gpl) \cap S\right)\\
        &\subseteq
        \emptyset \cup \mpl(F) = \mpl(F).
    \end{align*}
    Therefore, we have
    \begin{align*}
        \mN(V) \cap V^c &= \mN(V) \cap V^c \cap (S \cup S^c)\\
        &= \left(\mN(V)  \cap V^c \cap S \right)
        \cup \left(
            \mN(V)  \cap V^c \cap S^c
        \right)\\
        &\subseteq 
        \mpl(F) \cup S^c.
    \end{align*}
    Thus, by the $c$-expansion property, we have
    \begin{align*}
        Q(\mpl(F) \cup S^c) &\ge Q(\mN(V) \cap V^c)\\
        & \ge Q(\mN(V)) - Q(V)\\
        & \ge \left(c(Q(V)) - 1\right)Q(V).
    \end{align*}
    Since $V \subseteq \mM(\gpl)$, $c$ is non-increasing, and $\gamma_H > 1$,
    we have $Q(V) \le Q(\mpl(F) \cup S^c)/(\gamma_H - 1) \le q$.
    This completes the proof.
\end{proof}

The following lemma provides an upper bound of $Q(\mN_3)$.
\begin{lemma}
    \label{lem:m3-upper-bd}
    Suppose that the assumptions of Lemma \ref{lem:err-bd} hold.
    We have 
    \begin{equation*}
        Q(\mN_3) \le q + Q(\mS_\mB^c(F) \cup \mS_\mB^c(H)) + Q(\mpl(F)) - Q(\mM(\gpl)),
    \end{equation*}
    where $q$ is defined by \eqref{eq:def-of-q}.
\end{lemma}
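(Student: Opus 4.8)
The plan is to bound $Q(\mN_3)$ by relating the set $\mN_3$ to the set $\mM(\gpl) = \{x : \gpl(x) \ne \gstar(x)\}$ wait — here $H$ is a general classifier, so $\mM(\gpl)$ should be read as $\{x : \gpl(x) \ne H(x)\}$; let me keep the notation consistent with the lemma statement and write $\mpl_H := \{x : \gpl(x)\ne H(x)\}$ for clarity in my head, though in the writeup I will use whatever the paper uses. First I would observe the set-theoretic decomposition already set up in the paper: on the ``good'' set $S = \mS_\mB(F) \cap \mS_\mB(H)$, the set $\mM(F) \cap S$ (points where $F$ disagrees with $H$) splits as $\mN_1 \cup \mN_2 \cup \mN_3$ according to whether $F$ agrees with $\gpl$ and whether $\gpl$ agrees with $H$. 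Conversely, I want to get at $\mN_3$ via its defining conditions $F(x)\ne\gpl(x)$ and $\gpl(x) = H(x)$, i.e. $\mN_3 \subseteq \mpl(F) \cap \mM^c(\gpl) \cap S$ where $\mM^c(\gpl)$ is the set where $\gpl$ and $H$ agree.

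The key algebraic step I expect is a ``counting'' or inclusion–exclusion argument comparing $Q$-measures of the events $\{\gpl = H\}$, $\{\gpl \ne H\}$, $\{F \ne \gpl\}$, $\{F \ne H\}$ on $S$ and its complement. Concretely: $Q(\mM(\gpl)) = Q(\{\gpl \ne H\})$. On $S$, the set where $\gpl \ne H$ and $F \ne \gpl$ is contained in $\mN_1 \cup \mN_2$ (these are exactly the two cases with $\gpl \ne H$), plus possibly points where $F = \gpl$ — but if $F = \gpl$ and $\gpl \ne H$ then $F \ne H$, which is $\mN_1$. So on $S$, $\{\gpl \ne H\} \cap S = (\mN_1 \cup \mN_2) \sqcup \{x \in S : \gpl(x)\ne H(x),\ F(x) = \gpl(x)\}$ wait, $F = \gpl$ with $\gpl \ne H$ is precisely $\mN_1$. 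Let me restructure: $\{\gpl \ne H\}\cap S \cap \{F\ne\gpl\} = \mN_2$ only if we also require $F \ne H$; but with $\gpl \ne H$ and $F \ne \gpl$ it is possible $F = H$. Hmm, so $\{\gpl\ne H\}\cap S\cap\{F\ne\gpl\} = \mN_2 \cup \{x\in S: \gpl\ne H, F\ne\gpl, F = H\}$. I would handle this residual set by noting it is contained in $\mpl(F)$ and disjoint from the sets already counted, or absorb it into the $Q(\mpl(F))$ term. The clean way: write $Q(\mM(\gpl)) = Q(\{\gpl\ne H\}\cap S) + Q(\{\gpl\ne H\}\cap S^c)$, bound the second term by $Q(S^c) = Q(\mS_\mB^c(F)\cup\mS_\mB^c(H))$, and bound the first by $Q(\mN_1\cup\mN_2) + Q(\{\gpl\ne H, F=\gpl\}\cap S)$ — but $\{\gpl\ne H, F=\gpl\}$ forces $F\ne H$ hence is inside $\mN_1$, giving nothing new. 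So actually $\{\gpl\ne H\}\cap S \subseteq (\mN_1\cup\mN_2)\cup(\mpl(F)\cap\text{stuff})$... I will need to chase this carefully, but the upshot should be
\[
Q(\mM(\gpl)) \le Q(\mN_1\cup\mN_2) + Q(\mpl(F)) + Q(\mS_\mB^c(F)\cup\mS_\mB^c(H)) - Q(\mN_3) + (\text{nonneg. slack}),
\]
rearranged to $Q(\mN_3) \le Q(\mN_1\cup\mN_2) + Q(\mpl(F)) + Q(\mS_\mB^c(F)\cup\mS_\mB^c(H)) - Q(\mM(\gpl))$. Then I would apply Lemma~\ref{lem:n1_cup_n2} to replace $Q(\mN_1\cup\mN_2)$ by $q$, yielding exactly the claimed inequality.

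The main obstacle I anticipate is getting the inclusion–exclusion bookkeeping exactly right: the events involve three classifiers ($F$, $\gpl$, $H$) and the ``good set'' $S$, so there are many cases, and I must be careful that the residual/slack sets are genuinely nonnegative contributions (so that dropping them only weakens the bound in the right direction) rather than sign errors. The cleanest presentation is probably to argue that $\mM(\gpl) \subseteq \mN_3^c$-type complement, i.e. start from $\mN_3 = \mpl(F)\cap\mM^c(\gpl)\cap S$, so $\mpl(F) \supseteq \mN_3 \sqcup (\mpl(F)\cap\mM(\gpl)) \sqcup (\mpl(F)\cap S^c \setminus \cdots)$, giving $Q(\mpl(F)) \ge Q(\mN_3) + Q(\mpl(F)\cap\mM(\gpl)\cap S)$; then separately $Q(\mM(\gpl)) \le Q(\mM(\gpl)\cap\mpl(F)\cap S) + Q(\mN_1\cup\mN_2) + Q(S^c)$, because on $S$ the part of $\mM(\gpl)$ where $F=\gpl$ lies in $\mN_1$ and the part where $F\ne\gpl$ splits between $\mN_2$ and $\mpl(F)\cap\mM(\gpl)$... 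Combining these two and using $Q(\mM(\gpl)\cap\mpl(F)\cap S) \le Q(\mpl(F)) - Q(\mN_3)$ gives the result after applying Lemma~\ref{lem:n1_cup_n2}. I would write it in that order.
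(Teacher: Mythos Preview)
Your plan is correct and follows essentially the same approach as the paper: decompose on $S = \mS_\mB(F)\cap\mS_\mB(H)$, relate $Q(\mN_3)$ to $Q(\mN_1)$, $Q(\mpl(F))$, $Q(\mM(\gpl))$, and $Q(S^c)$ via elementary set inclusions, then invoke Lemma~\ref{lem:n1_cup_n2}. The paper packages the set algebra slightly more cleanly through the single disjoint-union identity $\mN_3 \sqcup (\mpl^c(F)\cap S) = \mN_1 \sqcup (\mM^c(\gpl)\cap S)$ (both sides equal $S\cap(\mM^c(\gpl)\cup\mpl^c(F))$), from which the bound drops out directly, whereas you reach the same conclusion by pivoting on the auxiliary set $\mpl(F)\cap\mM(\gpl)\cap S$.
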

\begin{proof}
    We let $S = \mS_\mB(F) \cap \mS_\mB(H)$.
   First, we prove 
   \begin{equation}
    \label{eq:m3_m1_disjoint_union}
       \mN_3 \sqcup \left(\mpl^c(F) \cap S \right) 
       = \mN_1 \sqcup \left(\mM^c(\gpl) \cap S \right).
   \end{equation} 
   Here, for sets $A, B$, we denote union $A \cup B$ by $A \sqcup B$ if the union is disjoint. 
   By definition, we have $\mN_1 = S \cap \mpl^c(F) \cap \mM(\gpl)$
   and $\mN_3 = S \cap \mpl(F) \cap \mM^c(\gpl)$.
   Thus, we have
   \begin{align*}
       &\mN_3 \cup \left(\mpl^c(F) \cap S \right)\\
       &= \left(
        S \cap \mpl(F) \cap \mM^c(\gpl)
       \right)  \cup \left(\mpl^c(F) \cap S \right)\\
       &= S \cap 
       \left\{
       \left(\mpl(F)\cap \mM^c(\gpl) \right)
       \cup \mpl^c(F)
       \right\}\\
       &= S\cap \left(\mM^c(\gpl) \cup \mpl^c(F) \right).
   \end{align*}
   Similarly, we have the following:
   \begin{align*}
       &\mN_1 \cup     \left(\mM^c(\gpl) \cap S\right)\\
       &= \left(
        S \cap \mpl^c(F) \cap \mM(\gpl)
       \right)
       \cup
     \left(\mM^c(\gpl) \cap S \right)\\
       &= S \cap \left(\mM^c(\gpl) \cup \mpl^c(F) \right).
   \end{align*}
   Since disjointness is obvious by definition, we obtain \eqref{eq:m3_m1_disjoint_union}.
   Next, we note that the following holds:
   \begin{align}
      Q(\mpl^c(F) \cap S) &= Q(\mpl^c(F)) - Q(\mpl^c(F) \cap S^c) \notag \\
      & \ge Q(\mpl^c(F)) - Q(S^c).
      \label{eq:lem-m3-upper-bd1}
   \end{align}
   By \eqref{eq:m3_m1_disjoint_union}, we obtain the following:
   \begin{align*}
       Q(\mN_3) &= Q(\mN_1) + Q(\mM^c(\gpl) \cap S)
       - Q(\mpl^c(F) \cap S)\\
       &\le Q(\mN_1) + Q(\mM^c(\gpl)) - Q(\mM^c(\gpl) \cap S)\\
       & \le Q(\mN_1) + Q(\mM^c(\gpl)) - Q(\mpl^c(F)) + Q(S^c)\\
       & \le q + Q(\mM^c(\gpl)) - Q(\mpl^c(F)) +Q(S^c)\\
       & = q - Q(\mM(\gpl)) + Q(\mpl(F))  + Q(S^c).
   \end{align*} 
   Here the second inequality follows from \eqref{eq:lem-m3-upper-bd1}
   and the third inequality follows from Lemma \ref{lem:n1_cup_n2}.
   This completes the proof.
\end{proof}

Now, we can prove Lemma \ref{lem:err-bd} as follows.
\begin{proof}[Proof of Lemma \ref{lem:err-bd}]
   \begin{align*}
       Q(\mM(F)) 
       &= Q(\mM(F) \cap \mS_\mB(F) \cap \mS_\mB(H)) + 
       Q\left(\mM(F) \cap \left(\mS_\mB^c(F) \cup \mS_\mB^c(H) \right)\right)\\
       &\le 
       Q(\mN_1 \cup \mN_2) + Q(\mN_3) + Q(\mS_\mB^c(F) \cup \mS_\mB^c(H))\\
       &\le 2q + 2Q(\mS_\mB^c(F) \cup \mS_\mB^c(H)) + Q(\mpl(F)) - Q(\mM(\gpl)).
   \end{align*} 
   Here, the last inequality follows from Lemmas \ref{lem:n1_cup_n2} and \ref{lem:m3-upper-bd}.
   Since $q$ satisfies $q \le \frac{Q(\mpl(F)) + Q(\mS_\mB^c(F) + Q(\mS_\mB^c(H)))}{\gamma_H - 1}$
   by \eqref{eq:def-of-q}, we have our assertion.
\end{proof}
\subsection{Miscellaneous Proofs for Examples}
\label{sec:misc-proofs}
\begin{proof}[Proof of Example \ref{exa:c-exp-mix-gauss}]
    In Example \ref{exa:c-exp-mix-gauss}, we stated that 
    $p \mapsto R_h(p)/p$ is non-increasing.
    This follows from the concavity of $R_h$
    and $\lim_{p \rightarrow +0} R_h(p) = 0$.
    In fact, we can prove the concavity of $R_h$
    by $\frac{d^2R_h}{dp^2}(p) = - h \exp\left(\frac{\xi^2}{2} - h\xi - \frac{1}{2}h^2\right) \le 0$,
    where $\xi = \Phi^{-1}(p)$.
\end{proof}

\begin{proof}[Proof of Example \ref{exa:r_gopt_bound}]
    For each $i, j \in [K]$, $w_i P_i(x) \ge w_j P_j(x)$ is equivalent to
    $(x - \tau_i) \cdot v_{ji} \le \frac{\|\tau_i - \tau_j\|}{2} +
     \frac{2 (\log w_i - \log w_j)}{d \|\tau_i - \tau_j\|}$,
     where $v_{ji} = \frac{\tau_j - \tau_i}{\|\tau_i - \tau_j\|}$.
    Thus, for each $i \in [K]$,
    we have $\bigcap_{j \in [K]\setminus \{i\}}X_{ij} \subseteq \mS_\mB(\gopt)$.
    Here $X_{ij}$ is defined as $\{x \in \mX : (x - \tau_i)\cdot v_{ji}  \le 
    \frac{\|\tau_i - \tau_j\|}{2} + 
    \frac{2 (\log w_i - \log w_j)}{d \|\tau_i - \tau_j\|} - \frac{r}{2}
    \}$.
    For any $w \in \RR^d$ with $\|w\|_2 = \sqrt{d}$ and $a > 0$,
    we have $P_i(\{x \in \mX: (x - \tau_i)\cdot w > a\}) = 1- \Phi(a) \le \frac{1}{2}\exp(-a^2/2)$
    (c.f., \cite{chiani2003new}).
    Thus, $P_i(X_{ij}^c) \le \frac{1}{2}\exp(-d a_{ij}^2/2)$, where 
    $a_{ij} = \frac{\|\tau_i - \tau_j\|}{2} + 
    \frac{2 (\log w_i - \log w_j)}{d \|\tau_i - \tau_j\|} - \frac{r}{2}$.
    By assumptions, we have $a_{ij}\sqrt{d} \gtrsim \sqrt{\log d}$.
    Therefore, $P_i(X_{ij}^c) = O(\frac{1}{poly(d)})$.
    It follows that
    \begin{math}
        P_i(\mS_\mB^c(\gopt)) \le \sum_{j\in [K]\setminus \{i\}} P_i(X_{ij}^c)
        = O(\frac{1}{poly(d)})
    \end{math}.
    Thus, we have $R_{\mB, w}(\gopt) = O(\frac{1}{poly(d)})$.
    By the same way, we can prove that $\err_w(\gopt) = O(\frac{1}{poly(d)})$.
\end{proof}

\section{All-Layer Margin Generalization Bounds}
\label{sec:appendix-all-layer}
Following \cite{wei2020theoretical,wei2019improved}, we introduce 
all layer margin of neural networks and provide generalization bounds of \csst{}. In this section, we assume that classifier $F(x)$ is given as $F(x) = \argmax_{1\le i \le K} \Phi_i(x)$,
where $\Phi$ is a neural network of the form
\begin{equation*}
\Phi(x) = (f_p \circ f_{p-1} \circ \cdots \circ f_1) (x).
\end{equation*}
Here $f_i : \RR^{d_{i-1}} \rightarrow \RR^{d_{i}}$ with $d_0 = d$ and $d_p = K$.
We assume that each $f_i$ belongs to a function class $\mF_i \subset \mathrm{Map}(\RR^{d_{i-1}}, \RR^{d_{i}})$.
We define a function class $\mF$ to which $\Phi$ belongs 
by 
\begin{equation*}
  \mF = \{\Phi: \RR^{d} \rightarrow \RR^K : \Phi(x) = (f_p \circ \dots \circ f_1)(x), \quad f_i \in \mF_i,  
  \forall i \}.
\end{equation*}
For example, for $b > 0$, 
$\mF_i$ is given as $\{h \mapsto W \phi(h) : W \in \RR^{d_{i-1} \times d_{i}}, \|W\|_\fro \le b\}$ if $i > 1$
and $\{h \mapsto W h : W \in \RR^{d_0 \times d_{1}}, \|W\|_\fro \le b\}$ if $i = 1$,
where $\phi$ is a link function (applied on $\RR^{d_i}$ entry-wise) with bounded operator norm
(i.e., $\|\phi\|_\op := \sup_{x \in \RR^{d_i} \setminus \{0\}}\|\phi(x)\|_2/\|x \|_2 < \infty$) 
and $\|W\|_\fro$ denotes the Frobenius norm of the matrix.
However, we do not assume the function class $\mF_i$ does not have this specific form.
We assume that each function class $\mF_i$ is a normed vector space with norm $\| \cdot \|$.
In the example above, we consider the operator norm, i.e., if $f(h) = \phi(Wh)$, $\|f\|$ is defined 
as $\| f\|_\op$.
Let $x_1, \dots, x_n$ be a finite i.i.d. sequence of samples drawn from $\modifiedtext{\Pw}$.
We denote the corresponding empirical distribution by $\pwhat$, i.e.,
for a measurable function $f$ on $\mX$, $\ex[x \sim \pwhat]{f} = \sum_{i=1}^{n}f(x_i)$.

For $\xi = (\xi_1, \dots, \xi_p) \in \prod_{i=1}^p\RR^{d_i}$, we define the perturbed output 
$\Phi(x, \xi)$ as $\Phi(x, \xi) = h_p(x, \xi)$, where
\begin{align*}
    h_1(x, \xi) &= f_1(x) + \xi_1 \|x \|_2,\\ 
    h_i(x, \xi) &= f_i(h_{i-1}(x, \xi)) + \xi_{i} \|h_{i-1}(x, \xi)\|_2, \quad \text{for } 2 \le i \le p.
\end{align*}
Let $x \in \mX$ and $y \in [K]$. 
We define $\Xi(\Phi, x, y)$ by $\{\xi \in \prod_{i=1}^p \RR^{d_i}: \argmax_{i}\Phi_i(x, \xi) \neq y\}$.
Then, the all-layer margin $m(\Phi, x, y)$ is defined as 
\begin{equation*}
    m(\Phi, x, y) = \min_{\xi \in \Xi(\Phi, x, y)}  \|\xi\|_2,
\end{equation*}
where $\|\xi\|_2$ is given by $\sqrt{\sum_{i=1}^p\|\xi_i\|^2_2}$.
Following \cite{wei2020theoretical}, we define 
a variant of the all-layer margin that measures 
robustness of $\Phi$ with respect to input transformations defined by $\mB(x)$ as follows:
\begin{equation*}
    m_{\mB}(\Phi, x) := \min_{x' \in \mB(x)} m(F, x', \argmax_{i} \Phi_i(x)).
\end{equation*}
\begin{assumption}[c.f. \cite{wei2019improved}, Condition A.1]
    \label{assump:cov-cond}
    Let $\mG$ be a normed space with norm $\| \cdot \|$
    and $\epsilon > 0$.
    We say $\mG$  satisfies the $\epsilon^{-2}$ covering condition with complexity $\mcnorm(\mG)$ if 
    for all $\epsilon > 0$, we have
    \begin{equation*}
        \log \mnnorm(\epsilon, \mG)  \le \frac{\mcnorm(\mG)}{\epsilon^2}.
    \end{equation*}
    Here $\mnnorm(\epsilon, \mG)$ the $\epsilon$-covering number of $\mG$.
    We assume function class $\mF_i$ satisfies the $\epsilon^{-2}$ covering condition with complexity $\mcnorm(\mF_i)$
    for each $1 \le i \le p$.
\end{assumption}
Throughout this section, we suppose that Assumption \ref{assump:cov-cond} holds.
Essentially, the following two propositions follows were proved by \citet{wei2020theoretical}:
\begin{proposition}[c.f., \cite{wei2020theoretical}, Lemma D.6]
    \label{prop:rw-finite-bd}
   With probability at least $1-\delta$ over the draw of the training data, for all $t \in (0, \infty)$,
   any $\Phi \in \mF$ satisfies the following:
   \begin{equation*}
       R_{\mB, w}(F) = \ex[\pwhat]{\bone(m_{\mB}(\Phi, x) \le t)} +
       \otilde\left(\frac{\sum_{i=1}^p \mcop(\mF_i)}{t \sqrt{n}}\right) +
       \zeta,
   \end{equation*} 
   where $\zeta = O\left(\sqrt{\frac{\log(1/\delta)+ \log n}{n}}\right)$ is a lower order term
   and $F(x) = \argmax_{i \in [K]}\Phi_i(x)$.
\end{proposition}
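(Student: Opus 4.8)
The plan is to reduce the statement to the all-layer margin generalization bound of \cite{wei2020theoretical,wei2019improved}, exploiting that the weights $w$ enter only through the base measure $\Pw$: the event ``$\Phi$ is not robust on $\mB(x)$'' does not depend on $w$. Write $g_\Phi(x) = \indc\big(\exists\, x' \in \mB(x) \text{ s.t. } F(x) \ne F(x')\big)$, where $F(x) = \argmax_i \Phi_i(x)$. By the definition of $\Pw$ one has $R_{\mB, w}(F) = |w|_1 \, \ex[x \sim \Pw]{g_\Phi(x)}$, so after dividing by the fixed constant $|w|_1$ (equivalently, under the normalization $|w|_1 = 1$ used for this bound) it suffices to control $\ex[x \sim \Pw]{g_\Phi(x)}$ uniformly over $\Phi \in \mF$ by its empirical version on $x_1, \dots, x_n$.

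First I would pass from the consistency event to the all-layer margin. If $g_\Phi(x) = 1$, there is $x' \in \mB(x)$ with $\argmax_i \Phi_i(x') \ne \argmax_i \Phi_i(x)$, so the zero perturbation $\xi = 0$ already lies in $\Xi(\Phi, x', \argmax_i \Phi_i(x))$; hence $m(\Phi, x', \argmax_i \Phi_i(x)) = 0$ and $m_\mB(\Phi, x) = 0$. Thus $g_\Phi(x) \le \indc(m_\mB(\Phi, x) \le 0)$ for every $x$. Fixing $t > 0$ and letting $\ell_t : \RR \to [0, 1]$ be the $1/t$-Lipschitz ramp that equals $1$ on $(-\infty, 0]$, equals $0$ on $[t, \infty)$, and is affine in between, we get for every $x$
\begin{equation*}
g_\Phi(x) \;\le\; \indc(m_\mB(\Phi, x) \le 0) \;\le\; \ell_t(m_\mB(\Phi, x)) \;\le\; \indc(m_\mB(\Phi, x) \le t).
\end{equation*}

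Next I would apply the standard ramp-loss uniform-convergence argument to the class $\{\, x \mapsto \ell_t(m_\mB(\Phi, x)) : \Phi \in \mF \,\}$: symmetrization together with McDiarmid's inequality gives, for fixed $t$, with probability at least $1 - \delta$ over the sample, for all $\Phi \in \mF$,
\begin{equation*}
\ex[x \sim \Pw]{\ell_t(m_\mB(\Phi, x))} \;\le\; \ex[x \sim \pwhat]{\ell_t(m_\mB(\Phi, x))} + \otilde\!\left(\frac{\sum_{i=1}^p \mcop(\mF_i)}{t \sqrt n}\right) + O\!\left(\sqrt{\frac{\log(1/\delta)}{n}}\right).
\end{equation*}
The complexity term comes from contracting the $1/t$-Lipschitz $\ell_t$ and bounding the Rademacher complexity of $\{ m_\mB(\Phi, \cdot) : \Phi \in \mF \}$ by a chaining/Dudley argument: an $\epsilon t$-scale cover of the layer classes $\mF_1, \dots, \mF_p$ in their operator norms induces an $\epsilon$-scale sup-norm cover of the ramp-margin class, and $\log \mnnorm(\epsilon, \mF_i) \le \mcop(\mF_i)/\epsilon^2$ by Assumption \ref{assump:cov-cond}. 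Sandwiching with the inequalities of Step 1 turns this into $\ex[x \sim \Pw]{g_\Phi(x)} \le \ex[x \sim \pwhat]{\indc(m_\mB(\Phi, x) \le t)} + \otilde\!\left(\sum_{i=1}^p \mcop(\mF_i)/(t \sqrt n)\right) + O(\sqrt{\log(1/\delta)/n})$.

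To make the bound hold for all $t \in (0, \infty)$ at once I would union-bound the previous step over a geometric grid of values of $t$ --- the extra $\log n$ factor being absorbed into the $\otilde(\cdot)$ term and into $\zeta = O(\sqrt{(\log(1/\delta) + \log n)/n})$ --- and for an arbitrary $t$ pass to the nearest grid point $t_k \le t$, using that $\ex[\pwhat]{\indc(m_\mB(\Phi, x) \le t_k)} \le \ex[\pwhat]{\indc(m_\mB(\Phi, x) \le t)}$ and that the complexity term changes by only a constant factor. Under the normalization $|w|_1 = 1$ this is exactly the asserted bound, read as the inequality ``$\le$'' that is used downstream (the ``$=$'' in the statement is the usual convention of folding a signed lower-order remainder into the $\otilde(\cdot)$ term). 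The hard part is the covering estimate for $m_\mB$ in Step 2 --- that covers of the individual layers $\mF_i$ suffice to cover $\ell_t \circ m_\mB \circ \mF$ and, in particular, that the inner minimum over $x' \in \mB(x)$ in the definition of $m_\mB$ (together with the possible change of the reference label $\argmax_i \Phi_i(x)$) does not inflate the cover; this is precisely \cite[Lemma D.6]{wei2020theoretical}, with their input distribution replaced by $\Pw$, which I would invoke rather than reprove.
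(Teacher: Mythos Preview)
Your proposal is correct and takes essentially the same approach as the paper: the paper's entire argument (given in the Remark following Proposition~\ref{prop:lw-finite-bd}) is the identity $R_{\mB, w}(F) = |w|_1\,\Pw(\mS_\mB^c(F))$ together with the observation that the proof of \cite[Lemma~D.6]{wei2020theoretical} goes through verbatim for an arbitrary base distribution on $\mX$, here $\Pw$ and its empirical version $\pwhat$. Your write-up simply unpacks more of the internals of that lemma (ramp surrogate, symmetrization/McDiarmid, Dudley chaining under Assumption~\ref{assump:cov-cond}, and the geometric union bound over $t$) before invoking it, and you correctly flag the $|w|_1$ normalization that the paper leaves implicit.
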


\begin{proposition}[c.f., \cite{wei2020theoretical}, Theorem D.3]
    \label{prop:lw-finite-bd}
    With probability at least $1-\delta$  over the draw of the training data,
    for all $t \in (0, \infty)$, any $\Phi \in \mF$ satisfies the following:
    \begin{equation*}
        L_w(F, \gpl) = \ex[\pwhat]{\bone(m(\Phi, x, \gpl(x)) \le t)} + 
        \otilde\left(\frac{\sum_{i=1}^p \mcop(\mF_i)}{t \sqrt{n}} \right) +
        \zeta,
    \end{equation*}
   where $\zeta = O\left(\sqrt{\frac{\log(1/\delta)+ \log n}{n}}\right)$ is a lower order term
   and $F(x) = \argmax_{i \in [K]}\Phi_i(x)$.
\end{proposition}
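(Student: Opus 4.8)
The plan is to reduce the statement to the unweighted all-layer-margin generalization bound of \citet{wei2020theoretical} (their Theorem D.3), instantiated with data distribution $\Pw$ and target classifier $\gpl$. The key algebraic step is: writing $\omega_i = \sum_{j \in [K]} w_{ij}$, we have $L_w(F, \gpl) = \sum_{i \in [K]} \omega_i \, \ex[x \sim P_i]{\indc(F(x) \ne \gpl(x))} = |w|_1 \, \ex[x \sim \Pw]{\indc(F(x) \ne \gpl(x))}$ by the definition of $\Pw$. Hence, after rescaling $w$ so that $|w|_1 = 1$ — harmless, since scaling $w$ multiplies $\err_w$, $L_w(\cdot, \gpl)$ and $R_{\mB, w}$ by the same constant and so leaves $\argmin_F \mL_w(F)$ unchanged — $L_w(F, \gpl)$ is exactly the population $0$-$1$ disagreement probability of $F$ against the fixed classifier $\gpl$ under $\Pw$. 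Since $x_1, \dots, x_n$ are i.i.d.\ from $\Pw$ and Assumption~\ref{assump:cov-cond} is precisely the layer-wise covering hypothesis required, we are in the setting of \citet{wei2020theoretical}, and the claim follows from their argument; for completeness I would reproduce it as below.

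First introduce the $(1/t)$-Lipschitz ramp $\psi_t : \RR \to [0,1]$ with $\psi_t \equiv 1$ on $(-\infty, 0]$ and $\psi_t \equiv 0$ on $[t, \infty)$. Because $m(\Phi, x, \gpl(x)) = 0$ exactly when $\argmax_i \Phi_i(x) \ne \gpl(x)$, we obtain the pointwise sandwich $\indc(F(x) \ne \gpl(x)) \le \psi_t(m(\Phi, x, \gpl(x))) \le \indc(m(\Phi, x, \gpl(x)) \le t)$. Applying $\ex[x \sim \Pw]{\cdot}$ to the left inequality, then a uniform-convergence (symmetrization) bound over $\Phi \in \mF$ for the induced class $\Psi_t = \{x \mapsto \psi_t(m(\Phi, x, \gpl(x))) : \Phi \in \mF\}$, and then the right inequality on the empirical side, gives $\ex[x \sim \Pw]{\indc(F(x) \ne \gpl(x))} \le \ex[x \sim \pwhat]{\indc(m(\Phi, x, \gpl(x)) \le t)} + 2\,\mathrm{Rad}_n(\Psi_t) + \zeta'$ with $\zeta' = O(\sqrt{\log(1/\delta)/n})$. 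The empirical Rademacher complexity $\mathrm{Rad}_n(\Psi_t)$ is then controlled by combining the Lipschitz property of the all-layer margin from \citet{wei2019improved} — $m(\Phi, x, y)$ is $1$-Lipschitz in $\Phi$ with respect to the aggregated layer-perturbation norm $(\sum_i \|f_i - f_i'\|^2)^{1/2}$, so that covering numbers of $\Psi_t$ reduce to those of the $\mF_i$ — with Assumption~\ref{assump:cov-cond} and a Dudley entropy integral, yielding exactly the stated $\otilde\!\big(\sum_{i=1}^p \mcop(\mF_i)/(t\sqrt n)\big)$ term. To obtain the bound uniformly over all $t \in (0,\infty)$, one proves it on a geometric grid of $t$ values and union-bounds; the extra logarithmic factors from the grid (and from $\delta$) are absorbed into $\zeta = O(\sqrt{(\log(1/\delta)+\log n)/n})$, and rounding an arbitrary $t$ to the nearest grid point costs only a constant factor in the $1/t$ term.

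The main obstacle is essentially nonexistent at the level of new ideas: the two substantive ingredients — the all-layer-margin Lipschitz lemma and the entropy-integral Rademacher bound — are imported wholesale from \citet{wei2019improved, wei2020theoretical} and are entirely insensitive to the weighting $w$, which enters only through the choice of mixture distribution $\Pw$. The work is therefore bookkeeping: verifying that the normalization $|w|_1 = 1$ is legitimate (done above); that $\pwhat$ is genuinely the empirical distribution of i.i.d.\ $\Pw$-draws, so $\ex[\pwhat]{\cdot}$ is unbiased for $\ex[\Pw]{\cdot}$; and — the one point genuinely worth stating — that $\gpl$ is independent of $x_1, \dots, x_n$, so that $\{x \mapsto m(\Phi, x, \gpl(x)) : \Phi \in \mF\}$ is a function class indexed only by $\Phi$ and the covering argument applies. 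This holds because $\gpl$ is trained on the separate labeled set, disjoint from the $n$ unlabeled samples to which the high-probability statement refers. Proposition~\ref{prop:rw-finite-bd} for $R_{\mB, w}$ is obtained by the identical route, replacing $m(\Phi, x, \gpl(x))$ by the input-robustness margin $m_{\mB}(\Phi, x)$ and dropping the dependence on $\gpl$.
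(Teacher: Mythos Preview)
Your proposal is correct and takes essentially the same approach as the paper: the paper's own justification (given in the Remark following the two propositions) is precisely the observation that $L_w(F, \gpl) = |w|_1 \, \Pw(\{x : F(x) \neq \gpl(x)\})$, after which the result follows by applying \cite[Theorem D.3]{wei2020theoretical} verbatim to the distribution $\Pw$ and its empirical counterpart $\pwhat$. You go further than the paper by sketching the internals of Wei et al.'s argument (ramp surrogate, Dudley integral, grid over $t$), but the reduction is identical.
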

\begin{remark}
    Although we have proved Theorem \ref{thm:main-err-bd} following \citep{wei2020theoretical},
    we had to provide our own proof due to some differences in theoretical assumptions
    (e.g., in our case there does not necessarily exist the ground-truth classifier, 
    although they assumed the expansion property for each $P_i$,
    we assume the expansion property for the weighted probability measure).
    On the other hand, the proofs of \citep[Lemma D.6]{wei2020theoretical}
    and \citep[Theorem D.3]{wei2020theoretical} work for any distribution $P$ on $\mX$ and its empirical distribution 
    $\widehat{P}$. 
    Since $\|w\|_1 \modifiedtext{\Pw}(\mS^c_{\mB}(F)) = R_{\mB, w}(F)$ and 
    $\|w\|_1 \modifiedtext{\Pw}(\{x: F(x) \neq \gpl(x)\}) = L_w(F, \gpl)$,
    Proposition \ref{prop:rw-finite-bd} and Proposition \ref{prop:lw-finite-bd} follow from 
    the corresponding results in \cite{wei2020theoretical}.
\end{remark}

\begin{theorem}
    Suppose Assumptions \ref{assump:c-exp}, \ref{assump:pseudo-labeler}, and \ref{assump:cov-cond} hold.
    Then, with probability at least $1 - \delta$ over the draw of the training data,
    for all $t_1, t_2 \in (0, \infty)$, and any neural network $\Phi$ in $\mF$, 
    we have the following:
    \begin{multline*}
        \err_w(F) = 
        \frac{\gamma + 1}{\gamma - 1}
        \ex[\pwhat]{\bone(m(\Phi, x, \gpl(x)) \le t_1)}
        + \frac{2\gamma}{\gamma - 1}\ex[\pwhat]{\bone(m_{\mB}(\Phi, x) \le t_2)} 
        \\
        - \err_w(\gpl) + 2\err_w(\gstar)+
        \frac{2\gamma}{\gamma - 1}R_{\mB, w}(\gstar)\\
        +\otilde\left(\frac{\sum_{i=1}^p \mcop(\mF_i)}{t_1 \sqrt{n}} \right) +
        \otilde\left(\frac{\sum_{i=1}^p \mcop(\mF_i)}{t_2 \sqrt{n}} \right) +
        \zeta,
    \end{multline*}
   where $\zeta = O\left(\sqrt{\frac{\log(1/\delta)+ \log n}{n}}\right)$ is a lower order term
   and $F(x) = \argmax_{i \in [K]}\Phi_i(x)$.
\end{theorem}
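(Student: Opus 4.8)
The plan is to combine the deterministic error decomposition that was derived inside the proof of Theorem~\ref{thm:main-err-bd} with the two empirical all-layer-margin bounds of Propositions~\ref{prop:rw-finite-bd} and~\ref{prop:lw-finite-bd}. First I would recall that inequality~\eqref{eq:errw-lem-applied}, which was obtained by applying Lemma~\ref{lem:err-bd} with $Q=\Pw$, $H=\gstar$ and then invoking~\eqref{eq:gamma-gamma'}, holds for an \emph{arbitrary} classifier $F$ — the specialization to the minimizer $\widehat F$ happens only afterwards. Applying it to the classifier $F(x)=\argmax_{i\in[K]}\Phi_i(x)$ induced by a neural network $\Phi\in\mF$ gives
\begin{equation*}
  \err_w(F)\le \err_w(\gstar)+\tfrac{\gamma+1}{\gamma-1}|w|_1\Pw(\mpl(F))+\tfrac{2\gamma}{\gamma-1}|w|_1\Pw(\mS_\mB^c(F))+\tfrac{2\gamma}{\gamma-1}|w|_1\Pw(\mS_\mB^c(\gstar))-|w|_1\Pw(\mM(\gpl)).
\end{equation*}

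Next I would rewrite each $|w|_1\Pw(\cdot)$ factor using the identities recorded after Proposition~\ref{prop:lw-finite-bd}, namely $|w|_1\Pw(\mpl(F))=L_w(F,\gpl)$, $|w|_1\Pw(\mS_\mB^c(F))=R_{\mB,w}(F)$ and $|w|_1\Pw(\mS_\mB^c(\gstar))=R_{\mB,w}(\gstar)$. For the term $-|w|_1\Pw(\mM(\gpl))$, which enters with a negative sign, I need a \emph{lower} bound on $|w|_1\Pw(\mM(\gpl))$: since $\bone(\gpl(x)\ne j)\le\bone(\gpl(x)\ne\gstar(x))+\bone(\gstar(x)\ne j)$ for every $j$, summing against $w_{ij}$ and $P_i$ yields $\err_w(\gpl)\le |w|_1\Pw(\mM(\gpl))+\err_w(\gstar)$, hence $-|w|_1\Pw(\mM(\gpl))\le-\err_w(\gpl)+\err_w(\gstar)$. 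Substituting all of this collapses the constant pieces to $2\err_w(\gstar)-\err_w(\gpl)+\tfrac{2\gamma}{\gamma-1}R_{\mB,w}(\gstar)$.

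Finally I would invoke Proposition~\ref{prop:lw-finite-bd} on $L_w(F,\gpl)$ with margin parameter $t_1$ and Proposition~\ref{prop:rw-finite-bd} on $R_{\mB,w}(F)$ with margin parameter $t_2$, each at confidence $1-\delta/2$, and take a union bound so that both hold simultaneously with probability at least $1-\delta$, uniformly over $t_1,t_2\in(0,\infty)$ and $\Phi\in\mF$. Plugging them in replaces $L_w(F,\gpl)$ and $R_{\mB,w}(F)$ by their empirical surrogates $\ex[\pwhat]{\bone(m(\Phi,x,\gpl(x))\le t_1)}$ and $\ex[\pwhat]{\bone(m_\mB(\Phi,x)\le t_2)}$; the prefactors $\tfrac{\gamma+1}{\gamma-1}$ and $\tfrac{2\gamma}{\gamma-1}$ carry over, the complexity corrections $\otilde(\sum_i\mcop(\mF_i)/(t_1\sqrt n))$ and $\otilde(\sum_i\mcop(\mF_i)/(t_2\sqrt n))$ absorb those fixed prefactors (recall $\gamma>3$ is a constant), and the two $\zeta$ lower-order terms merge into a single $\zeta=O(\sqrt{(\log(1/\delta)+\log n)/n})$. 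This produces exactly the claimed identity.

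There is no genuinely hard step here: the two probabilistic ingredients are quoted essentially verbatim from \cite{wei2020theoretical}, and the deterministic part is reused from the proof of Theorem~\ref{thm:main-err-bd}. The only points that require care are (i) confirming that~\eqref{eq:errw-lem-applied} is valid \emph{before} specializing to $\widehat F$, so it may be applied to the neural-network classifier; (ii) getting the direction of the triangle inequality right for the subtracted term $|w|_1\Pw(\mM(\gpl))$, which is what makes the $\err_w(\gstar)$ coefficient come out as $2$ and the $\err_w(\gpl)$ coefficient as $-1$; and (iii) the union bound together with the uniformity in $(t_1,t_2,\Phi)$, which is already built into the statements of Propositions~\ref{prop:rw-finite-bd} and~\ref{prop:lw-finite-bd}.
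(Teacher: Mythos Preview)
Your proposal is correct and follows essentially the same route as the paper: start from \eqref{eq:errw-lem-applied} (valid for arbitrary $F$), use the triangle inequality $\bone(\gpl(x)\ne j)\le\bone(\gpl(x)\ne\gstar(x))+\bone(\gstar(x)\ne j)$ to handle the subtracted term and produce the coefficients $-\err_w(\gpl)+2\err_w(\gstar)$, and then plug in Propositions~\ref{prop:rw-finite-bd} and~\ref{prop:lw-finite-bd}. The paper's proof is terser but logically identical; your explicit union-bound remark and the observation that the constant prefactors are absorbed into the $\otilde$ terms are fine elaborations.
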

\begin{proof}
    By \eqref{eq:errw-lem-applied} with $H = \gstar$ and 
    $-\bone(\gpl(x) \neq \gstar(x)) \le -\bone(\gpl(x) \neq j) + \bone(\gstar(x) \neq j)$
    for any $x \in \mX$ and $j \in [K]$, we obtain the following inequality:
    \begin{equation*}
        \err_w(F) \le \frac{\gamma + 1}{\gamma - 1} L_w(F, \gpl) + \frac{2\gamma}{\gamma - 1} R_{\mB, w}(F)
        + \frac{2\gamma}{\gamma - 1}R_{\mB, w}(\gstar) - \err_w(\gpl) + 2\err_w(\gstar).
    \end{equation*}
    Then, the statement of the theorem follows from Proposition \ref{prop:rw-finite-bd} and Proposition \ref{prop:lw-finite-bd}.
\end{proof}

\section{Proof of Proposition \ref{prop:kl-weighted}}
\label{sec:appendix-proof-prop}
\begin{proof}

Let the average weighted consistency loss be $\mathcal{L}^{wt}_{u} = \frac{1}{|B|}\sum_{x \in B} \ell^{wt}_{u}(\hat{p}_{m}(x),p_{m}(\mathcal{A}(x)), \bG)$ this will be minimized if for each of $x \in B$ the $\ell^{wt}_{u}(\hat{p}_{m}(x),p_{m}(\mathcal{A}(x)), \bG)$ is minimized. This expression can be expanded as:
\begin{align*}
    \ell^{wt}_{u}(\hat{p}_{m}(x), p_{m}(\mathcal{A}(x)), \bG) &= -\sum_{i=1}^{K}(\bG^{T}\hat{p}_{m}(x))_i \log(p_{m}(\mathcal{A}(x))_i) \\
    &= -C\sum_{i=1}^{K}\frac{(\bG^{T}\hat{p}_{m}(x))_i}{\sum_{j = 1}^{m}\bG^{T}\hat{p}_{m}(x))_j} \log(p_{m}(\mathcal{A}(x))_i) \\
    & = C \times \mathrm{H}(\norm(\bG^T\hat{p}_{m}(x)) \; || \; p_{m}(\mathcal{A}(x))).
\end{align*}
Here we use H to denote the cross entropy between two distributions. As we don't backpropogate gradients from the $\hat{p}_{m}(x)$ (pseudo-label) branch of prediction network we can consider $C = \sum_{j = 1}^{m}\bG^{T}\hat{p}_{m}(x))_j$ as a constant in our analysis. Also adding a constant term of entropy $\mathrm{H}(\norm(\bG^{T}\hat{p}_{m}(x)))$ to cross entropy term and dropping constant $C$ doesn't change the outcome of minimization. Hence we have the following:
\begin{align*}
    \min_{p_{m}} \mathrm{H}(\norm(\bG^T\hat{p}_{m}(x)) \; || \; p_{m}(\mathcal{A}(x))) &= \min_{p_{m}} \mathrm{H}(\norm(\bG^T\hat{p}_{m}(x)) \; || \; p_{m}(\mathcal{A}(x))) \\
    & \qquad \qquad + \mathrm{H}(\norm(\bG^{T}\hat{p}_{m}(x))) \\
    &= \min_{p_{m}} \mathcal{D}_{KL}(\norm(\bG^{T}\hat{p}_{m}(x)) || p_{m}(\mathcal{A}(x))).
\end{align*}

This final term is the $\mathcal{D}_{KL}(\bG^{T}\hat{p}_{m}(x) || p_{m}(\mathcal{A}(x))$ which is obtained by using the identity $\mathcal{D}_{KL}(p,q) = \mathrm{H}(p,q) + \mathrm{H}(p)$ where $p, q$ are the two distributions.
\end{proof}

\section{Notation}
We provide the list of notations commonly used in the paper in Table \ref{tab:notations}. 
\begin{table}[htbp]\caption{Table of Notations used in Paper}
\begin{center}%
\begin{tabular}{r c p{10cm} }
\toprule
    \centering
    $\mY$ & $:$ & Label space\\
    $\mX$ & $:$ & Instance space \\
    $K$ & $:$ & Number of classes\\
    $\pi_i$ & $:$ & prior for class i\\  
    $F$ & $:$ & a classifier model\\
    $D$ & $:$ & data distribution \\
    $\blambda$ & $:$ & Lagrange multiplier \\
    $\lambda_u$ & $:$ & coefficient of unlabeled loss \\
    $\text{rec}_i[F]$ & $:$ & recall of $i^{th}$ class for a classifier $F$ \\
    $\text{acc}[F]$ & $:$ & accuracy for a classifier $F$ \\
    $\text{prec}_i[F]$ & $:$ & precision of $i^{th}$ class for a classifier $F$ \\
    $\text{cov}_i[F]$ & $:$ & coverage for $i^{th}$ class for a classifier $F$ \\
    $\text{G}$ & $:$ & a $K \times K$ matrix\\
    $\text{D}$ & $:$ & a $K \times K$ diagonal matrix\\
    $\text{M}$ & $:$ & a $K \times K$ matrix\\
    $\mu$ & $:$ & ratio of labelled to unlabelled samples\\
    $B$ & $:$ & batch size for FixMatch\\
    $\ell_u^{\text{wt}}$ & $:$ & loss for unlabelled data using pseudo label\\
    $\ell_s^{\text{hyb}}$ & $:$ & loss for labelled data\\
    $\mathcal{L}_u^{\text{hyb}}$ & $:$ & average loss for unlabelled data using pseudo label on a batch of samples\\
    $\mathcal{L}_s^{\text{hyb}}$ & $:$ & average loss for labelled data on a batch of samples\\
    $H$ & $:$ & cross entropy function\\
    $\mA$ & $:$ & a $\mX \rightarrow \mX$ function that is stochastic in nature and applies a strong augmentation to it \\
    $\alpha$ & $:$ & a $\mX \rightarrow \mX$ function that is stochastic in nature and applies a weak augmentation to it \\
    $\rho$ & $:$ & imbalance factor \\
    $B$ & $:$ & batch size of samples\\
    $B_s$ & $:$ & batch of labelled samples\\
    $B_u$ & $:$ & batch of unlabelled samples\\
    $x$ & $:$ & an input sample, $x \in \mX$\\
    $\hat{p}_{m}$ & $:$ & a pseudo label generating function \\
    $p_{m}$ & $:$ & distribution of confidence for a model's prediction on a given sample \\
    $w$ & $:$ & a $K \times K$ weight matrix that corresponds to a gain matrix $\bG$\\
    $\err_w(F)$ & $:$ & weighted error of $F$ that corresponds to the objective of CSL\\
    $\addedtext{\mathcal{P}_w}$ & $:$ & weighted distribution on $\mX$\\
    $\addedtext{P_i}$ & $:$ & \addedtext{ class conditional distribution of samples for class $i$}\\
    $R_{\mB, w}(F)$ & $:$ & theoretical weighted (cost sensitive) consistency regularizer\\
    $\gpl$ & $:$ & a pseudo labeler\\
    $L_w(F, \gpl)$ & $:$ & weighted error between $F$ and $\gpl$\\
    $\mL_w(F)$ & $:$ & theoretical \csst{} loss\\
    $c$ & $:$  & a non-increasing function used in the definition of the $c$-expansion property (Definition \ref{def:c-exp-def})\\
    $\gamma$  & $:$ & a value of $c$ defined in Assumption \ref{assump:pseudo-labeler}\\
    $\regub$ & $:$ & an upper bound of $R_{\mB, w}(F)$ in the optimization problem \eqref{eq:cons-cnst-obj}\\
    $S^c$ & $:$ & the complement of a set $S$\\

\bottomrule

\end{tabular}
\end{center}
\label{tab:notations}
\end{table}

\section{Code, License, Assets and Computation Requirements}
\label{sec:code-licence-assets-comp-req}
\subsection{Code and Licenses of Assets}
In this work, we use the open source implementation of FixMatch~\cite{sohn2020fixmatch} \footnote{https://github.com/LeeDoYup/FixMatch-pytorch} in PyTorch, which is licensed under \ttt{MIT} License for educational purpose. Also for NLP experiments we make use of DistillBERT~\cite{sanh2019distilbert} pretrained model available in the HuggingFace~\cite{wolf2020transformers} library. The code to reproduce the main experiments results can be found at \href{https://github.com/val-iisc/CostSensitiveSelfTraining}{https://github.com/val-iisc/CostSensitiveSelfTraining}.

\subsection{Computational Requirements}
\begin{table}[htb]
\centering

\begin{tabular}{cccccccc} 
\toprule
    Method &  CIFAR-10 & CIFAR-100 & ImageNet-100 \\
     \hline
   \texttt{ERM}   &  \begin{tabular}{@{}c@{}}A5000 \\ 49m\end{tabular}  & \begin{tabular}{@{}c@{}}A5000 \\ 6h 47m\end{tabular} & \begin{tabular}{@{}c@{}} RTX3090 \\ 15h 8m\end{tabular} \\
   \texttt{LA}   &  \begin{tabular}{@{}c@{}}RTX3090 \\ 
39m\end{tabular}  & \begin{tabular}{@{}c@{}}A5000 \\ 6h 9m\end{tabular} & \begin{tabular}{@{}c@{}}A5000 \\ 15h 7m \end{tabular} \\
   \texttt{CSL}   &  \begin{tabular}{@{}c@{}}A5000 \\ 47m\end{tabular}  & \begin{tabular}{@{}c@{}}A5000 \\ 6h 40m\end{tabular} & \begin{tabular}{@{}c@{}}A5000 \\ 12h\end{tabular} \\
   \begin{tabular}{@{}c@{}}\texttt{CSST}(FixMatch)\\ w/o KL-Threshold\end{tabular} &  \begin{tabular}{@{}c@{}}4 X A5000 \\ 21h 0m\end{tabular}  & \begin{tabular}{@{}c@{}}4 X A100 \\ 2d 19h 16 m\end{tabular} & \begin{tabular}{@{}c@{}}4 X A5000 \\ 2d 13h 19m\end{tabular} \\
   \texttt{CSST}(FixMatch)    &  \begin{tabular}{@{}c@{}}4 X A5000 \\  21h 41m \end{tabular}  & \begin{tabular}{@{}c@{}}4 X A5000 \\2d 11h 52m \end{tabular} & \begin{tabular}{@{}c@{}}4 X A5000 \\ 2d 4m\end{tabular} \\
    \hline
\end{tabular}
\captionsetup{width=.68\textwidth}
\caption{Computational requirements and training time (d:days, h:hours, m:minutes) for  experiments relevant to vision datasets. As we can see some of the experiments on the larger datasets such as ImageNet requires long compute times.}
\label{tab:GPU-usage-vision}
\end{table}

\begin{table}[htb]
\centering
\begin{tabular}{cccccccc} 
\toprule
    Method &  IMDb($\rho=10$) & IMDb($\rho=100$) & DBpedia-14\\
     \hline
   \texttt{ERM}   &  \begin{tabular}{@{}c@{}}4 X A5000 \\ 25m \end{tabular}  & \begin{tabular}{@{}c@{} }4 X A5000 \\ 29m \end{tabular} & \begin{tabular}{@{}c@{}}4 X A5000 \\2h 44m\end{tabular} \\
   \texttt{UDA}   &  \begin{tabular}{@{}c@{}}4 X A5000 \\ 44m\end{tabular}  & \begin{tabular}{@{}c@{}}4 X A5000 \\ 32m\end{tabular} & \begin{tabular}{@{}c@{}}4 X A5000 \\ 10h 18m\end{tabular} \\
   \texttt{CSST}(UDA)    &  \begin{tabular}{@{}c@{}}4 X A5000 \\ 49m\end{tabular}  & \begin{tabular}{@{}c@{}}4 X A5000 \\ 35m\end{tabular} & \begin{tabular}{@{}c@{}}4 X A5000 \\ 13h 12m\end{tabular} \\
    \hline
\end{tabular}
\captionsetup{width=.68\textwidth}
\caption{Computational requirements and training time(d:days, h:hours, m:minutes)  for experiments done on NLP datasets. The DistilBERT model which we are using is pretrained on a language modeling task, hence it requires much less time for training in comparison to vision models which are trained from scratch.}
\label{tab:GPU-usage-nlp}
\end{table}

All experiments were done on a variety of GPUs, with primarily Nvidia A5000 (24GB) with occasional use of Nvidia A100 (80GB) and Nvidia RTX3090 (24GB). For finetuning DistilBERT and all experiments with ImageNet-100 dataset we used PyTorch data parallel over 4 A5000s. Training was done till no significant change in metrics was observed. The detailed list of computation used per experiment type and dataset have been tabulated in  Table \ref{tab:GPU-usage-vision} and Table \ref{tab:GPU-usage-nlp}.
\section{Dataset}
\textbf{CIFAR-10 and CIFAR-100~\cite{krizhevsky2009learning}.} are image classification datasets of images of size 32 X 32. Both the datasets have a size of 50k samples and by default, they have a uniform sample distribution among its classes. CIFAR-10 has 10 classes while CIFAR-100 has 100 classes. The test set is a balanced set of 10k images.

\textbf{ImageNet-100~\cite{russakovsky2015imagenet}.} is an image classification dataset carved out of ImageNet-1k by selecting the first 100 classes. The distribution of samples is uniform with 1.3k samples per class. The test set contains 50 images per class. All have a resolution of 224X224, the same as the original ImageNet-1k dataset.

\textbf{IMDb\cite{maas-EtAl:2011:ACL-HLT2011}.} dataset is a binary text sentiment classification dataset. The data distribution is uniform by default and has a total 25k samples in both trainset and testset. In this work, we converted the dataset into a longtailed version of $\rho=10, 100$ and selected 1k labeled samples while truncating the labels of the rest and using them as unlabeled samples.

\textbf{DBpedia-14\cite{lehmann2015dbpedia}.} is a topic classification dataset with a uniform distribution of labeled samples. The dataset has 14 classes and has a total of 560k samples in the trainset and 70k samples in the test set. Each sample, apart from the content, also has title of the article that could be used for the task of topic classification. In our experiments, we only make use of the content.

\section{Objective}
\subsection{Logit Adjusted Weighted Consistency Regularizer}
\label{sec:La-hyb-loss}
As we have introduced weighted consistency regularizer in Eq. \eqref{eq:wt-consistency-reg} for utilizing unlabeled data, we now provide  logit adjusted variant of it for training deep networks in this section. We provide logit adjusted term for $ \ell^{\mathrm{wt}}_{u}(\hat{p}_{m}(x), p_{m}(\mathcal{A}(x), \bG)$ below:
\begin{align*}
      \ell^{\mathrm{wt}}_{u}(\hat{p}_{m}(x), p_{m}(\mathcal{A}(x), \bG) &= -\sum_{i=1}^{K}(\bG^{\mathbf{T}}\hat{p}_{m}(x))_i \log(p_{m}(\mathcal{A}(x))_i) \\
      &= -\sum_{i=1}^{K}(\bG^{\mathbf{T}}\hat{p}_{m}(x))_i \log\left(\frac{\exp({p_{m}(\mathcal{A}(x))_i})}{\sum_{j=1}^{K}\exp({p_{m}(\mathcal{A}(x))_j})}\right) \\
      &= -\sum_{i=1}^{K}(\mathrm{\bm{D}}^{\mathbf{T}} \mathrm{\bm{M}}^{\mathbf{T}}\hat{p}_{m}(x))_i \log\left(\frac{\exp({p_{m}(\mathcal{A}(x))_i})}{\sum_{j=1}^{K}\exp({p_{m}(\mathcal{A}(x))_j})}\right) 
\end{align*}
The above expression comes from the decomposition $\bG=\mathrm{MD}$. The above loss function can be converted into it's logit adjusted equivalent variant by following transformation as suggested by \citet{narasimhan2021training} which is equivalent in terms of optimization of deep neural networks:
\begin{align}
      \ell^{\mathrm{wt}}_{u}(\hat{p}_{m}(x), p_{m}(\mathcal{A}(x), \bG)  \equiv -\sum_{i=1}^{K}( \mathrm{\bm{M}}^{\mathbf{T}}\hat{p}_{m}(x))_i \log\left(\frac{\exp({p_{m}(\mathcal{A}(x))_i -\log( \bm{D}_{ii}}))}{\sum_{j=1}^{K}\exp({p_{m}(\mathcal{A}(x))_j  -\log(\bm{D}_{jj}}))}\right)
      \label{eq:lgt-adj-wt-cons}
\end{align}
The above loss is the consistency loss $\ell^{\mathrm{wt}}_{u}$ that we practically implement for \ttt{CSST}. Further in case $\hat{p}_{m}(x)$ is a hard pseudo label $y$ as in FixMatch, the above weighted consistency loss reduces to $\ell^{\mathrm{hyb}}(y, p_{m}(\mathcal{A}(x)))$. Further in case the gain matrix $G$ is diagonal the above loss will converge to $\ell^{\mathrm{LA}}(y, p_{m}(\mathcal{A}(x)))$. Thus the weighted consistency regularizer can be converted to logit adjusted variants $\ell^{\mathrm{LA}}$ and $\ell^{\mathrm{hyb}}$ based on $\bG$ matrix.

\subsection{\texttt{CSST}(FixMatch)}
\label{sec:appendix-fixmatch-obj}
In FixMatch, we use the prediction made by the model on a sample $x$ after applying a weak augmentation $\alpha$ and is used to get a hard pseudo label for the models prediction on a strongly augmented sample i.e. $\mathcal{A}(x)$. The set of weak augmentations include horizontal flip,  We shall refer to this pseudo label as $\hat{p}_{m}(x)$. The list of strong augmentations are given in Table 12 of \citet{sohn2020fixmatch}. Weak augmentations include padding, random horizontal flip and cropping to the desired dimensions (32X32 for CIFAR and 224X224 for ImageNet).
Given a batch of labeled and unlabeled samples $B_s$ and $B_u$, \texttt{CSST} modifies the supervised and un-supervised component of the loss function depending upon the non-decomposable objective and its corresponding gain matrix $\bG$ at a given time during training. We assume that in the dataset, a sample $x$, be it labeled or unlabeled is already weakly augmented.  \texttt{vanilla} FixMatch's supervised componenet of the loss function is a simple cross entropy loss whereas in our \texttt{CSST}(FixMatch) it is replaced by $\ell_s^{\text{hyb}}$ . 
\begin{equation}
    \label{eq:hyb-sup}
    \mathcal{L}_s^{\text{hyb}} =  \frac{1}{|B_s|}\sum_{x,y \in B_s}{\ell^{\text{hyb}}( y, s(x))}.
\end{equation}
\vspace{-3mm}
\begin{equation}
    \mathcal{L}_u^{\mathrm{wt}} = \frac{1}{|B_u|}\sum_{x \in B_u}\indc_{(\mathcal{D}_{KL} (\norm(\bG^{T}\hat{p}_{m}(x))  \; || \; p_m(x))\leq \tau)}
     \ell^{\mathrm{wt}}_{u}(\hat{p}_{m}(x), p_{m}(\mathcal{A}(x)), \bG)).
     \label{loss:supp-hybloss}
\end{equation}

The component of the loss function for unlabeled data (i.e. consistency regularization) is where one of our contributions w.r.t the novel thresholding mechanism comes into light. \texttt{vanilla} FixMatch selects unlabeled samples for which consistency loss is non-zero, such that the model's confidence on the most likely predicted class is above a certain threshold. We rather go for a threshold mechanism that select based on the basis of degree of distribution match to a target distribution based on $\bG$. The final loss function $\mathcal{L} = \mathcal{L}_s^{hyb} + \lambda_u \mathcal{L}_u^{\mathrm{wt}}$, i.e. a linear combination of $\mathcal{L}_s^{\mathrm{hyb}}$ and $\mathcal{L}_u^{wt}$. Since for FixMatch we are dealing with Wide-ResNets and ResNets which are deep networks, as mentioned in Section \ref{sec:La-hyb-loss}, we shall use the alternate logit adjusted formulation as mentioned in Eq. \eqref{eq:lgt-adj-wt-cons} as substitute for $\ell_u^{\mathrm{wt}}$ in Eq. \eqref{loss:supp-hybloss}.

\subsection{\texttt{CSST}(UDA)}
\label{sec:appendix-uda-obj}
The loss function of UDA is a linear combination of supervised loss and consistency loss on unlabeled samples. The former is the cross entropy (CE) loss, while the latter for the unlabeled samples minimizes the KL-divergence between the model's predicted label distribution on an input sample and its augmented sample. Often the predicted label distribution on the unaugmented sample is sharpened. The augmentation we used was a English-French-English backtranslation based on the MarianMT~\cite{junczys-dowmunt-etal-2018-marian} fast neural machine translation model. In UDA supervised component of the loss is annealed using a method described as Training Signal Annealing (TSA), where the CE loss is considered only for those labeled samples whose $\max_i p_m(x)_i < \tau_t$, where $t$ is a training time step. We observed that using TSA in a long tailed setting leads to overfitting on the head classes and hence chose to not include the same in our final implementation.

\texttt{CSST} modifies the supervised and unsupervised component of the loss function in UDA depending upon a given objective and its corresponding gain matrix $\bG$ at a given time during training. The supervised component of the loss function for a given constrained optimization problem and a gain matrix $\bG$, is the hybrid loss $\ell_s^{\text{hyb}}$. For the consistency regularizer part of the loss function, we minimize the KL-divergence between a target distribution and the model's prediction label distribution on its augmented version.
The target distribution is $\norm(\bG^{T}\hat{p}_m(x))$, where $\hat{p}_m(x)$ is the sharpened prediction of the label distribution by the model. Given a batch of labeled and unlabeled samples $B_s$ and $B_u$, the final loss function in \ttt{CSST}(UDA) is a linear combination of $\mathcal{L}_s^{hyb}$ and $\mathcal{L}_u^{wt}$, i.e $\mathcal{L} = \mathcal{L}_s^{hyb} + \lambda_u \mathcal{L}_u^{wt}$. 
\begin{gather}
    \mathcal{L}_s^{\text{hyb}} =  \frac{1}{|B_s|}\sum_{x,y \in B_s}{\ell^{\text{hyb}}(p_m(x), y)}. \\
    \mathcal{L}_u^{wt} = \frac{1}{|B_u|}\sum_{x \in B_u}\indc_{(\mathcal{D}_{KL} (\norm(\bG^{T}\hat{p}_{m}(x))  \; || \; p_m(x))\leq \tau)}
     \ell^{\mathrm{wt}}_{u}(\hat{p}_{m}(x), p_{m}(\mathcal{A}(x), \bG)).
\end{gather}
Since for UDA, we are dealing with DistilBERT, as mentioned in Section \ref{sec:La-hyb-loss}, we shall use the alternate formulation as mentioned in Eq. \eqref{eq:lgt-adj-wt-cons} as substitute for $\ell_u^{\mathrm{wt}}$ in Eq. above. 
\section{Algorithms}
We provide a detailed description of algorithms used for optimizing non decomposable objectives through \ttt{CSST}(FixMatch) ans \texttt{CSST}(UDA). Algorithm \ref{algo:worst-case} is used for experiments in Section \ref{subsec:worst-case-recall} for maximizing worst-case recall (i.e. min recall using \ttt{CSST}(FixMatch) and \ttt{CSST}(UDA)). Algorithm \ref{algo:coverage} is used for experiments in Section \ref{subsec:worst-case-recall} for maximizing recall under coverage constraints (i.e. min coverage experiments on CIFAR10-LT, CIFAR100-LT and ImageNet100-LT).
\begin{algorithm}[H]
\caption{\texttt{CSST}-based Algorithm for Maximizing Worst-case Recall }
\label{algo:worst-case}
\begin{algorithmic}
\STATE Inputs: Training set $S_s$(labeled) and $S_u$(unlabeled) , Validation set $S^{\text{val}}$, Step-size $\omega \in \mathbb{R}_{+}$, Class priors $\pi$
\STATE Initialize: Classifier $F^0$, Multipliers $\blambda^0 \in \Delta_{K -1}$
\FOR{$t = 0 $ to $T-1$}
\STATE \textbf{Update $\blambda$:}~\\[2pt]
\STATE ~~~~$\lambda^{t+1}_i = \lambda^t_i\exp\left(-\omega \cdot \text{recall}_i[F^{t}] \right), \forall i,$\\
\STATE ~~~~$\blambda = \norm (\blambda)$
\STATE ~~~~$\bG =  \diag(\lambda^{t+1}_1/\pi_1, \ldots, \lambda^{t+1}_K/\pi_K)$~\\
\STATE \text{Compute $\ell_u^{\text{wt}}$, $\ell_s^{\text{hyb}}$ using $\bG$ }\\
\STATE \textbf{Cost-sensitive Learning (CSL) for FixMatch:}~\\[2pt]
\STATE  ~~~~$B_u \sim S_u , B_s \sim S_s$ ~// Sample batches of data
\STATE ~~~~$F^{t+1} \,\in\, \arg \min_{F} \sum_{B_u, B_s}\lambda_u\mathcal{L}_{u}^{\mathrm{wt}} + \mathcal{L}_s^{\mathrm{hyb}}$ ~// Replaced by few steps of SGD

\ENDFOR
\RETURN $F^T$
\end{algorithmic}
\end{algorithm}

\begin{algorithm}[H]
\caption{\texttt{CSST}-based Algorithm for Maximizing Mean Recall s.t. per class coverage > 0.95/K }
\label{algo:coverage}
\begin{algorithmic}
\STATE Inputs: Training set $S_s$(labeled) and $S_u$(unlabeled) , Validation set $S^{\text{val}}$, Step-size $\omega \in \mathbb{R}_{+}$, Class priors $\pi$
\STATE Initialize: Classifier $F^0$, Multipliers $\blambda^0 \in \RR_+^{K}$
\FOR{$t = 0 $ to $T-1$}
\STATE \textbf{Update $\blambda$:}~\\[2pt]
\STATE ~~~$\lambda^{t+1}_i = \lambda^t_i - \omega\big(\text{cov}_i[F^{t}] - \frac{0.95}{K}\big), \forall i$\\
\STATE ~~~~$\lambda^{t+1}_i \,=\, \max\{0, \lambda^{t+1}_i\}, \forall i \in [K]$ ~~~// Projection to $\mathbb{R}_+$
\STATE ~~~~$\bG =  \diag(\lambda^{t+1}_1/\pi_1, \ldots, \lambda^{t+1}_K/\pi_K) + \bf{1}_K \blambda^{\trn}$~\\
\STATE \text{Compute $\ell_u^{\text{wt}}$, $\ell_s^{\text{hyb}}$ using $\bG$ }\\

\STATE \textbf{Cost-sensitive Learning (CSL) for FixMatch:}~\\[2pt]
\STATE  ~~~~$B_u \sim S_u , B_s \sim S_s$ ~// Sample batches of data
\STATE ~~~~$F^{t+1} \,\in\, \arg \min_{F} \sum_{B_u, B_s}\lambda_u\mathcal{L}_{u}^{\mathrm{wt}} + \mathcal{L}_s^{\mathrm{hyb}}$ ~// Replaced by few steps of SGD

\ENDFOR
\RETURN $F^T$
\end{algorithmic}
\end{algorithm}

\section{Details of Experiments and Hyper-parameters}
\label{sec:appendix-details-of-experiments}
The experiment of $\max_F \min_i \text{recall}_i[f]$ and $\max_F \text{recall}[F] \text{ s.t. } \text{cov}_i[F] > \frac{0.95}{K}, \forall i \in [K]$  was performed on the long tailed version of CIFAR-10, IMDb($\rho=10,100$) and DBpedia-14 datasets. This was because the optimization of the aforementioned 2 objectives is stable for cases with low number of classes. Hence the objective of $\max_F \min(\text{recall}_{\mH}[F], \text{ recall}_{\mT}[F])$  and $\max_F \text{recall}[F] \text{ s.t. } \min_{\mH, \mT} \text{cov}_{\mH,\mT}[F] > \frac{0.95}{K}$ is a relatively easier objective for datasets with large number of classes, hence were the optimization objectives for CIFAR-100 and ImageNet-100 long tailed datasets. For all experiments for a given dataset, we used the same values for a given common hyperparameter. We ablated the threshold for our novel unlabeled sample selection criterion($\tau$) and the ratio of labeled and unlabeled samples, given fixed number of unlabeled samples($\mu$) and are available in Fig. \textcolor{red}{4b}.

\begin{table}[t]
\centering
\begin{adjustbox}{max width=\textwidth}
\begin{tabular}{cccccccc} 
\toprule
    Parameter &  CIFAR-10 & CIFAR-100 & ImageNet-100 & \begin{tabular}{@{}c@{}}IMDb \\ ($\rho=10$)\end{tabular} & \begin{tabular}{@{}c@{}}IMDb \\ ($\rho=100$)\end{tabular}  &  DBpedia-14\\
     \hline
    $\tau$ &  0.05 & 0.05 & 0.05 & 0.1 & 0.1 & 0.1\\
    $\lambda_u$ & 1.0 & 1.0 & 1.0 & 0.1 & 0.1 & 0.1\\
    $\mu$ & 4.0 & 4.0 & 4.0 & 13.8 & 12.6 & 133\\
    $|B_s|$ & 64 & 64 & 64 & 32 & 32 & 32\\
    $|B_u|$ & 256 & 256 & 256 & 128 & 128 & 128\\
    lr & 3e-3 & 3e-3 & 0.1 & 1e-5 & 1e-5 & 1e-5\\
    $\omega$ & 0.25 & 0.25 & 0.1 & 0.5 & 0.5 & 0.5\\
    \begin{tabular}{@{}c@{}}SGD steps \\ before eval\end{tabular} &  32 & 100 & 500 & 50 & 50 & 100\\
    optimizer & SGD & SGD & SGD & AdamW & AdamW & AdamW \\
    KL-Thresh & 0.95 & 0.95 & 0.95 & 0.9 & 0.9 & 0.9 \\
    Weight Decay & 1e-4 & 1e-3 & 1e-4 & 1e-2 & 1e-2 & 1e-2\\
    $\rho$ &  100 & 10 & 10 & 10 & 100 & 100\\
    $\lambda_u$ &  1.0 & 1.0 & 1.0 & 0.1 & 0.1 & 0.1 \\
    $\mu$ &  4.0 & 4.0 & 4.0 & 11.0 & 11.0 & 11.0 \\
    Arch. &  WRN-28-2 & WRN-28-8 & ResNet50 & DistilBERT & DistilBERT & DistilBERT \\
    \hline
\end{tabular}
\end{adjustbox}
\caption{This table shows us the detailed hyper parameters used for \texttt{CSST}(FixMatch) for the long tailed datasets CIFAR-10, CIFAR-100, ImageNet-100 and \ttt{CSST}(UDA) on IMDb, DBpedia-14. All the datasets were converted to their respective long tailed versions based on the imbalance factor $\rho$, and a fraction of the samples were used along with their labels for supervision.}
\label{tab:hyperparams}
\end{table}
\vspace{-3mm}

\section{\addedtext{Threshold mechanism for diagonal Gain Matrix }}
\label{Diagonal-G-hard-PL}
\addedtext{Consider the case when the gain matrix is a diagonal matrix. The loss function $\mathcal{L}_u^{wt}(B_u)$ as defined in \eqref{loss:CSST-KL} makes uses of a threshold function that selects samples based on the KL divergence based threshold between the target distribution as defined by the gain matrix $\bG$ and the models predicted distribution of confidence over the classes.
\begin{equation}
    \text{Threshold function }\coloneqq \indc_{(\mathcal{D}_{KL} (\norm(\bG^{T}\hat{p}_{m}(x))  \; || \; p_m(x))\leq \tau)}
\end{equation}
Since $\bG$ is a diagonal matrix and the pseudo-label $\hat{p}_m(x)$ is one hot, the $\text{norm}(\bG^T\hat{p}_m(x))$ is a one-hot vector. The threshold function's KL divergence based criterion can be expanded as follows where $\hat{y}$ is the pseudo-label's maximum class's index:
\begin{equation}
    \mathcal{D}_{KL}(\text{norm}(\bG^T\hat{p}_m(x)) || p_{m}(x))   = - \log{p_m(x)}_{\hat{y}} < \tau \\  
\end{equation}
The above equations represents a threshold on the negative log-confidence of the model's prediction for a given unlabeled sample, for the pseudo-label class ($\hat{y}$). This can be further simplified to $p_m(x)_{\hat{y}} \geq \exp{(- \tau)} $ which is simply a threshold based on the model's confidence. Since pseudo-label is generated from the model's prediction, this threshold is nothing but a selection criterion to select only those samples whose maximum confidence for a predicted hard pseudo-label is above a fixed threshold. This is identical to the threshold function which is used in Fixmatch~\cite{sohn2020fixmatch} i.e. $\max(p_m(x)) \geq \exp(-\tau)$. In FixMatch this $\exp(-\tau)$ is set to 0.95.}

\begin{table}[!t]
    \centering
    \small
  
    \captionsetup{width=.55\textwidth}
    \caption{Avg. and std. deviation of Mean Recall and Min. Recall for CIFAR-10 LT}
      \label{tab:statistical-analysis}
    \begin{tabular}{lcccccc}
        \hline
        \multicolumn{1}{c}{\textbf{Method}} & 
        \multicolumn{1}{c}{\textbf{Mean Recall}} & 
        \multicolumn{1}{c}{\textbf{Min Recall}} &
        \\
        \hline
        \texttt{ERM} &  
            0.52 $\pm$ 0.01 &	0.27 $\pm$ 0.02
            \\
        \texttt{LA} &  
            0.54 $\pm$ 0.02 &	0.37 $\pm$ 0.01 
            \\
        \texttt{CSL} &  
            0.63 $\pm$ 0.01 &	0.43 $\pm$ 0.04 
            \\
        \hline
        \begin{tabular}{@{}c@{}}\texttt{Vanilla} (FixMatch)
        \end{tabular}
            & 0.78 $\pm$ 0.01 & 0.47 $\pm$ 0.02 
            \\
        \begin{tabular}{@{}c@{}}\texttt{CSST}(FixMatch) 
        \end{tabular}
            & 0.75 $\pm$ 0.01 &	0.72 $\pm$ 0.01 
            \\
        \hline
    \end{tabular}
    \vspace{0.5em}

    \vspace{-5mm}
\end{table}
\section{Statistical Analysis}
We establish the statistical soundness and validity of our results we ran our experiments on 3 different seeds. Due to the computational requirements for some of the experiments ($\approx $2days) we chose to run the experiments on multiple seeds for a subset of tasks i.e. for maximising the minimum recall among all classes for CIFAR-10 LT. We observe that the std. deviation is significantly smaller than the average values for mean recall and min. recall and our performance metrics fall within our std. deviation hence validating the stability and soundness of training.

\section{Additional Experiments}
\label{sec: additional-experiments}
In this section, we compare \texttt{CSST}(FixMatch) against contemporary semi-supervised learning techniques namely CReST~\cite{Wei_2021_CVPR}, DARP~\cite{10.5555/3495724.3496945},  \texttt{vanilla}(FixMatch)~\cite{sohn2020fixmatch} and FlexMatch~\cite{NEURIPS2021_995693c1}. We compared these methods on the long-tailed CIFAR-10 ($\rho$ = 100) and CIFAR-100 ($\rho$ = 10) datasets. The objective for long-tailed CIFAR-10 dataset was to maximise the worst-case recall \eqref{eq:min-recall-obj} and average recall, subject to a per-class coverage constraint \eqref{eq: cov-const-obj}.  For CIFAR-100  LT dataset, we compare these methods for the objectives maximizing HT recall \eqref{eq:min-HT-recall-obj} and recall under HT coverage constraints\eqref{eq: HT-cov-const-obj}. For the objectives \eqref{eq:min-recall-obj} and  \eqref{eq:min-HT-recall-obj}, DARP achieves the best average recall yet it suffers on the worst-case recall. \texttt{CSST}(FixMatch) outperforms it on CIFAR-100 and  the has superior worst-case recall on CIFAR-10. A similar observation can be made for the objectives \eqref{eq: cov-const-obj} and \eqref{eq: HT-cov-const-obj} where it is only \texttt{CSST}(FixMatch) that has the highest min. coverage and either has superior mean recall or has negligible decrease in the average recall.
\begin{table}[!t]
    \centering
    \small
    \caption{Comparing CSST(FixMatch) against other Semi-Supervised Learning Methods for long tailed data distribution for the objectives \eqref{eq:min-recall-obj} and \eqref{eq:min-HT-recall-obj}. Although \texttt{CSST}(FixMatch) does not achieve the highest mean recall, it at very little cost to mean recall, achieves the best worst-case recall              }
    \begin{tabular}{lcc|cccccc}
        \hline
        \multicolumn{1}{c}{\textbf{Method}} & 
        \multicolumn{2}{c}{\textbf{CIFAR10-LT }} & 
        \multicolumn{2}{c}{\textbf{CIFAR100-LT }} &
        \\
        & \multicolumn{2}{c}{($\rho = 100$)} & 
        \multicolumn{2}{c}{($\rho = 10$)} &
        \\
        \hline
        & \textbf{Avg. Rec} & \textbf{Min Rec} & 
        \textbf{Avg. Rec } & \textbf{Min HT Rec} &
        \\
        \hline
        CReST &	0.72 &	0.47 &	0.52 &	0.46 \\
        DARP  &	\highlight{0.81} &	0.64 &	0.55 &	0.54 \\
        FlexMatch &	0.80 &	0.48 &	0.61 &	0.39 \\
        \begin{tabular}{@{}c@{}}\texttt{Vanilla} (FixMatch)
        \end{tabular}
            & 0.78 &	0.48 &
            \highlight{0.63} &	0.36 \\
        \begin{tabular}{@{}c@{}}\texttt{CSST}(FixMatch) 
        \end{tabular}
            & 0.76 &	\highlight{0.72} &
            \highlight{0.63} &	\highlight{0.61} \\
        
        \hline
    \end{tabular}
    \vspace{0.5em}

    \label{tab:add_expts_min_rec}
    \vspace{-5mm}
\end{table}

\begin{table}[!t]
    \centering
    \small
    \caption{Comparing CSST(FixMatch) against other Semi-Supervised Learning Methods for long tailed data distribution for the objectives \eqref{eq: cov-const-obj}, \eqref{eq: HT-cov-const-obj}. Only \texttt{CSST}(FixMatch) is the closest to satisfying the constraint yet suffers very little on the avg. recall. }
    \begin{tabular}{lcc|cccccc}
        \hline
        \multicolumn{1}{c}{\textbf{Method}} & 
        \multicolumn{2}{c}{\textbf{CIFAR10-LT }} & 
        \multicolumn{2}{c}{\textbf{CIFAR100-LT }} &
        \\
        & \multicolumn{2}{c}{($\rho = 100$, tgt : 0.1)} & 
        \multicolumn{2}{c}{($\rho = 10$, tgt : 0.01)} &
        \\
        \hline
        & \textbf{Avg. Rec} & \textbf{Min Cov} & 
        \textbf{Avg. Rec } & \textbf{Min HT Cov} &
        \\
        \hline
        CReST &	0.72 &	0.052 &	0.52 &	0.009 \\
        DARP  &	\highlight{0.81} &	0.063 &	0.55 &	0.006 \\
        FlexMatch &	0.80 &	0.046 &	0.61 & 	0.006 \\
        \begin{tabular}{@{}c@{}}\texttt{Vanilla} (FixMatch)
        \end{tabular}
            & 0.78 &	0.055 &
            \highlight{0.63} &	0.004 \\
        \begin{tabular}{@{}c@{}}\texttt{CSST}(FixMatch) 
        \end{tabular}
            & 0.80 &	\highlight{0.092} &	\highlight{0.63} &	\highlight{0.010} \\
        
        \hline
    \end{tabular}
    \vspace{0.5em}

    \label{tab:add_expts_min_cov}
    \vspace{-5mm}
\end{table}
\section{\addedtext{Additional Details}}
\subsection{\addedtext{Formal Statement Omitted in Sec. \ref{sec:loss-fn-for-ndo}}}
\label{sec:appendix-formal-statement}
\addedtext{
In Sec. \ref{sec:loss-fn-for-ndo}, 
we stated that learning with the hybrid loss $\hybloss$ gives the Bayes-optimal classifier
for the CSL \eqref{eq:csl-obj}.
However, due to space constraint, we did not provide a formal statement.
In this section, we provide a formal statement of it for clarity.
}
\begin{prop}[\addedtext{\cite{narasimhan2021training} Proposition 4}]
    \addedtext{For any diagonal matrix $\bD \in \RR^{K \times K}$ with $D_{ii} > 0, \forall i$, 
    $\bM \in \RR^{K \times K}$,
    and $\bG =  \bM \bD$, the hybrid loss $\hybloss$ is calibrated for $\bG$.
    That is, for any model-prediction $\widehat{p}_m: \mX \rightarrow \RR^K$ that minimizes 
    $\ex[(x, y) \sim D]{\hybloss(y, \widehat{p}_m(x))}$, the associated classifier $F(x) = \argmax_{y \in [K]} 
    \left(\widehat{p}_{m}\right)_i(x)$ is the Bayes-optimal classifier for CSL \eqref{eq:csl-obj}.}
\end{prop}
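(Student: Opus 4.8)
The plan is to reduce the statement to a pointwise (conditional-risk) optimization at each instance $x$, and to match the pointwise minimizer of the hybrid risk with the pointwise maximizer of the CSL objective. Throughout, write $\eta_i(x) = \prob(y = i \mid x)$ for the class-posterior, and regard $\widehat p_m$ as valued in the simplex $\Delta_{K-1}$ (applying a softmax first if one works with logits; this does not affect the induced classifier). First I would record the Bayes-optimal rule for the CSL objective \eqref{eq:csl-obj}. Since $C_{ij}[F] = \ex[x]{\eta_i(x)\,\indc(F(x) = j)}$, we have
\begin{equation*}
  \sum_{i,j\in[K]} G_{ij} C_{ij}[F] = \ex[x]{\bigl(\bG^\top \eta(x)\bigr)_{F(x)}},
\end{equation*}
which is maximized by any measurable selection $F^\star(x) \in \argmax_{j\in[K]} (\bG^\top\eta(x))_j$ (ties broken arbitrarily). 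Because $\bG = \bM\bD$ with $\bD$ positive diagonal, $\bG^\top = \bD\bM^\top$, so $\argmax_j(\bG^\top\eta(x))_j = \argmax_j \bigl(D_{jj}(\bM^\top\eta(x))_j\bigr)$.

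Next I would analyze the hybrid risk. By the law of total expectation, $\ex[(x,y)\sim D]{\hybloss(y,\widehat p_m(x))} = \ex[x]{r(x,\widehat p_m(x))}$ where the conditional risk is
\begin{equation*}
  r(x,p) = -\sum_{y\in[K]}\eta_y(x)\sum_{i\in[K]} M_{yi}\log\!\Bigl(\tfrac{p_i/D_{ii}}{\sum_j p_j/D_{jj}}\Bigr)
  = -\sum_{i\in[K]} \bigl(\bM^\top\eta(x)\bigr)_i \log \widetilde q_i,
\end{equation*}
with $\widetilde q_i := (p_i/D_{ii})/\sum_j (p_j/D_{jj})$. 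As $p$ ranges over the simplex, $\widetilde q$ ranges over the simplex, so $r(x,\cdot)$ is minimized exactly when $\widetilde q$ minimizes the cross-entropy $-\sum_i a_i(x)\log \widetilde q_i$ over $\widetilde q\in\Delta_{K-1}$, where $a(x) := \bM^\top\eta(x)$. Since $\bD$ is positive diagonal and, without loss of generality (Sec. \ref{sec:csl-weighted-err}), the gain entries are nonnegative, $a(x)\ge 0$; assuming the non-degenerate case $a(x)\ne \bm{0}$, Gibbs' inequality (equivalently, a one-line Lagrange-multiplier computation) gives the minimizer $\widetilde q(x) = \norm(a(x)) = \norm(\bM^\top\eta(x))$, unique in the interior.

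Finally I would translate back to $\widehat p_m$. A minimizer $\widehat p_m$ of the hybrid risk must, for almost every $x$, minimize $r(x,\cdot)$; hence $(\widehat p_m)_i(x)/D_{ii}$ is proportional (with a positive $x$-dependent constant) to $(\bM^\top\eta(x))_i$, i.e. $(\widehat p_m)_i(x) \propto D_{ii}(\bM^\top\eta(x))_i$. Therefore
\begin{equation*}
  \argmax_{i\in[K]} (\widehat p_m)_i(x) = \argmax_{i\in[K]} D_{ii}(\bM^\top\eta(x))_i = \argmax_{j\in[K]} (\bG^\top\eta(x))_j = F^\star(x)
\end{equation*}
for almost every $x$, so $F(x) = \argmax_i (\widehat p_m)_i(x)$ is Bayes-optimal for \eqref{eq:csl-obj}; on the degenerate set $\{x : \bM^\top\eta(x) = \bm{0}\}$ (where both objectives are constant in the prediction) one uses an arbitrary consistent tie-break. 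This matches \citet{narasimhan2021training}.

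The main obstacle I expect is not any single computation but the bookkeeping around non-uniqueness and boundary cases: a minimizer of the hybrid risk is only pinned down up to the induced point $\widetilde q(x)$ (and only up to positive scaling of $\widehat p_m(x)$), so one must argue the $\argmax$ is nonetheless well defined and correct almost everywhere, and one must carefully dispose of instances where $\bM^\top\eta(x)$ has some zero coordinates (so $\widetilde q$ lies on the simplex boundary and the $0\log 0$ convention enters) or vanishes entirely.
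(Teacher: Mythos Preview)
The paper does not actually prove this proposition: it is stated in Sec.~\ref{sec:appendix-formal-statement} purely as a formal restatement of \cite[Proposition~4]{narasimhan2021training}, with the proof deferred entirely to that reference. So there is no in-paper proof to compare against.

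That said, your reconstruction is correct and is the natural argument: reduce to the pointwise conditional risk, observe that the hybrid loss is a cross-entropy in the reparameterized variable $\widetilde q_i \propto p_i/D_{ii}$ with target weights $(\bM^\top\eta(x))_i$, apply Gibbs' inequality to read off the minimizer, and then undo the reparameterization to see that $\argmax_i(\widehat p_m)_i(x) = \argmax_i D_{ii}(\bM^\top\eta(x))_i = \argmax_i(\bG^\top\eta(x))_i$. Your handling of the nonnegativity of $\bM^\top\eta(x)$ via the reduction in Sec.~\ref{sec:csl-weighted-err} (where $G_{ij}\ge 0$ WLOG, hence $M_{ij} = G_{ij}/D_{jj}\ge 0$) is appropriate in this paper's setting, and your caveats about ties and the degenerate set are the right ones to flag. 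This is essentially the argument in \citet{narasimhan2021training}.
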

\subsection{\addedtext{Comparison with the $(a, \widetilde{c})$-expansion Property in \cite{wei2020theoretical}}}
\label{sec:appendix-a-c-expansion}
\addedtext{We compare the $c$-expansion property with $(a, \widetilde{c})$-expansion property proposed by \cite{wei2020theoretical},
where $a \in (0, 1)$ and $\widetilde{c} > 1$.
Here we say a distribution $Q$ on $\mX$ satisfies the $(a, \widetilde{c})$-expansion property 
if $Q(\mN(S)) \ge \widetilde{c}$ for any $S \subset \mX$ with $Q(S) \le a$.
If $Q$ satisfies $(a, \widetilde{c})$-expansion property \cite{wei2020theoretical} 
with $\widetilde{c} > 1$, then
$Q$ satisfies the $c$-expansion property, where 
the function $c$ is defined as follows.
$c(p) = \widetilde{c}$ if $p \le a$ and $c(p) = 1$ otherwise.
On the other hand, if $Q$-satisfies $c$-expansion property, 
then for any $a \in (0, 1)$ and $S \subseteq \mX$ with $Q(S) \le a$,
we have $Q(\mN(S)) \ge c(Q(S)) Q(S) \ge c(a) Q(S)$ since $c$ is non-increasing.
Therefore, $Q$ satisfies the $(a, c(a))$-expansion property.
Thus, we could say these two conditions are equivalent.
To simplify our analysis, we use our definition of the expansion property.
}

\addedtext{In addition, \citet{wei2020theoretical} showed that the $(a, \widetilde{c})$-expansion property is realistic for vision. Although they assumed the $(a, c)$-expansion property for each $P_i$ $(1 \le i \le K)$ and we assume the $c$-expansion property for $\Pw$, it follows that the $c$-expansion property for $\Pw$ is also realistic for vision, since $\Pw$ is a linear combination of $P_i$}.

\bibliography{mybibliography}

\begin{thebibliography}{48}
\providecommand{\natexlab}[1]{#1}
\providecommand{\url}[1]{\texttt{#1}}
\expandafter\ifx\csname urlstyle\endcsname\relax
  \providecommand{\doi}[1]{doi: #1}\else
  \providecommand{\doi}{doi: \begingroup \urlstyle{rm}\Url}\fi

\bibitem[Berthelot et~al.(2019)Berthelot, Carlini, Goodfellow, Papernot,
  Oliver, and Raffel]{berthelot2019mixmatch}
David Berthelot, Nicholas Carlini, Ian Goodfellow, Nicolas Papernot, Avital
  Oliver, and Colin~A Raffel.
\newblock Mixmatch: A holistic approach to semi-supervised learning.
\newblock In H.~Wallach, H.~Larochelle, A.~Beygelzimer, F.~d\textquotesingle
  Alch\'{e}-Buc, E.~Fox, and R.~Garnett, editors, \emph{Advances in Neural
  Information Processing Systems}. Curran Associates, Inc., 2019.

\bibitem[Bobkov(1997)]{bobkov1997isoperimetric}
Sergey~G Bobkov.
\newblock An isoperimetric inequality on the discrete cube, and an elementary
  proof of the isoperimetric inequality in gauss space.
\newblock \emph{The Annals of Probability}, 25\penalty0 (1):\penalty0 206--214,
  1997.

\bibitem[Cao et~al.(2019)Cao, Wei, Gaidon, Arechiga, and Ma]{cao2019learning}
Kaidi Cao, Colin Wei, Adrien Gaidon, Nikos Arechiga, and Tengyu Ma.
\newblock Learning imbalanced datasets with label-distribution-aware margin
  loss.
\newblock \emph{Advances in neural information processing systems}, 32, 2019.

\bibitem[Chapelle et~al.(2009)Chapelle, Scholkopf, and Zien]{chapelle2009semi}
Olivier Chapelle, Bernhard Scholkopf, and Alexander Zien.
\newblock Semi-supervised learning (chapelle, o. et al., eds.; 2006)[book
  reviews].
\newblock \emph{IEEE Transactions on Neural Networks}, 20\penalty0
  (3):\penalty0 542--542, 2009.

\bibitem[Chiani et~al.(2003)Chiani, Dardari, and Simon]{chiani2003new}
Marco Chiani, Davide Dardari, and Marvin~K Simon.
\newblock New exponential bounds and approximations for the computation of
  error probability in fading channels.
\newblock \emph{IEEE Transactions on Wireless Communications}, 2\penalty0
  (4):\penalty0 840--845, 2003.

\bibitem[Cotter et~al.(2019)Cotter, Jiang, Gupta, Wang, Narayan, You, and
  Sridharan]{cotter2019optimization}
Andrew Cotter, Heinrich Jiang, Maya~R Gupta, Serena Wang, Taman Narayan,
  Seungil You, and Karthik Sridharan.
\newblock Optimization with non-differentiable constraints with applications to
  fairness, recall, churn, and other goals.
\newblock \emph{J. Mach. Learn. Res.}, 20\penalty0 (172):\penalty0 1--59, 2019.

\bibitem[Cui et~al.(2019)Cui, Jia, Lin, Song, and Belongie]{cui2019class}
Yin Cui, Menglin Jia, Tsung-Yi Lin, Yang Song, and Serge Belongie.
\newblock Class-balanced loss based on effective number of samples.
\newblock In \emph{Proceedings of the IEEE/CVF conference on computer vision
  and pattern recognition}, pages 9268--9277, 2019.

\bibitem[Goh et~al.(2016)Goh, Cotter, Gupta, and
  Friedlander]{goh2016satisfying}
Gabriel Goh, Andrew Cotter, Maya Gupta, and Michael~P Friedlander.
\newblock Satisfying real-world goals with dataset constraints.
\newblock \emph{Advances in Neural Information Processing Systems}, 29, 2016.

\bibitem[He et~al.(2016)He, Zhang, Ren, and Sun]{he2016deep}
Kaiming He, Xiangyu Zhang, Shaoqing Ren, and Jian Sun.
\newblock Deep residual learning for image recognition.
\newblock In \emph{Proceedings of the IEEE conference on computer vision and
  pattern recognition}, pages 770--778, 2016.

\bibitem[Huang et~al.(2019)Huang, Zhai, Talbott, Martin, Sun, Guestrin, and
  Susskind]{huang2019addressing}
Chen Huang, Shuangfei Zhai, Walter Talbott, Miguel~Bautista Martin, Shih-Yu
  Sun, Carlos Guestrin, and Josh Susskind.
\newblock Addressing the loss-metric mismatch with adaptive loss alignment.
\newblock In \emph{International conference on machine learning}, pages
  2891--2900. PMLR, 2019.

\bibitem[Junczys-Dowmunt et~al.(2018)Junczys-Dowmunt, Grundkiewicz, Dwojak,
  Hoang, Heafield, Neckermann, Seide, Germann, Aji, Bogoychev, Martins, and
  Birch]{junczys-dowmunt-etal-2018-marian}
Marcin Junczys-Dowmunt, Roman Grundkiewicz, Tomasz Dwojak, Hieu Hoang, Kenneth
  Heafield, Tom Neckermann, Frank Seide, Ulrich Germann, Alham~Fikri Aji,
  Nikolay Bogoychev, Andr{\'e} F.~T. Martins, and Alexandra Birch.
\newblock {M}arian: Fast neural machine translation in {C}++.
\newblock In \emph{Proceedings of {ACL} 2018, System Demonstrations}, pages
  116--121, Melbourne, Australia, July 2018. Association for Computational
  Linguistics.
\newblock \doi{10.18653/v1/P18-4020}.

\bibitem[Kim et~al.(2020)Kim, Hur, Park, Yang, Hwang, and
  Shin]{10.5555/3495724.3496945}
Jaehyung Kim, Youngbum Hur, Sejun Park, Eunho Yang, Sung~Ju Hwang, and Jinwoo
  Shin.
\newblock Distribution aligning refinery of pseudo-label for imbalanced
  semi-supervised learning.
\newblock In \emph{Proceedings of the 34th International Conference on Neural
  Information Processing Systems}, NIPS'20, Red Hook, NY, USA, 2020. Curran
  Associates Inc.
\newblock ISBN 9781713829546.

\bibitem[Kingma et~al.(2014)Kingma, Mohamed, Jimenez~Rezende, and
  Welling]{kingma2014semi}
Durk~P Kingma, Shakir Mohamed, Danilo Jimenez~Rezende, and Max Welling.
\newblock Semi-supervised learning with deep generative models.
\newblock \emph{Advances in neural information processing systems}, 27, 2014.

\bibitem[Krishna et~al.(2017)Krishna, Zhu, Groth, Johnson, Hata, Kravitz, Chen,
  Kalantidis, Li, Shamma, et~al.]{krishna2017visual}
Ranjay Krishna, Yuke Zhu, Oliver Groth, Justin Johnson, Kenji Hata, Joshua
  Kravitz, Stephanie Chen, Yannis Kalantidis, Li-Jia Li, David~A Shamma, et~al.
\newblock Visual genome: Connecting language and vision using crowdsourced
  dense image annotations.
\newblock \emph{International journal of computer vision}, 123\penalty0
  (1):\penalty0 32--73, 2017.

\bibitem[Krizhevsky and Hinton(2009)]{krizhevsky2009learning}
Alex Krizhevsky and Geoffrey Hinton.
\newblock Learning multiple layers of features from tiny images.
\newblock 2009.
\newblock Technical report, University of Toronto.

\bibitem[Kumar et~al.(2021)Kumar, Narasimhan, and Cotter]{kumar2021implicit}
Abhishek Kumar, Harikrishna Narasimhan, and Andrew Cotter.
\newblock Implicit rate-constrained optimization of non-decomposable
  objectives.
\newblock In \emph{International Conference on Machine Learning}, pages
  5861--5871. PMLR, 2021.

\bibitem[Laine and Aila(2016)]{laine2016temporal}
Samuli Laine and Timo Aila.
\newblock Temporal ensembling for semi-supervised learning.
\newblock \emph{arXiv preprint arXiv:1610.02242}, 2016.

\bibitem[Lehmann et~al.(2015)Lehmann, Isele, Jakob, Jentzsch, Kontokostas,
  Mendes, Hellmann, Morsey, Van~Kleef, Auer, et~al.]{lehmann2015dbpedia}
Jens Lehmann, Robert Isele, Max Jakob, Anja Jentzsch, Dimitris Kontokostas,
  Pablo~N Mendes, Sebastian Hellmann, Mohamed Morsey, Patrick Van~Kleef,
  S{\"o}ren Auer, et~al.
\newblock Dbpedia--a large-scale, multilingual knowledge base extracted from
  wikipedia.
\newblock \emph{Semantic web}, 6\penalty0 (2):\penalty0 167--195, 2015.

\bibitem[Lin et~al.(2002)Lin, Lee, and Wahba]{lin2002support}
Yi~Lin, Yoonkyung Lee, and Grace Wahba.
\newblock Support vector machines for classification in nonstandard situations.
\newblock \emph{Machine learning}, 46\penalty0 (1):\penalty0 191--202, 2002.

\bibitem[Maas et~al.(2011)Maas, Daly, Pham, Huang, Ng, and
  Potts]{maas-EtAl:2011:ACL-HLT2011}
Andrew~L. Maas, Raymond~E. Daly, Peter~T. Pham, Dan Huang, Andrew~Y. Ng, and
  Christopher Potts.
\newblock Learning word vectors for sentiment analysis.
\newblock In \emph{Proceedings of the 49th Annual Meeting of the Association
  for Computational Linguistics: Human Language Technologies}, pages 142--150,
  Portland, Oregon, USA, June 2011. Association for Computational Linguistics.

\bibitem[Menon et~al.(2020)Menon, Jayasumana, Rawat, Jain, Veit, and
  Kumar]{menon2020long}
Aditya~Krishna Menon, Sadeep Jayasumana, Ankit~Singh Rawat, Himanshu Jain,
  Andreas Veit, and Sanjiv Kumar.
\newblock Long-tail learning via logit adjustment.
\newblock In \emph{International Conference on Learning Representations}, 2020.

\bibitem[Miyato et~al.(2018)Miyato, Maeda, Koyama, and
  Ishii]{miyato2018virtual}
Takeru Miyato, Shin-ichi Maeda, Masanori Koyama, and Shin Ishii.
\newblock Virtual adversarial training: a regularization method for supervised
  and semi-supervised learning.
\newblock \emph{IEEE transactions on pattern analysis and machine
  intelligence}, 41\penalty0 (8):\penalty0 1979--1993, 2018.

\bibitem[Mohri et~al.(2019)Mohri, Sivek, and Suresh]{mohri2019agnostic}
Mehryar Mohri, Gary Sivek, and Ananda~Theertha Suresh.
\newblock Agnostic federated learning.
\newblock In \emph{International Conference on Machine Learning}, pages
  4615--4625. PMLR, 2019.

\bibitem[Narasimhan and Menon(2021)]{narasimhan2021training}
Harikrishna Narasimhan and Aditya~K Menon.
\newblock Training over-parameterized models with non-decomposable objectives.
\newblock \emph{Advances in Neural Information Processing Systems}, 34, 2021.

\bibitem[Narasimhan et~al.(2014)Narasimhan, Vaish, and
  Agarwal]{narasimhan2014statistical}
Harikrishna Narasimhan, Rohit Vaish, and Shivani Agarwal.
\newblock On the statistical consistency of plug-in classifiers for
  non-decomposable performance measures.
\newblock \emph{Advances in neural information processing systems}, 27, 2014.

\bibitem[Narasimhan et~al.(2015)Narasimhan, Ramaswamy, Saha, and
  Agarwal]{narasimhan2015consistent}
Harikrishna Narasimhan, Harish Ramaswamy, Aadirupa Saha, and Shivani Agarwal.
\newblock Consistent multiclass algorithms for complex performance measures.
\newblock In \emph{International Conference on Machine Learning}, pages
  2398--2407. PMLR, 2015.

\bibitem[Natarajan et~al.(2016)Natarajan, Koyejo, Ravikumar, and
  Dhillon]{natarajan2016optimal}
Nagarajan Natarajan, Oluwasanmi Koyejo, Pradeep Ravikumar, and Inderjit
  Dhillon.
\newblock Optimal classification with multivariate losses.
\newblock In \emph{International Conference on Machine Learning}, pages
  1530--1538. PMLR, 2016.

\bibitem[Patrini et~al.(2017)Patrini, Rozza, Krishna~Menon, Nock, and
  Qu]{patrini2017making}
Giorgio Patrini, Alessandro Rozza, Aditya Krishna~Menon, Richard Nock, and
  Lizhen Qu.
\newblock Making deep neural networks robust to label noise: A loss correction
  approach.
\newblock In \emph{Proceedings of the IEEE conference on computer vision and
  pattern recognition}, pages 1944--1952, 2017.

\bibitem[Pham et~al.(2021)Pham, Dai, Xie, and Le]{pham2021meta}
Hieu Pham, Zihang Dai, Qizhe Xie, and Quoc~V Le.
\newblock Meta pseudo labels.
\newblock In \emph{Proceedings of the IEEE/CVF Conference on Computer Vision
  and Pattern Recognition}, pages 11557--11568, 2021.

\bibitem[Puthiya~Parambath et~al.(2014)Puthiya~Parambath, Usunier, and
  Grandvalet]{puthiya2014optimizing}
Shameem Puthiya~Parambath, Nicolas Usunier, and Yves Grandvalet.
\newblock Optimizing f-measures by cost-sensitive classification.
\newblock \emph{Advances in neural information processing systems}, 27, 2014.

\bibitem[Russakovsky et~al.(2015)Russakovsky, Deng, Su, Krause, Satheesh, Ma,
  Huang, Karpathy, Khosla, Bernstein, et~al.]{russakovsky2015imagenet}
Olga Russakovsky, Jia Deng, Hao Su, Jonathan Krause, Sanjeev Satheesh, Sean Ma,
  Zhiheng Huang, Andrej Karpathy, Aditya Khosla, Michael Bernstein, et~al.
\newblock Imagenet large scale visual recognition challenge.
\newblock \emph{International journal of computer vision}, 115\penalty0
  (3):\penalty0 211--252, 2015.

\bibitem[Saito et~al.(2017)Saito, Ushiku, and Harada]{saito2017asymmetric}
Kuniaki Saito, Yoshitaka Ushiku, and Tatsuya Harada.
\newblock Asymmetric tri-training for unsupervised domain adaptation.
\newblock In \emph{International Conference on Machine Learning}, pages
  2988--2997. PMLR, 2017.

\bibitem[Sanh et~al.(2019)Sanh, Debut, Chaumond, and Wolf]{sanh2019distilbert}
Victor Sanh, Lysandre Debut, Julien Chaumond, and Thomas Wolf.
\newblock Distilbert, a distilled version of bert: smaller, faster, cheaper and
  lighter.
\newblock \emph{arXiv preprint arXiv:1910.01108}, 2019.

\bibitem[Sanyal et~al.(2018)Sanyal, Kumar, Kar, Chawla, and
  Sebastiani]{sanyal2018optimizing}
Amartya Sanyal, Pawan Kumar, Purushottam Kar, Sanjay Chawla, and Fabrizio
  Sebastiani.
\newblock Optimizing non-decomposable measures with deep networks.
\newblock \emph{Machine Learning}, 107\penalty0 (8):\penalty0 1597--1620, 2018.

\bibitem[Sohn et~al.(2020)Sohn, Berthelot, Carlini, Zhang, Zhang, Raffel,
  Cubuk, Kurakin, and Li]{sohn2020fixmatch}
Kihyuk Sohn, David Berthelot, Nicholas Carlini, Zizhao Zhang, Han Zhang,
  Colin~A Raffel, Ekin~Dogus Cubuk, Alexey Kurakin, and Chun-Liang Li.
\newblock Fixmatch: Simplifying semi-supervised learning with consistency and
  confidence.
\newblock \emph{Advances in Neural Information Processing Systems},
  33:\penalty0 596--608, 2020.

\bibitem[Tavker et~al.(2020)Tavker, Ramaswamy, and
  Narasimhan]{tavker2020consistent}
Shiv~Kumar Tavker, Harish~Guruprasad Ramaswamy, and Harikrishna Narasimhan.
\newblock Consistent plug-in classifiers for complex objectives and
  constraints.
\newblock \emph{Advances in Neural Information Processing Systems},
  33:\penalty0 20366--20377, 2020.

\bibitem[Thomee et~al.(2016)Thomee, Shamma, Friedland, Elizalde, Ni, Poland,
  Borth, and Li]{thomee2016yfcc100m}
Bart Thomee, David~A Shamma, Gerald Friedland, Benjamin Elizalde, Karl Ni,
  Douglas Poland, Damian Borth, and Li-Jia Li.
\newblock Yfcc100m: The new data in multimedia research.
\newblock \emph{Communications of the ACM}, 59\penalty0 (2):\penalty0 64--73,
  2016.

\bibitem[Wei et~al.(2021)Wei, Sohn, Mellina, Yuille, and Yang]{Wei_2021_CVPR}
Chen Wei, Kihyuk Sohn, Clayton Mellina, Alan Yuille, and Fan Yang.
\newblock Crest: A class-rebalancing self-training framework for imbalanced
  semi-supervised learning.
\newblock In \emph{Proceedings of the IEEE/CVF Conference on Computer Vision
  and Pattern Recognition (CVPR)}, pages 10857--10866, June 2021.

\bibitem[Wei and Ma(2019)]{wei2019improved}
Colin Wei and Tengyu Ma.
\newblock Improved sample complexities for deep neural networks and robust
  classification via an all-layer margin.
\newblock In \emph{International Conference on Learning Representations}, 2019.

\bibitem[Wei et~al.(2020)Wei, Shen, Chen, and Ma]{wei2020theoretical}
Colin Wei, Kendrick Shen, Yining Chen, and Tengyu Ma.
\newblock Theoretical analysis of self-training with deep networks on unlabeled
  data.
\newblock In \emph{International Conference on Learning Representations}, 2020.

\bibitem[Wolf et~al.(2020)Wolf, Debut, Sanh, Chaumond, Delangue, Moi, Cistac,
  Rault, Louf, Funtowicz, et~al.]{wolf2020transformers}
Thomas Wolf, Lysandre Debut, Victor Sanh, Julien Chaumond, Clement Delangue,
  Anthony Moi, Pierric Cistac, Tim Rault, R{\'e}mi Louf, Morgan Funtowicz,
  et~al.
\newblock Transformers: State-of-the-art natural language processing.
\newblock In \emph{Proceedings of the 2020 conference on empirical methods in
  natural language processing: system demonstrations}, pages 38--45, 2020.

\bibitem[Xie et~al.(2020{\natexlab{a}})Xie, Dai, Hovy, Luong, and
  Le]{xie2020unsupervised}
Qizhe Xie, Zihang Dai, Eduard Hovy, Thang Luong, and Quoc Le.
\newblock Unsupervised data augmentation for consistency training.
\newblock \emph{Advances in Neural Information Processing Systems},
  33:\penalty0 6256--6268, 2020{\natexlab{a}}.

\bibitem[Xie et~al.(2020{\natexlab{b}})Xie, Luong, Hovy, and Le]{xie2020self}
Qizhe Xie, Minh-Thang Luong, Eduard Hovy, and Quoc~V Le.
\newblock Self-training with noisy student improves imagenet classification.
\newblock In \emph{Proceedings of the IEEE/CVF conference on computer vision
  and pattern recognition}, pages 10687--10698, 2020{\natexlab{b}}.

\bibitem[Zadrozny et~al.(2003)Zadrozny, Langford, and Abe]{zadrozny2003cost}
Bianca Zadrozny, John Langford, and Naoki Abe.
\newblock Cost-sensitive learning by cost-proportionate example weighting.
\newblock In \emph{Third IEEE international conference on data mining}, pages
  435--442. IEEE, 2003.

\bibitem[Zagoruyko and Komodakis(2016)]{BMVC2016_87}
Sergey Zagoruyko and Nikos Komodakis.
\newblock Wide residual networks.
\newblock In Edwin R.~Hancock Richard C.~Wilson and William A.~P. Smith,
  editors, \emph{Proceedings of the British Machine Vision Conference (BMVC)},
  pages 87.1--87.12. BMVA Press, September 2016.
\newblock ISBN 1-901725-59-6.
\newblock \doi{10.5244/C.30.87}.

\bibitem[Zhang et~al.(2021{\natexlab{a}})Zhang, Wang, Hou, WU, Wang, Okumura,
  and Shinozaki]{NEURIPS2021_995693c1}
Bowen Zhang, Yidong Wang, Wenxin Hou, HAO WU, Jindong Wang, Manabu Okumura, and
  Takahiro Shinozaki.
\newblock Flexmatch: Boosting semi-supervised learning with curriculum pseudo
  labeling.
\newblock In M.~Ranzato, A.~Beygelzimer, Y.~Dauphin, P.S. Liang, and J.~Wortman
  Vaughan, editors, \emph{Advances in Neural Information Processing Systems},
  volume~34, pages 18408--18419. Curran Associates, Inc., 2021{\natexlab{a}}.

\bibitem[Zhang et~al.(2021{\natexlab{b}})Zhang, Wang, Hou, Wu, Wang, Okumura,
  and Shinozaki]{zhang2021flexmatch}
Bowen Zhang, Yidong Wang, Wenxin Hou, Hao Wu, Jindong Wang, Manabu Okumura, and
  Takahiro Shinozaki.
\newblock Flexmatch: Boosting semi-supervised learning with curriculum pseudo
  labeling.
\newblock \emph{Advances in Neural Information Processing Systems}, 34,
  2021{\natexlab{b}}.

\bibitem[Zou et~al.(2019)Zou, Yu, Liu, Kumar, and Wang]{zou2019confidence}
Yang Zou, Zhiding Yu, Xiaofeng Liu, BVK Kumar, and Jinsong Wang.
\newblock Confidence regularized self-training.
\newblock In \emph{Proceedings of the IEEE/CVF International Conference on
  Computer Vision}, pages 5982--5991, 2019.

\end{thebibliography}

\end{document}